\def\eqref#1{equation~\ref{#1}}
\def\1{\bm{1}}
\DeclareMathAlphabet{\mathsfit}{\encodingdefault}{\sfdefault}{m}{sl}
\SetMathAlphabet{\mathsfit}{bold}{\encodingdefault}{\sfdefault}{bx}{n}
\definecolor{grpo}{HTML}{FAD4D4}   % very light red/pink tone
\definecolor{gfpo}{HTML}{F7E7CE}   % soft warm beige / light peach
\definecolor{gepo}{HTML}{BEE4D0}   % cool green
\definecolor{comparison}{HTML}{B3D9FF} % light blue
\definecolor{grayrowcolor}{RGB}{220,220,220} % Light gray color
\definecolor{commentcolour}{rgb}{0.3,0.7,0.2}
\definecolor{lightblue}{RGB}{245, 250, 250} % lighter & softer
\definecolor{Blue}{RGB}{77, 174, 172} % softer teal
\definecolor{darkpink}{RGB}{255, 105, 180}
\newtcolorbox{findings}{
    enhanced,
    breakable,
    colback=lightblue,         %
    colframe=blue,        %
    boxrule=1.5pt,
    arc=0.25em,
    left=1em,
    right=1em,
    top=1em,
    bottom=0.75em,
    before=\vspace{1em},
    overlay unbroken and first={
        \node[
            fill=blue,
            text=white,
            font=\bfseries,
            anchor=west,
            inner xsep=0.75em,
            inner ysep=0.5em,
            rounded corners=0.25em
        ] 
        at ([xshift=0.75em]frame.north west) {Finding};
    }
}
\newtcolorbox{AIbox}[2][]{aibox,title=#2,#1}
\definecolor{user}{HTML}{F2EFE7}  % for the user box
\definecolor{model}{HTML}{81E7AF} %81E7AF} % for the model box
\newcounter{mySubSec}[section] % resets each section
\newtheorem{theorem}{Theorem}
\newtheorem{lemma}{Lemma}
\newtheorem{corollary}{Corollary}
\tiny\color{gray},
\title{G$\mathbb{E}$PO: Group Expectation Policy Optimization for Stable Heterogeneous Reinforcement Learning}
\author{
  Han Zhang\thanks{\quad Equal contribution. Email: \url{zhangh04@pcl.ac.cn} and \url{rbzheng@e.gzhu.edu.cn}}, 
  Ruibin Zheng$^*$, 
  Zexuan Yi, 
  Zhuo Zhang, 
  Hanyang Peng, 
  Hui Wang, 
  Zike Yuan, 
  Cai Ke, 
  Shiwei Chen, 
  Jiacheng Yang, 
  Yangning Li, 
  Xiang Li, 
  Jiangyue Yan, 
  Yaoqi Liu, 
  Liwen Jing, 
  Jiayin Qi, 
  Binxing Fang,  
  Ruifeng Xu$^{\text{\textdagger}}$,
  Yue Yu\thanks{\quad Ruifeng Xu and Yue Yu are Corresponding authors.}
}
\newif\ifshowlogo
\def\@maketitle{\vbox{\hsize\textwidth
{\LARGE\sc \@title\par}
\ificlrfinal
    \ifshowlogo
        \lhead{%
         \includegraphics[height=0.3cm]{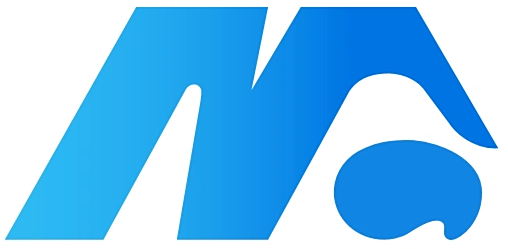}%
          \hspace{1em}%
          Pengcheng Laboratory / Heterogeneous Large Model Research Team%
        }
    \else
        \lhead{Published as a conference paper at ICLR 2026}
    \fi
    % 显示作者
    % \vskip 0.2in
    {%\large
     \lineskip 0.5em
     \begin{center}
     \@author
     \end{center}
     \par}
\fi
% \vskip 0.3in minus 0.1in
}}
\begin{document}
\maketitle

\begin{center}
\maketitle
\end{center}
\vspace{-20pt}
\def\github{\raisebox{-1.5pt}{\includegraphics[height=1.05em]{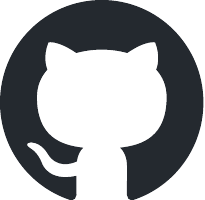}}}
\newcommand{\ghlink}{https://github.com/HanlardResearch/Hetero-RL.git}
\begin{center}
  \small
  \begin{tabular}{rl}
      \github & \href{\ghlink}{\textcolor{blue}{\ghlink}}\\
  \end{tabular}
  \normalsize
\end{center}

\begin{abstract}
% As single-center computing approaches power constraints, decentralized training becomes essential. However, traditional Reinforcement Learning (RL) methods, crucial for enhancing large model post-training, cannot adapt to decentralized distributed training due to the tight coupling between parameter learning and rollout sampling. For this, we propose \textbf{HeteroRL}, a heterogeneous RL architecture that decouples these processes, enabling stable training across geographically distributed nodes connected via the Internet. The core component is Group Expectation Policy Optimization (GEPO), an asynchronous RL algorithm robust to latency caused by network delays or heterogeneity in computational resources. Our study reveals that high latency significantly increases KL divergence, leading to higher variance of importance weights and training instability. GEPO mitigates this issue by using group expectation weighting to exponentially reduce the variance of importance weights, with theoretical guarantees. Experiments show that GEPO achieves superior stability, with only a 3\% performance drop from online to 1800s latency, demonstrating strong potential for decentralized RL in geographically distributed, resource-heterogeneous computing environments. 
As single-center computing approaches power constraints, decentralized training becomes essential. However, traditional Reinforcement Learning (RL) methods, crucial for enhancing large model post-training, cannot adapt to decentralized distributed training due to the tight coupling between parameter learning and rollout sampling. For this, we propose \textbf{HeteroRL}, a heterogeneous RL architecture that decouples these processes, enabling stable training across geographically distributed nodes connected via the Internet. The core component is Group Expectation Policy Optimization (\textbf{GEPO}), an asynchronous RL algorithm robust to latency caused by network delays or heterogeneity in computational resources. Our study reveals that high latency significantly increases KL divergence, leading to higher variance of importance weights and training instability. GEPO mitigates this issue by using group expectation weighting to exponentially reduce the variance of importance weights, with theoretical guarantees. 
% Experiments show that GEPO achieves superior stability, with only a 3\% performance drop from online to 1800s latency, and reduces the best-to-last performance gap by 85\% compared to GSPO ($\Delta$ = 1.8 vs. 12.0) while achieving the highest scores—highlighting its exceptional stability and effectiveness. These results demonstrate strong potential for decentralized RL in geographically distributed, resource-heterogeneous computing environments.
Experiments show GEPO achieves superior stability—only a 3\% performance drop from online to 1800s latency—and reduces the best-to-last gap by 85\% versus GSPO ($\Delta$=1.8 vs. 12.0) while attaining the highest scores, highlighting its effectiveness in decentralized, resource-heterogeneous environments.
% \ificlrfinal
% The code is released on \url{https://github.com/HanlardResearch/Hetero-RL.git}.
% \fi
\end{abstract}
% \textbf{Keywords:} Decentralized Training, Reinforcement Learning, Heterogeneous Computing, Large Language Models
% \textbf{Keywords:} Decentralized Training, Reinforcement Learning, Heterogeneous Computing, Large Language Models, Mathematical Reasoning, Importance Sampling

\begin{figure}[h]
\vspace{-0.3cm}
    \centering
    \includegraphics[width=0.98\linewidth]{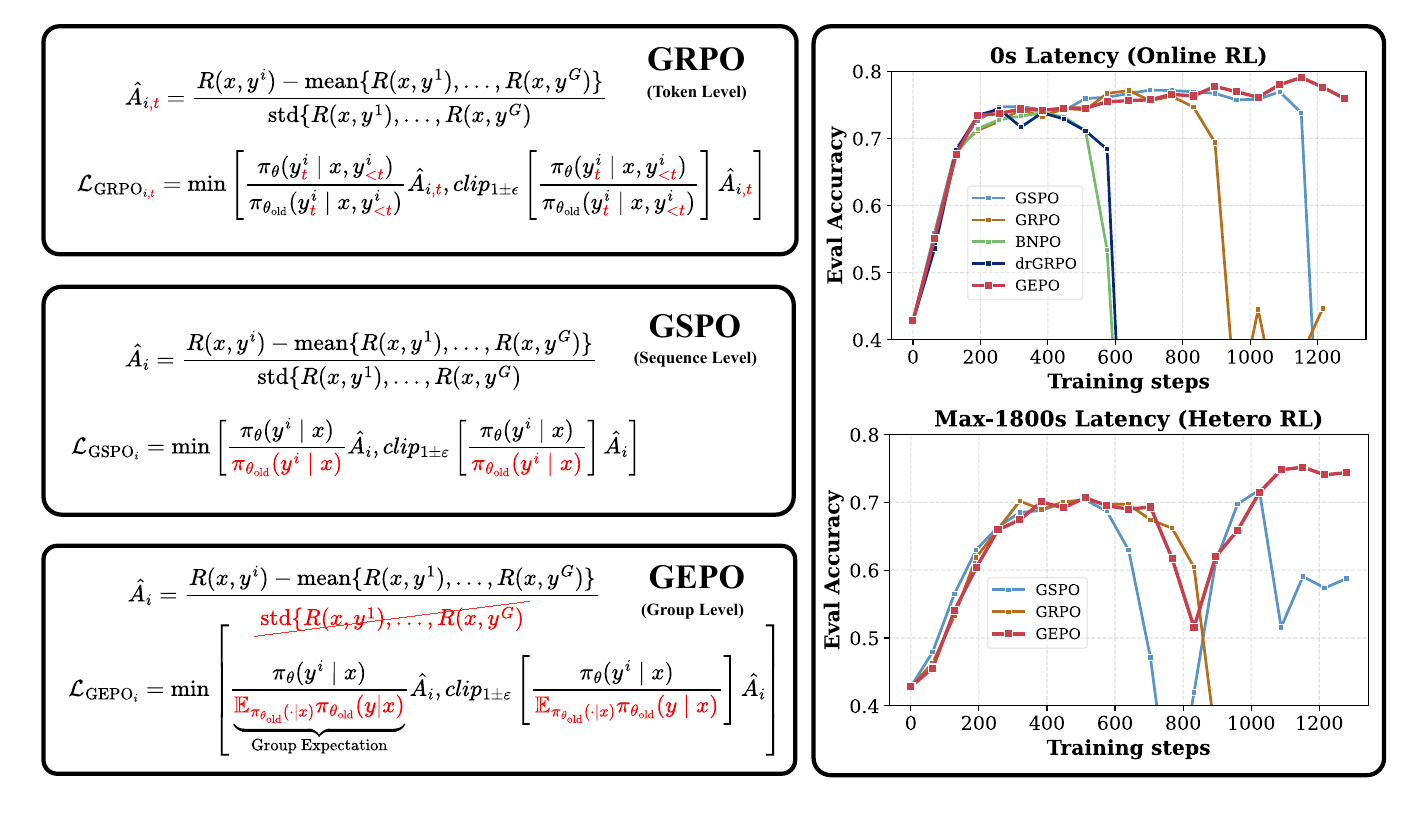}
    \vspace{-0.2cm}
    \caption{Left: GEPO improves upon GRPO and GSPO by employing group-level importance weights to enhance training stability. Right: In both zero-delay (online) and high-delay (up to 1800 seconds) heterogeneous reinforcement learning scenarios, GEPO demonstrates superior stability and better evaluation performance.}
    \label{fig:MainFig}

\end{figure}

\ificlrfinal
\newpage
\tableofcontents
\newpage
\fi

\section{Introduction}

Training ever-larger AI models~\citep{achiam2023gpt,llama3,qwen3} is pushing the limits of single datacenters, making decentralized training across geographically distributed, heterogeneous nodes connected via the Internet an increasingly necessary pursuit~\citep{intellect2,AsyRLHF}. Reinforcement Learning (RL), crucial for post-training LLMs on complex tasks like mathematical reasoning~\citep{GRPO}, faces a fundamental systemic challenge in this emerging paradigm: traditional RL frameworks \citep{DeepSeek, summary_rlhf,RLAIF,Llamarl,safeRLHF,AReaL} are architecturally incompatible with decentralized environments. Their tight coupling between rollout sampling and parameter learning demands strict synchronization — a requirement that becomes untenable under the high network latency and computational heterogeneity inherent in real-world distributed settings.

This architectural incompatibility manifests in two critical bottlenecks. First, synchronous frameworks force computational resources (e.g., GPUs) to idle while waiting for the slowest processes—such as generating long reasoning chains—severely constraining efficiency \citep{AReaL}. Second, and more fundamentally, the inevitable network latency inherent in decentralized, Internet-connected environments creates a temporal gap (policy staleness) between the sampler (generating data) and the learner (updating parameters). Most existing RL algorithms, designed for homogeneous, low-latency clusters, are ill-equipped to handle this staleness. As our analysis reveals, high latency significantly inflates KL divergence, causing the variance of importance weights to explode—ultimately leading to training instability or reward collapse~\citep{rewardcollapse}. This renders conventional RL methods impractical for real-world, geographically distributed training scenarios.

To tackle these systemic bottlenecks, we introduce \textbf{HeteroRL} (Heterogeneous Reinforcement Learning), a novel RL framework explicitly architected for asynchronous, geographically distributed, and resource-heterogeneous environments. HeteroRL is designed to enable efficient and stable training of large language models for complex tasks such as mathematical reasoning, even under high network latency.
At its core, HeteroRL decouples the two computationally intensive phases of the RL pipeline — {rollout sampling} and {parameter learning} — and deploys them on physically or logically independent nodes with potentially heterogeneous hardware (e.g., mixing NVIDIA and Ascend chips). The {sampler} nodes continuously generate reasoning trajectories without interruption, while the {learner} node asynchronously consumes this data to update model parameters. Critically, neither component waits for the other: communication occurs infrequently and tolerates high latency, with model checkpoints and rollout batches exchanged over the Internet.

To address the instability arising from KL divergence under high-latency conditions, we introduce \textbf{Group Expectation Policy Optimization (GEPO)}, a novel policy gradient algorithm that stabilizes asynchronous RL under high latency by replacing fragile token/sample-level importance weights with robust group-level importance weights — allowing samplers and learners to operate independently, communicating infrequently and tolerating arbitrary delays.
This shift fundamentally improves the {quality of gradient estimation} — transforming a high-variance, unstable estimator into a low-variance, robust one, especially under large policy divergence. As we prove in Theorem~\ref{theory:1}, GEPO {exponentially reduces the variance of importance weights} under high KL divergence — precisely the regime where traditional methods like GRPO and GSPO collapse. Crucially, GEPO is not an ad hoc fix — it is a principled algorithmic response to the root cause of instability: variance explosion under policy divergence.

In summary, our key contributions are as follows:

\textbf{Framework:} We propose {HeteroRL}, an asynchronous reinforcement learning framework designed for heterogeneous compute networks, enabling decentralized training of large language models (LLMs) on mathematical reasoning tasks.

\textbf{Insight:} We identify a strong correlation between latency and the KL divergence between the rollout sampler and the learner. High latency induces high KL divergence, leading to training instability and reward collapse.

\textbf{Algorithm:} We introduce {Group Expectation Policy Optimization} (GEPO), which improves upon the importance sampling mechanism in GRPO~\citep{GRPO}. We theoretically show that GEPO exponentially reduces the variance of importance weights, and empirically demonstrate its superior stability and efficiency — not only under high-latency conditions, but also in the ideal zero-latency setting.

This work provides both algorithmic and system-level advancements for scalable LLM RL-training and establishes a practical foundation for large-scale distributed AI training in future heterogeneous compute network environments.

\section{Background}
\label{sec:background}
\subsection{Problem Definition and Notation} 
Consider a standard policy gradient framework. Let $\pi_\theta$ denote the language model policy (i.e., the Actor) parameterized by $\theta$, $x$ be an input prompt of a Dataset $\mathcal{D}$ (e.g., math problems), and $y$ be the output sequence generated by the model (e.g., a chain-of-thought solution).
% The environment provides a scalar reward $r(y)$ based on the outcome of $y$ (e.g., whether the final answer is correct). 
We define the following core notation:
\begin{itemize}
    \item $\pi_{\theta_k}$ (short for $q$): the policy used by the \emph{sampler} at time step $k$ to generate rollout trajectories.
    \item $\pi_{\theta_{k+\tau}}$ (short for $p$): the latest policy at the \emph{learner} at time step $k+\tau$, used for gradient updates.
    \item $\tau(\geq0)$: \emph{policy staleness}, representing the discrepancy in policy versions between the sampler and the learner, caused by \texttt{network delays} and \texttt{computational asynchrony}. 
    % In our setting, network transmission delays is a random variable rather than a fixed constant.
    \item $y$: a trajectory sampled from the stale policy $\pi_{\theta_k}$. $y^i_t$ denotes the $t$-th token of the $i$-th response in a group.
    \item $r(x,y)$: the reward for response $y$ given input $x$.
    \item $A(x,y)$: the advantage for response $y$ to input $x$, typically defined as $A(x,y) = r(x,y) - b(x)$, where $b(x)$ is a baseline reward computed for input $x$. In this paper, we use the within-group average reward \citep{GRPO} as the baseline $ b(x) = \frac{1}{G}\sum_{i=1}^G r(x,y^i)$ \textcolor{black}{where $G$ is the group size.}
\end{itemize}

The goal of HeteroRL is to optimize the policy $\pi_\theta$ to maximize the expected cumulative reward. To reduce gradient variance, an advantage function is used, leading to the objective:
\begin{equation}
\mathcal{L}(\theta) = \mathbb{E}_{x\sim\mathcal{D}, y \sim \pi_{\theta_k}(\cdot|x)} \left[ \frac{\pi_{\theta_{k+\textcolor{red}{\boxed{\tau}}}}(y|x)}{\pi_{\theta_{k}}(y|x)} \cdot A(x,y)\right],
\end{equation}
where $\tau$ is a random variable. For online RL, $\tau \equiv 0$.

% \subsection{Overview of HeteroRL}

% The overview of HeteroRL is illustrated in Figure~\ref{fig:framework}. By decoupling sampling and training, HeteroRL enables decentralized distributed RL training of LLMs across five compute nodes: one parameter update node (learner) and four data generation nodes (sampler), forming a star-shaped network topology. Network delays between the sampler and learner nodes are explicitly modeled and can be simulated using stochastic distributions such as the log-normal or Weibull distribution.

\section{GEPO: Group Expectation Policy Optimization}
\label{sec:method}

Our method builds upon the group-based policy optimization paradigm of GRPO and introduces the {group expectation importance sampling} mechanism. We emphasize a paradigm shift from \textit{token-level} to \textit{group-level} importance weighting, which significantly reduces the variance of importance weights and alleviates gradient instability during training. 
% For notational simplicity, in this section we use $ p $ and $ q $ to denote $ \pi_{\theta_{k+\tau}} $ and $ \pi_{\theta_k} $, respectively.

\subsection{Group Expectation Importance Weighting}
\label{subsec:group_expectation}

To enhance the stability of importance weights, we propose the \texttt{Group Expectation Importance Weight (GEIW)}, which replaces the individual proposal probability $q(y|x)$ in the standard importance weight $\frac{p(y|x)}{q(y|x)}$ with its group-wise expected value under the current prompt $x$, denoted as $\widehat{\mathbb{E}}_q[q(y|x)]$. Inspired by GRPO, for each input $x$, we generate a group of $G$ responses $\{y^1, \dots, y^G\} \sim q(\cdot|x)$ to form a sampling group. 
Since $G$ is typically much smaller than the full policy space and top-$P$/top-$K$ sampling leads to $\sum_{i=1}^G q(y^i|x) \gg 1$, the vector $(q(y^1|x), \dots, q(y^G|x))$ does not constitute a valid probability distribution. Simply using the arithmetic mean $\frac{1}{G}\sum_{i=1}^G q(y^i|x)$ would introduce bias due to ignoring the relative sampling probabilities.
To obtain a more accurate estimate, we employ a weighted expectation:
\begin{equation}
\widehat{\mathbb{E}}_q[q(y|x)] \approx \sum_{i=1}^G  \widehat{q(y^i|x)} \cdot  q(y^i|x)  =  \frac{\sum_{i=1}^G q(y^i|x)^2}{\sum_{i=1}^G q(y^i|x)},
\end{equation}
where $\widehat{q(y^i|x)} = \frac{ q(y^i|x)}{\sum_{i=1}^G q(y^i|x)} $ is the within-group normalized probability, serving as an empirical estimate of the sampling likelihood of each $y_i$. We define the GEIW importance weight as:
\begin{equation}
w_{\text{GEIW}}(y|x) = \frac{p(y|x)}{\widehat{\mathbb{E}}_q[q(y|x)]}.
\end{equation}
The key advantages of this mechanism are as follows:

\noindent \textbf{Numerically stable and gradient-effective}: The denominator is decoupled from any single $q(y|x)$, avoiding extreme weight values when individual proposal probabilities approach zero. Although \texttt{clip}($1\pm\epsilon$) can also improve numerical stability, the gradients of the clipped tensors will be set to zero, effectively skipping this data point (ineffective gradient).

\noindent \textbf{Biased yet low-variance}: By leveraging within-group statistical information, GEIW provides a more reliable scale estimate. Even under large divergence between $p$ and $q$, $\widehat{\mathbb{E}}_q[q(y|x)]$ remains well-conditioned, effectively preventing gradient explosion. Although this estimator introduces a small bias ($w_{\text{GEIW}}$ is a biased estimator), both theoretical analysis (see Theorem~\ref{theory:1}) and empirical results demonstrate that it significantly reduces variance under high KL divergence, yielding more stable gradient directions and improved training convergence.

\begin{figure}[H]
    \centering
    \begin{subfigure}[b]{1\linewidth}
        \includegraphics[width=\linewidth]{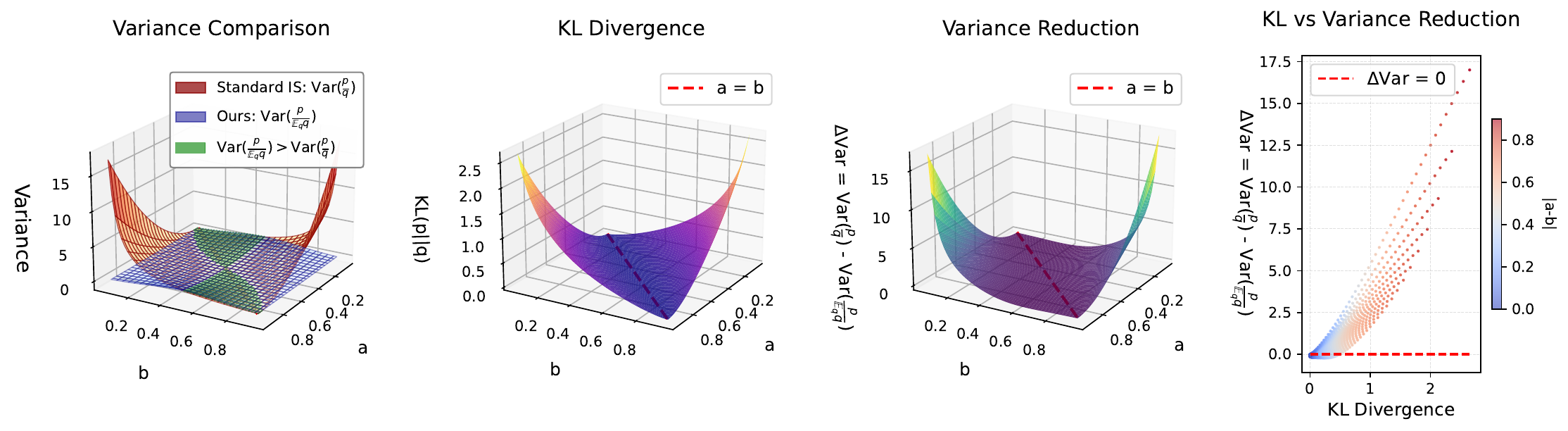}
        \caption{Variance comparison of $\frac{p}{q}$ and $\frac{p}{\mathbb{E}_q[q]}$ under Bernoulli distributions, where $p\sim  \text{Bernoulli}(a)$ and $q\sim  \text{Bernoulli}(b)$.}
        \label{fig:B_Eqq}
    \end{subfigure}
    \begin{subfigure}[b]{1\linewidth}
        \includegraphics[width=\linewidth]{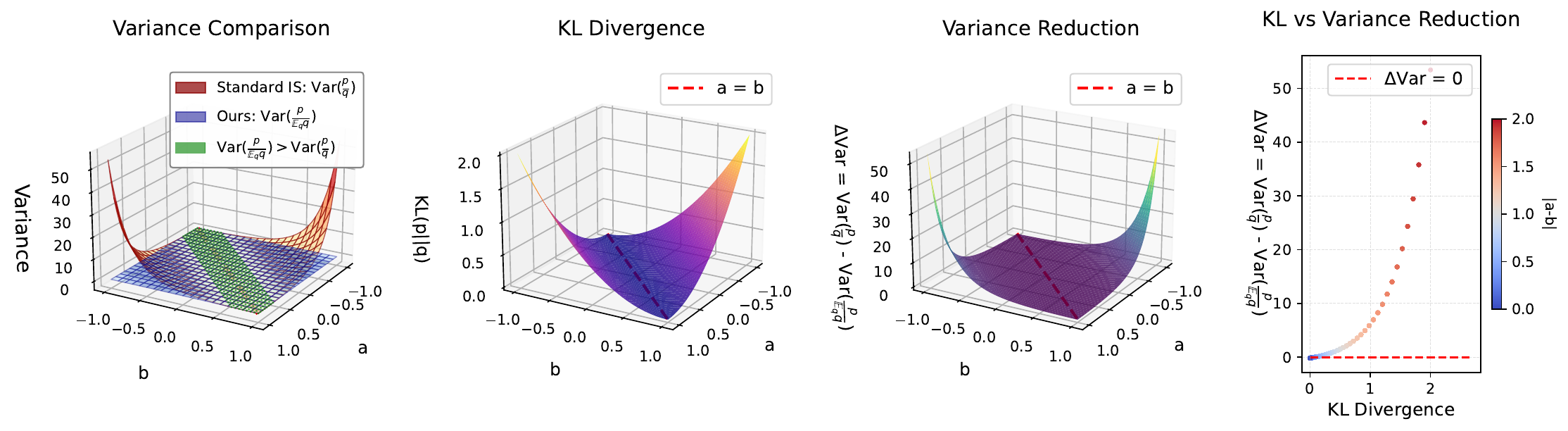}
        \caption{Variance comparison of $\frac{p}{q}$ and $\frac{p}{\mathbb{E}_q[q]}$ under Gaussian distributions, where $p\sim  \mathcal{N}(a,1)$ and $q\sim  \mathcal{N}(b,1)$.}
        \label{fig:N_Eqq}
    \end{subfigure}
        %     \includegraphics[width=\linewidth]{figs/KL_VR_Eqq_under_normal.pdf}
        % \caption{Variance comparison of $\frac{p}{q}$ and $\frac{p}{\mathbb{E}_q[q]}$ under Gaussian distributions, where $p\sim  \mathcal{N}(a,1)$ and $q\sim  \mathcal{N}(b,1)$.In high-KL regions, $\mathrm{Var}[ \frac{p(y|x)}{\widehat{\mathbb{E}}_q[q(y|x)}] \ll \mathrm{Var}[ \frac{p(y|x)}{q(y|x)}]$.}
    \caption{In high-KL regions, $\mathrm{Var}[ \frac{p(y|x)}{\widehat{\mathbb{E}}_q[q(y|x)}] \ll \mathrm{Var}[ \frac{p(y|x)}{q(y|x)}]$.}
    \label{fig:EqEvsPQ}
\end{figure}

\begin{theorem}\label{theory:1}
Let $p, q$ be discrete probability distributions. Then there exists a constant $C$ such that:
\begin{equation}
\mathrm{Var}\left[\frac{p(y|x)}{q(y|x)}\right] - \mathrm{Var}\left[ \frac{p(y|x)}{\widehat{\mathbb{E}}_q[q(y|x)]}\right] \geq  \textcolor{red}{\boxed{
    \color{black} \exp{\left( D_{\mathrm{KL}}(p \| q) \right)}
  }}  - C.
\end{equation}
In particular, when $D_{\mathrm{KL}}(p \| q) > \log C$, it holds that $\mathrm{Var}\left[\frac{p(y|x)}{q(y|x)}\right] > \mathrm{Var}\left[ \frac{p(y|x)}{\widehat{\mathbb{E}}_q[q(y|x)]}\right]$.
\end{theorem}

Theorem~\ref{theory:1} shows that GEPO can \textbf{exponentially reduce the variance of importance weights}, making it particularly well-suited for heterogeneous RL training under high KL divergence. The full mathematical proof is provided in Appendix A. 
% However, the inequality \eqref{eq:ieqVar} does not always hold. 
As shown in Figure~\ref{fig:EqEvsPQ}, we visualize the relationship between KL divergence and importance weight variance when both $p$ and $q$ are Bernoulli or Gaussian distributions with varying parameters. The results indicate that in the high-KL regime, the group expectation approach significantly reduces the variance of importance weights, which benefits training stability under high network latency. Nevertheless, there exist regimes—such as the green regions in the plots—where our method incurs a slight increase in variance.

\noindent{The difference across all GRPO-like algorithms lies in the computation of the importance weights}, as detailed in Listing~\ref{lst:coef_calc}:

\begin{lstlisting}[caption={Coefficient computation for different policy optimization methods},label=lst:coef_calc]
if self.loss_type in ["grpo","dr_grpo","bnpo"]: # Token level
    coef_1 = learner_token_p / sampler_token_p 
elif self.loss_type == "gspo":  # Sequence level
    coef_1 = learner_seq_p / sampler_seq_p
elif self.loss_type == "gepo": # Group level
    hat_q = sampler_seq_p.detach() / (sampler_seq_p.sum().detach())
    coef_1 = learner_seq_p / (hat_q * sampler_seq_p).sum() 
\end{lstlisting}

\subsection{Gradient Comparison Across Tokens }
\paragraph{What does the GEPO update do?} For a mechanistic understanding of GEPO, it is useful to analyze the gradient of the loss function $\mathcal{L}_{GEPO}$. The equivalent gradient of each token in a group with respect to the parameters $\boldsymbol{\theta}$ of GRPO, GSPO and GEPO can be written as:
\begin{small}
    \begin{equation}\label{eq:tokengradient}
        \frac{\partial \mathcal{L}(\boldsymbol{\theta})}{\partial \boldsymbol{\theta}} = \mathbf{A} \odot \underbrace{\begin{bmatrix}
        \dfrac{p'_{1,1}(\boldsymbol{\theta})}{q_{1,1}} & \cdots & \dfrac{p'_{1,T}(\boldsymbol{\theta})}{q_{1,T}} \\
        \vdots & \ddots & \vdots \\
        \dfrac{p'_{G,1}(\boldsymbol{\theta})}{q_{G,1}} & \cdots & \dfrac{p'_{G,T}(\boldsymbol{\theta})}{q_{G,T}}
        \end{bmatrix}}_{\texttt{G\textcolor{red}{\textbf{R}}PO}} \texttt{or}
        \underbrace{\begin{bmatrix}
        \dfrac{p'_{1,1}(\boldsymbol{\theta})}{q_{1}} & \cdots & \dfrac{p'_{1,T}(\boldsymbol{\theta})}{q_{1}} \\
        \vdots & \ddots & \vdots \\
        \dfrac{p'_{G,1}(\boldsymbol{\theta})}{q_{G}} & \cdots & \dfrac{p'_{G,T}(\boldsymbol{\theta})}{q_{G}}
        \end{bmatrix}}_{\texttt{G\textcolor{red}{\textbf{S}}PO}} \texttt{or}
        \underbrace{\begin{bmatrix}
        \dfrac{p'_{1,1}(\boldsymbol{\theta})}{\mathbb{E}_q q} & \cdots & \dfrac{p'_{1,T}(\boldsymbol{\theta})}{\mathbb{E}_q q} \\
        \vdots & \ddots & \vdots \\
        \dfrac{p'_{G,1}(\boldsymbol{\theta})}{\mathbb{E}_q q} & \cdots & \dfrac{p'_{G,T}(\boldsymbol{\theta})}{\mathbb{E}_q q}
        \end{bmatrix}}_{\texttt{G\textcolor{red}{\textbf{E}}PO~(ours)} },
    \end{equation}
\end{small}
where $\mathbf{A}\in \mathbb{R}^{G\times T}$ is token-level advantages matrix, $\odot$ denotes Hadamard product, $q_{i,t}=q(y_t^i \mid x^i, y_{<t}^i)$, $q_{i}=q(y^i\mid x)$, and $\mathbb{E}_q q = {\widehat{\mathbb{E}}_q[q(y|x)]}$.
% From the perspective of gradient stability, GSPO unifies the coefficient of the gradient for each token within a sequence to $\frac{1}{q(y^i|x)} (i=1,2,...,G)$. In contrast, GEPO sets the denominator for the gradient of each token within a group to $\frac{1}{\mathbb{E}_q q}$. 
From the perspective of gradient stability, GSPO uses a shared denominator ${q(y^i \mid x)}$ for all tokens in sequence $i$, while GEPO further aggregates across the entire group by using a common denominator ${\mathbb{E}_q q}$. This progression—from token-level (GRPO) to sequence-level (GSPO) to group-level (GEPO) coefficients—demonstrates that coarser importance-weight granularity significantly reduces gradient variance. Empirically, leveraging group-level statistics enhances robustness and stabilizes training, especially under high policy divergence.
% Experimental results demonstrate that this approach effectively stabilizes the gradients across a group during the training process.
% Notably, GRPO uses token-level coefficients, GSPO uses sequence-level coefficients, and GEPO uses group-level coefficients. Empirical results show that coarser coefficient granularity leads to greater training stability. This indicates that leveraging 'high-level' statistical information, such as group expectations, improves robustness and reduces variance, enhancing the overall stability of the reinforcement learning process.
% Although GEPO introduces a low level of bias, it significantly improves the stability of the gradients. As demonstrated by the experimental results, the benefits outweigh the drawbacks.

% \clearpage

\section{Experiments}
\subsection{Experimental Setup}

\paragraph{Model, Dataset and Benchmarks} We conduct reinforcement learning training and evaluation on the Qwen3-1.7B/8B  model. The models are trained by strong-to-weak distillation~\citep{qwen3}, but have not been tuned with any RL.
We train the model on 8,290 samples from the MATH level 3–5 dataset~\citep{zeng2025simplerl} and evaluate it by reporting average Pass@1 over 8 sampled responses on the \textbf{MATH500}~\citep{hendrycks2021measuring}, \textbf{AMC23}~\citep{li2024numinamath}, \textbf{AIME24}~\citep{AIME24}, and \textbf{AIME25}~\citep{AIME25} benchmarks. To better evaluate the inherent stability of policy optimization algorithms, we remove KL divergence constraints during training under online RL scenario, and use the same KL coefficient under the heterogeneous RL scenario. 
We compare our method against baseline methods \textbf{GRPO}~\citep{GRPO}, \textbf{GSPO}~\citep{GSPO}, \textbf{BNPO}~\citep{BNPO} and \textbf{Dr.GRPO}~\citep{drGRPO}
% and $\Delta \mathbf{L}$ Normlization~\citep{deltaLN} 
under both zero-delay and high-delay settings. More experimental details can be found in Section~\ref{sec:expdetail}.

\paragraph{Heterogeneous Computing Environment}  
As shown in Figure \ref{fig:framework}, we perform heterogeneous training across five compute nodes: one learner node and four sampler nodes, forming a star-shaped topology centered at the learner. During training, the sampler nodes generate rollout data, which is transmitted over the network to the learner node in a streaming fashion. The learner updates the model parameters and periodically broadcasts the updated weights back to the sampler nodes. The learner processes incoming rollouts in the order they arrive, operating within a fixed time window for data eligibility. Since data is transmitted in batch units—each containing text, generation probabilities, and rewards—a maximum delay of 1800 seconds is sufficient for typical network conditions. Within this window, the iteration gap (in terms of gradient updates) between the learner and samplers remains within 64 steps.
\begin{figure}[t]
    \centering
    \includegraphics[width=\linewidth]{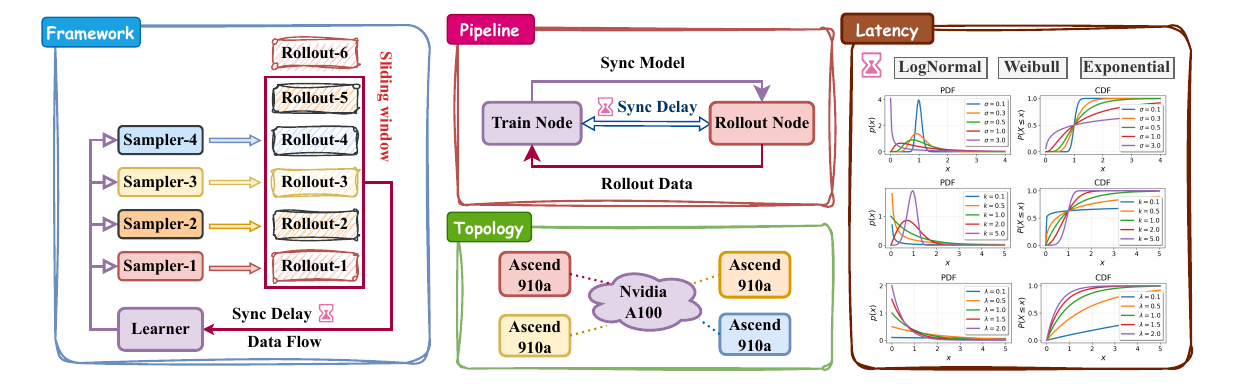}
    \caption{The Overview of HeteroRL. By decoupling sampling and training, HeteroRL enables decentralized distributed RL training of LLMs across five compute nodes: one parameter update node (learner) and four data generation nodes (sampler), forming a star-shaped network topology. Network delays between the sampler and learner nodes are explicitly modeled and can be simulated using stochastic distributions such as the log-normal or Weibull distribution.}
    \label{fig:framework}
\vspace{-0.3cm}
\end{figure}

% \begin{itemize}
%     \item \textbf{Learner transmits model weights:} The learner periodically saves model checkpoints using \textit{torch.save\_pretrained()} to a shared path \textit{Model\_Sync\_Path}.
    
%     \item \textbf{Sampler reads model weights:} A delay $D_M$ is sampled from the simulated distribution $P_d$. The sampler continues generating data with the current model until the timer reaches $D_M$, at which point it loads the latest model from \textit{Model\_Sync\_Path}. Crucially, the sampler remains active and never idles during the delay period.
    
%     \item \textbf{Sampler transmits data:} The sampler saves newly generated rollout batches to \textit{Rollout\_Sync\_Path}, timestamped with $T_{sync}$. For implementation simplicity, we do not impose an additional delay on data transmission; instead, the effect of data transfer latency is equivalently absorbed into the model sync delay $D_M$. This simplification has no impact on the validity of the simulation.
    
%     \item \textbf{Learner reads data:} The learner reads eligible rollout data from \textit{Rollout\_Sync\_Path} within a predefined time window (e.g., data no older than 1800 seconds) for training.
% \end{itemize}

\subsection{Main Experimental Results}\label{sec:mainres}

\begin{table}[H]
\centering
\caption{Performance comparison using Qwen3-1.7B/8B under Online RL (4k limiation).}
\label{tab:Qwen8Bonline}
\small
\setlength{\tabcolsep}{4.7pt} % Adjust column spacing to avoid excessive width
\begin{tabular}{l *{11}{c}}
\toprule
\multirow{2}{*}{{Method}} 
    & \multicolumn{2}{c}{\textbf{AMC2023}} 
    & \multicolumn{2}{c}{\textbf{AIME2024}} 
    & \multicolumn{2}{c}{\textbf{AIME2025}} 
    & \multicolumn{2}{c}{\textbf{MATH500}}
    & \multicolumn{2}{c}{\textbf{Average}} \\
\cmidrule(lr){2-3} \cmidrule(lr){4-5} \cmidrule(lr){6-7} \cmidrule(lr){8-9} \cmidrule(lr){10-11}
    & \textbf{1.7B} & \textbf{8B} 
    & \textbf{1.7B} & \textbf{8B} 
    & \textbf{1.7B} & \textbf{8B} 
    & \textbf{1.7B} & \textbf{8B}
    & \textbf{1.7B} & \textbf{8B} \\
\midrule
\rowcolor{gray!25}  Qwen3-1.7/8B & 44.6 & 70.6   & 10.9   & 32.4   & 14.0  & 26.1   & 72.4   & 87.1 & 35.5 & 54.1  \\
\midrule
\multicolumn{11}{c}{\textbf{Max Tolerable Delay 0}} \\
\midrule
BNPO & 59.4 & 78.8 & 27.7 & \textbf{44.1} & 23.4 & 29.3 & 83.7 & 91.4 & 48.6 & 60.9  \\
Dr.GRPO  & 61.6 & 77.5 & 24.6 & 41.0 & 22.7 & 27.7 & 82.9 & 91.6 & 48.0& 59.4  \\
GRPO  & 60.9 &81.3& 30.9 &42.6 & 24.2 & 31.3 & 83.7 & 92.0 & 49.9& 61.8  \\
GSPO  & 60.3 &77.8& 28.5 &41.8 & 25.0 &31.3 & 83.9 & 90.9 & 49.4 & 60.5\\
% $\Delta \mathbf{L}$ & \textbf{62.2} & - & 28.5 & - & 23.8 & - & 84.0 & - & 49.6 & - \\
GEPO (ours) & \textbf{62.2} &\textbf{85.6} & \textbf{31.6} &\textbf{44.1}& \textbf{25.8} & \textbf{37.5} & \textbf{84.7} &\textbf{92.6}  & \textbf{51.1} & \textbf{65.0} \\
\bottomrule
\end{tabular}
\normalsize
\end{table}
% In this section, we compare GEPO with baselines under online RL and Hetero RL settings. We select Qwen3-8B~\citep{qwen3} model and a response length of 4k. The experimental results are shown in Table \ref{tab:Qwen8Bonline} and Table \ref{tab:Qwen8BHetero}. 
% We find that under this new experimental setting, GSPO demonstrates comparable with GRPO, while GEPO consistently outperforms both baselines across nearly all benchmarks and model sizes, establishing a new state-of-the-art for online RL under the 4k generation limit. This demonstrates GEPO’s robust scalability across both model size and generation length. 

In this section, we compare GEPO with baselines under online RL and Hetero RL settings.
The experimental results in Tables~\ref{tab:Qwen8Bonline} and~\ref{tab:Qwen8BHetero} demonstrate that GEPO not only achieves superior performance but also exhibits exceptional stability across both online and Hetero RL settings. Below, we dissect these findings in depth.
\begin{table}[htbp]
\centering
\caption{Performance comparison using Qwen3-8B under Hetero RL (4k limiation). }
\label{tab:Qwen8BHetero}
\small
\setlength{\tabcolsep}{5.0pt} % Adjust column spacing to avoid excessive width
\begin{tabular}{l *{11}{c}}
\toprule
\multirow{2}{*}{{Method}} 
    & \multicolumn{2}{c}{\textbf{AMC2023}} 
    & \multicolumn{2}{c}{\textbf{AIME2024}} 
    & \multicolumn{2}{c}{\textbf{AIME2025}} 
    & \multicolumn{2}{c}{\textbf{MATH500}}
    & \multicolumn{2}{c}{\textbf{Average}} \\
\cmidrule(lr){2-3} \cmidrule(lr){4-5} \cmidrule(lr){6-7} \cmidrule(lr){8-9} \cmidrule(lr){10-11}
    & \textbf{best} & \textbf{last} 
    & \textbf{best} & \textbf{last} 
    & \textbf{best} & \textbf{last} 
    & \textbf{best} & \textbf{last} 
    & \textbf{best} & \textbf{last}  \\
\midrule
\rowcolor{gray!25}  Qwen3-8B & 70.6 & - &  32.4 &  - &  26.1 & - & 87.1 & - & 54.1  & - \\
\midrule
\multicolumn{11}{c}{\textbf{Max Tolerable Delay 64}} \\
\midrule
\textcolor{black}{BNPO} & 73.2 & 72.6 & 37.7 & 35.9 & 25.3 & 24.5 & 87.4 & 86.1 & 55.9 & 54.8  \\
\textcolor{black}{Dr.GRPO} & 73.1 & 72.9 & 38.1 & 37.3 & 24.8 & 22.3 & 88.1 & 86.7 & 56.0 & 54.8 \\
GRPO & 71.6 & 71.6 & 38.7 & 35.9 & 27.3 & 27.0 & 88.8 & 88.8 & 56.6 & 55.8  \\
GSPO & 76.2 & 60.0 & 37.9 & 16.4 & 28.9 & 27.3 & 90.7 & 81.9 & 58.4 & 46.4  \\
GEPO (ours) & \textbf{83.4} & \textbf{82.8} & \textbf{42.6} & \textbf{37.5} & \textbf{33.2} & \textbf{32.0} & \textbf{91.3} & \textbf{90.9} & \textbf{62.6} & \textbf{60.8} \\
\bottomrule
\end{tabular}
\normalsize
\end{table}

In the \textbf{online RL setting} (Table~\ref{tab:Qwen8Bonline}), GEPO consistently outperforms all baselines across both model sizes and all benchmarks. On Qwen3-8B, it achieves an average score of \textbf{65.0}, surpassing GRPO (61.8) and GSPO (60.5) by 3.2 and 4.1 points, respectively. The gain is most notable on AIME2025 (+6.2 points over GRPO/GSPO, ~20\% relative improvement). Even on the 1.7B model, GEPO sets a new SOTA, exceeding the best baseline by 1.5 points in average, confirming that group-level importance weighting improves gradient quality even without asynchrony.

% In the \textbf{Hetero RL setting} (Table~\ref{tab:Qwen8BHetero}), GEPO’s stability advantage becomes decisive. While GSPO’s \texttt{last} scores collapse dramatically (e.g., 76.2 → 60.0 on AMC2023; 37.9 → 16.4 on AIME2024), GEPO maintains near-peak performance throughout training (e.g., \texttt{last}/\texttt{best}: 82.8/83.4 on AMC2023; 37.5/42.6 on AIME2024). Its final average (\texttt{last} = 60.8) even exceeds GSPO’s \texttt{best} (58.4) and GRPO’s \texttt{last} (55.8), demonstrating robustness to policy staleness.
% Both GEPO and GSPO improve over GRPO in final performance (+10.6\% and +3.2\%, respectively). However, GEPO further surpasses GSPO by 7.2\% in accuracy while reducing its best-to-last degradation by 85\%, achieving both higher performance and far greater stability.
In the \textbf{Hetero RL setting} (Table~\ref{tab:Qwen8BHetero}), GEPO’s stability advantage becomes decisive.
\textcolor{black}{We find that BNPO and Dr.GRPO exhibit little performance difference compared to vanilla GRPO, and both employ token-level importance weights; therefore, we only analyze the performance differences among GRPO, GSPO, and GEPO. }
Both GEPO and GSPO improve over GRPO in \texttt{best} performance (+10.6\% and +3.2\%, respectively). However, GEPO further surpasses GSPO by 7.2\% in accuracy while reducing its best-to-last degradation by \textbf{85\%} versus GSPO ($\Delta$=1.8 vs. 12.0), achieving both higher performance and far greater stability. While GSPO’s \texttt{last} scores collapse dramatically, GEPO maintains near-peak performance throughout training. 

These results validate GEPO’s core design: by replacing token or sequence-level importance weights with group-level expectations, it exponentially reduces importance weight variance under high KL divergence (Theorem~\ref{theory:1}), enabling stable, scalable decentralized RL. GEPO thus sets a new frontier in both performance and stability across ideal and real-world distributed settings.
% This improved stability is further validated by Figure~\ref{fig:ISWeight_Influence}, which illustrates the training dynamics under maximum tolerable delay of 64.
\begin{figure}[htbp]
    \centering
    \begin{subfigure}{0.32\linewidth}
        \includegraphics[width=\linewidth]{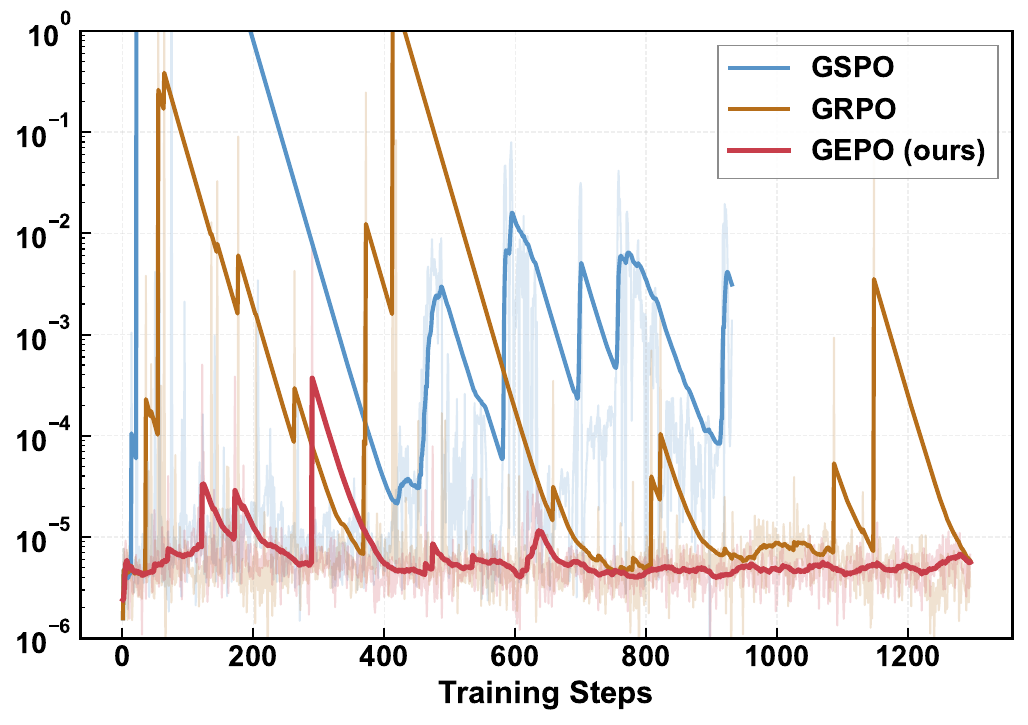}
        \caption{Variance of IW}
        \label{fig:VarISWeight}
    \end{subfigure} 
    \hfill
    \begin{subfigure}{0.32\linewidth}
        \includegraphics[width=\linewidth]{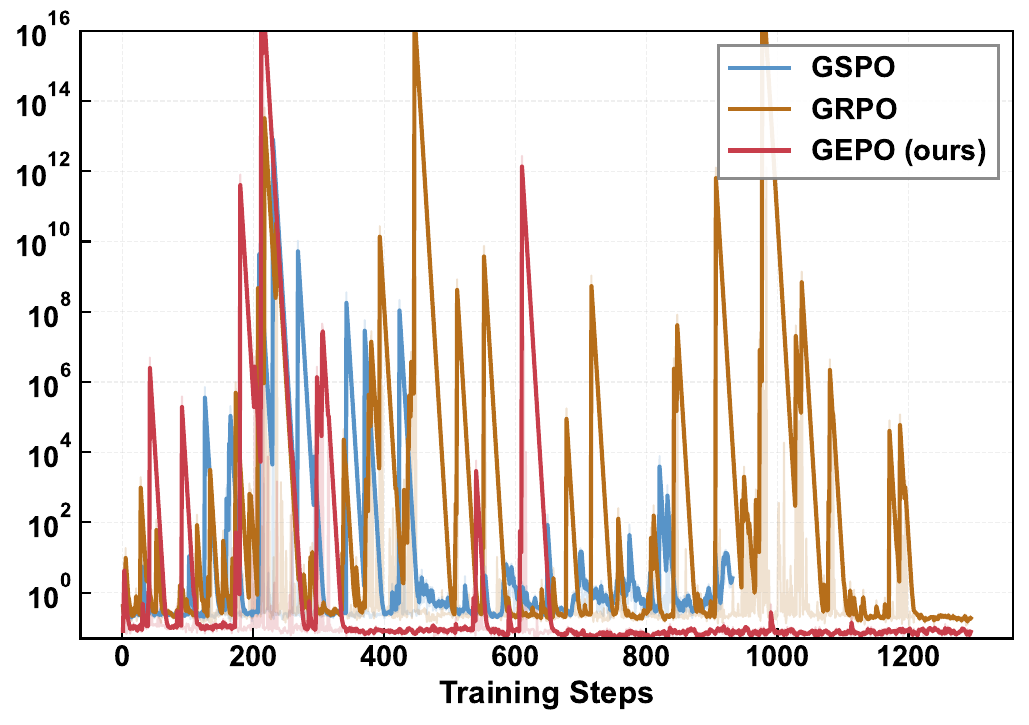}
        \caption{Gradient Norm}
        \label{fig:GradNorm}
    \end{subfigure}
    \hfill
    \begin{subfigure}{0.32\linewidth}
        \includegraphics[width=\linewidth]{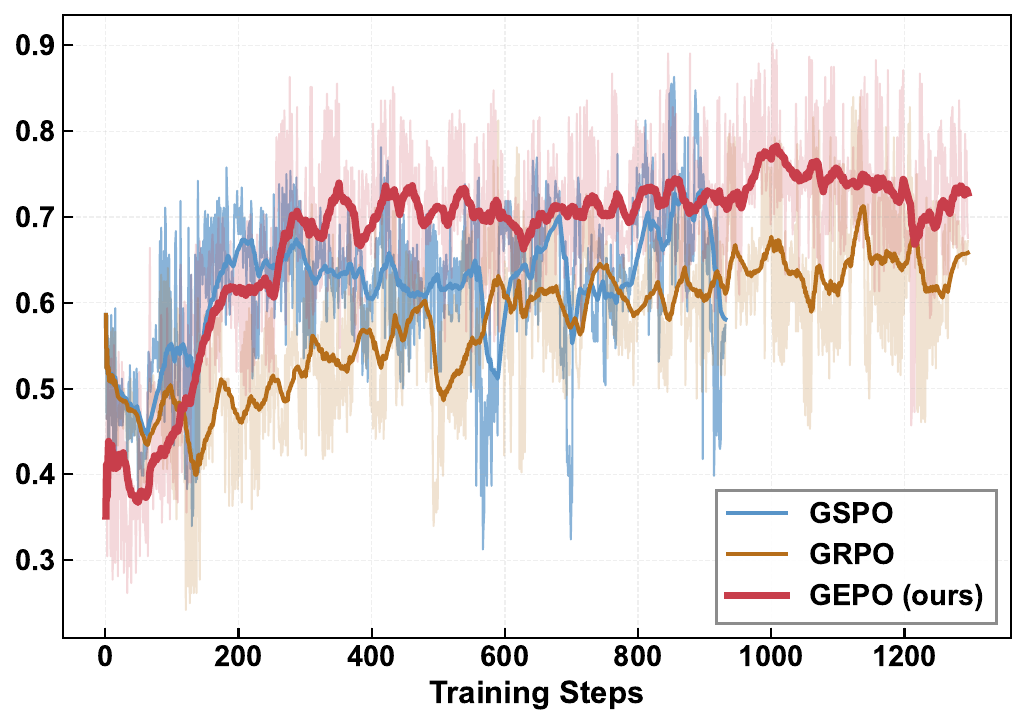}
        \caption{Training Reward}
        \label{fig:TrainReward}
    \end{subfigure}
    % \hfill
    % \begin{subfigure}{0.24\linewidth}
    %     \includegraphics[width=\linewidth]{figs/EvalAcc_diff32_1p7B.pdf}
    %     \caption{Eval Reward}
    %     \label{fig:EvalReward}
    % \end{subfigure}
    \caption{Curves of importance sampling variance, training gradient norm, and train/eval reward under max delay 64. Compared to GRPO and GSPO, GEPO maintains more stable importance weight variance, resulting in less drastic gradient changes, more stable training, and no decline in training reward.}
    \label{fig:ISWeight_Influence}
\end{figure}
In the HeteroRL setting, the training process recorded in Figure~\ref{fig:ISWeight_Influence} shows that GRPO stably improves the reward at a slower pace, while GSPO rapidly increases the reward in the first 200 steps but becomes unstable between 500 and 700 steps.
As seen in Figure~\ref{fig:VarISWeight}, GEPO exhibits significantly lower variance in importance weights compared to GRPO and GSPO, which experience sharp spikes and fluctuations. These unstable weight variances lead to erratic gradient updates, as evidenced by the large oscillations in gradient norm (Figure~\ref{fig:GradNorm}) for GRPO and GSPO, especially during early and mid-training phases. In contrast, GEPO’s gradient norms remain relatively smooth and bounded, contributing to stable learning progress. Consequently, the training reward curve (Figure~\ref{fig:TrainReward}) shows consistent improvement for GEPO without any noticeable decline, whereas GRPO and GSPO exhibit periods of stagnation or even degradation.

\subsection{Latency Analysis}
\begin{figure}[htbp]
    \centering
    \begin{subfigure}{0.32\linewidth}
    \vspace{0pt}
        \includegraphics[width=\linewidth]{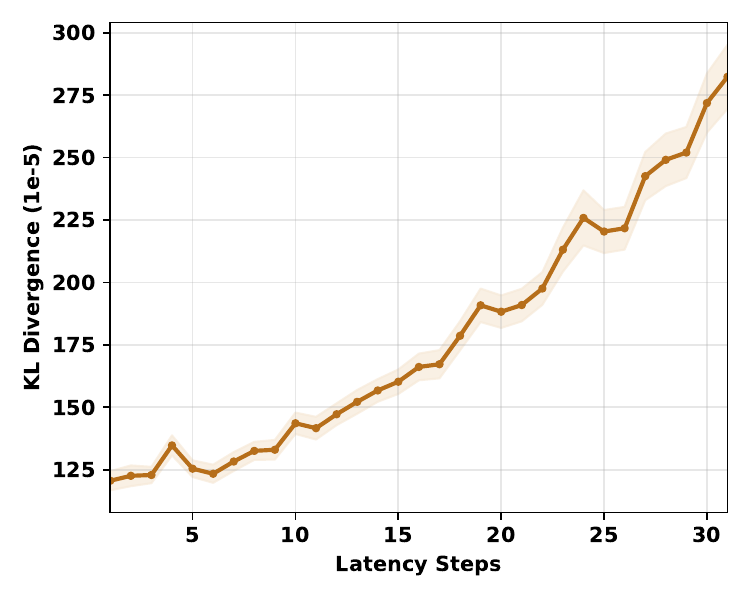}
        \caption{KL-divergence}
        \label{fig:KLdivergence}
    \end{subfigure} 
    \hfill
    \begin{subfigure}{0.32\linewidth}
    \vspace{0pt}
        \includegraphics[width=\linewidth]{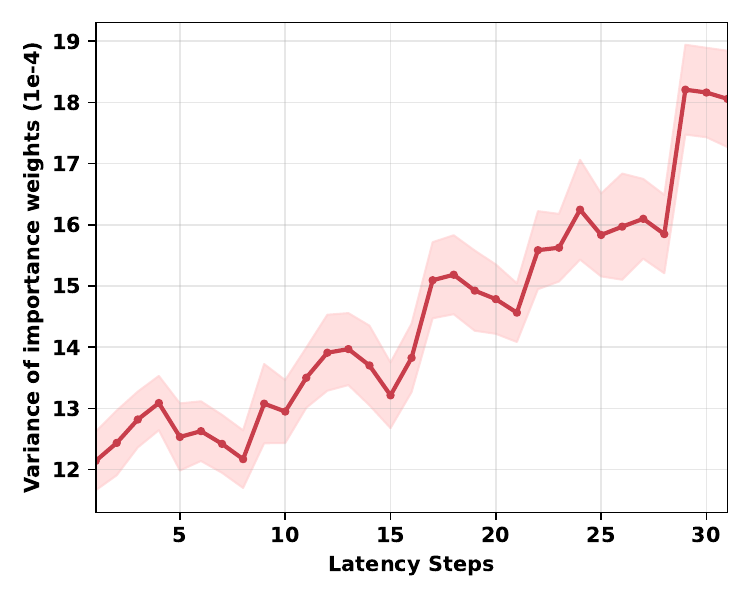}
        \caption{Variance of IW}
        \label{fig:VarianceIW}
    \end{subfigure}
    \hfill
    \begin{subfigure}{0.32\linewidth}
    \vspace{0pt}
        \includegraphics[width=\linewidth]{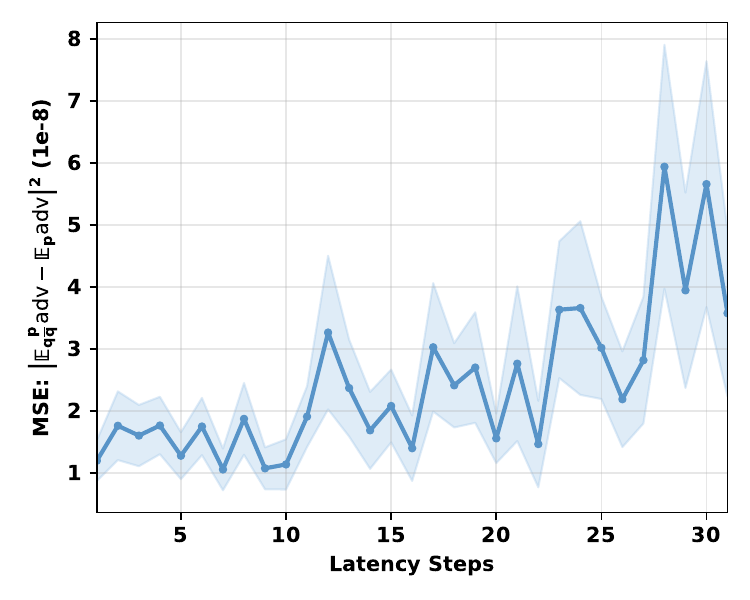}
        \caption{Estimation error}
        \label{fig:Estimationerror}
    \end{subfigure}
    \caption{KL-divergence, Variance of IW, and Estimation error are all positively correlated with the number of delay steps.}
    \vspace{-0.2cm}
    \label{fig:Delay_Influence}
\end{figure}
\noindent\textbf{Impact of Latency}
As shown in Figure \ref{fig:Delay_Influence}, we analyze the changes in KL divergence between the trainer and sampler, variance of importance weights, and estimation error of the expected value of the advantage function (optimization objective) during heterogeneous RL training as latency increases. We observe that latency leads to increased KL divergence (Figure \ref{fig:KLdivergence}), which in turn causes an increase in the variance of importance weights (Figure \ref{fig:VarianceIW}), ultimately resulting in increased estimation error of the expected advantage function (Figure \ref{fig:Estimationerror}). Since the optimization objective is to maximize the estimated expectation of the advantage function, large estimation errors will cause significant fluctuations in gradients, thereby affecting training stability and performance.
To show that high latency harms training stability, we compare max delays of 8 and 64 steps. As Figure~\ref{fig:Delay8vs32} shows, with 64-step delay—especially near step 800—the KL divergence spikes and evaluation accuracy drops sharply, confirming that latency induces instability. 
Although GEPO improves stability, it still suffers a performance dip around step 900, highlighting that heterogeneous RL under high latency remains challenging. 
% Better approaches are needed to mitigate latency’s adverse effects and ensure stable training.
\begin{figure}[htbp]
    \centering
    \begin{subfigure}{0.32\linewidth}
        \includegraphics[width=\linewidth]{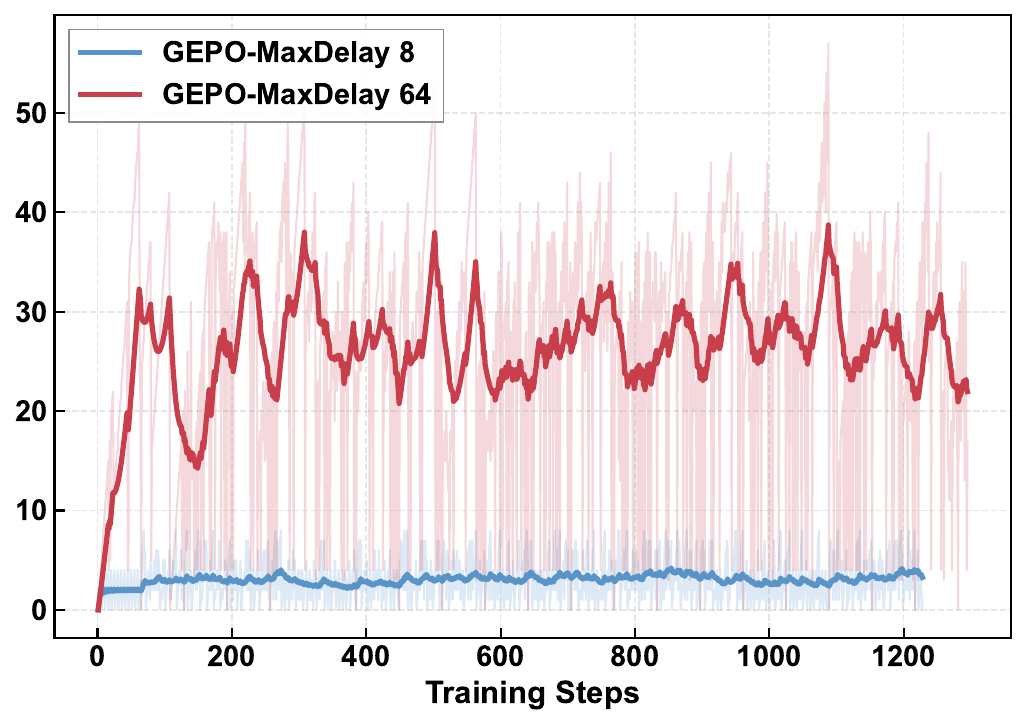}
        \caption{Latency steps}
        \label{fig:DelayStep}
    \end{subfigure}
    \hfill
    \begin{subfigure}{0.32\linewidth}
        \includegraphics[width=\linewidth]{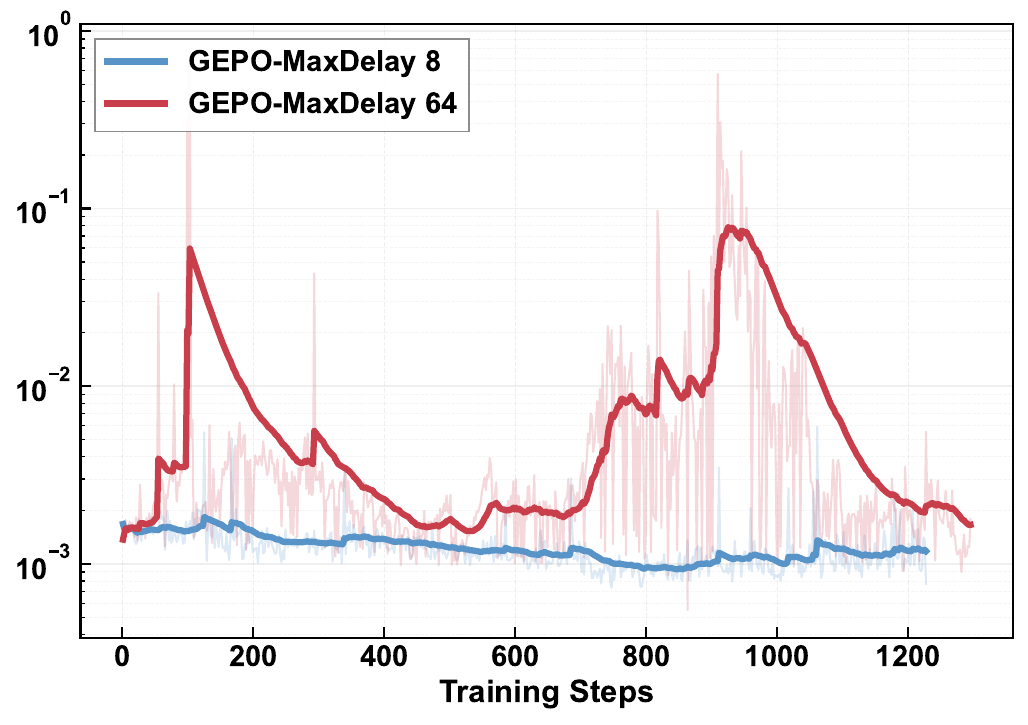}
        \caption{KL divergence}
        \label{fig:SamplerLearnerKL}
    \end{subfigure}
    \hfill
    \begin{subfigure}{0.32\linewidth}
        \includegraphics[width=\linewidth]{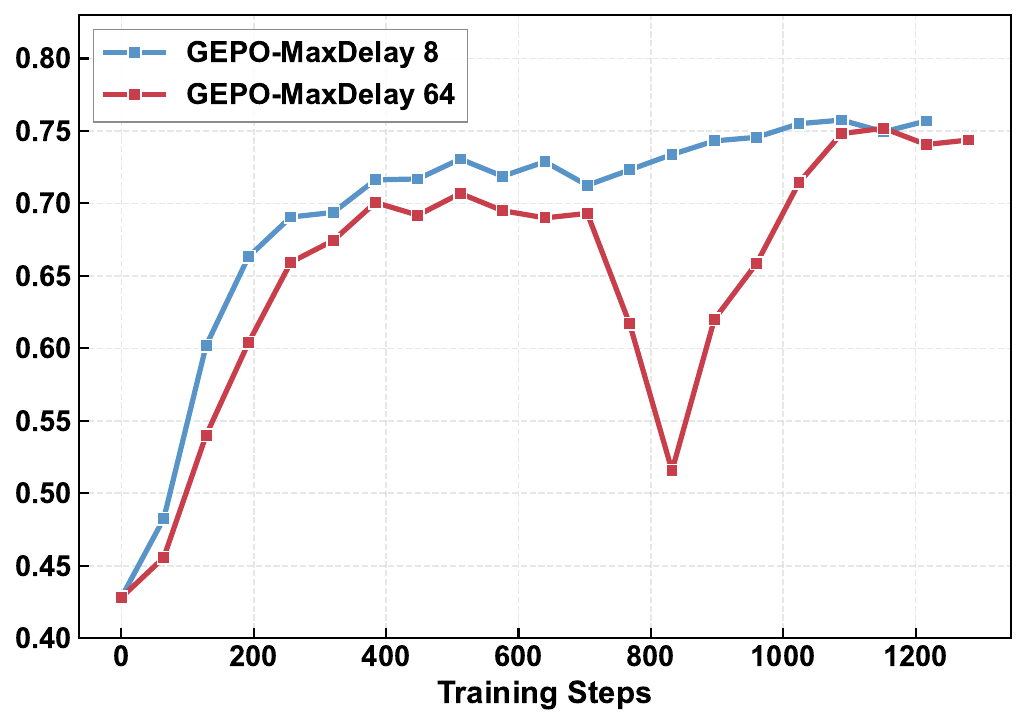}
        \caption{Evaluation accuracy}
        \label{fig:EvalACC}
    \end{subfigure}
    \caption{Training processes under different latency conditions}
    \label{fig:Delay8vs32}
\end{figure}

\noindent\textbf{Correlation and Causality.} 
Figure~\ref{fig:Delay_Corr} quantifies the pairwise correlations among KL-divergence, variance of importance weights, and estimation error of the expected advantage function. The correlation coefficients range from 0.76 to 0.96 ($\alpha=0.05$), confirming a strong statistical association between these variables. This observation empirically supports our hypothesis (illustrated in Figure~\ref{fig:Delay_Influence}) that increased latency induces higher KL divergence, which in turn amplifies the variance of importance weights and the estimation error, ultimately threatening training stability.
However, correlation does not imply strict causation. While latency is a significant contributing factor to KL divergence, it is not the sole determinant — the model's internal state and the statistical properties of the sampled data also play crucial roles. This explains the observed variance in training outcomes under identical latency: sometimes collapse occurs, sometimes not. 
Critically, what determines survival versus collapse is not latency itself, but the algorithm’s capacity to mitigate the downstream instability caused by high KL divergence. As demonstrated in Figure~\ref{fig:ISWeight_Influence} and Table~\ref{tab:Qwen8BHetero}, GEPO’s group expectation mechanism effectively suppresses the explosion of importance weight variance even when KL divergence is high. This allows GEPO to maintain stable training and avoid collapse in many scenarios where baseline methods (like GRPO and GSPO) fail — thereby establishing algorithmic robustness to policy divergence as a core contribution of this work.

\begin{figure}[H]
    \centering
    \begin{subfigure}[t]{0.32\linewidth}
     \vspace{0pt}
        \includegraphics[width=\linewidth]{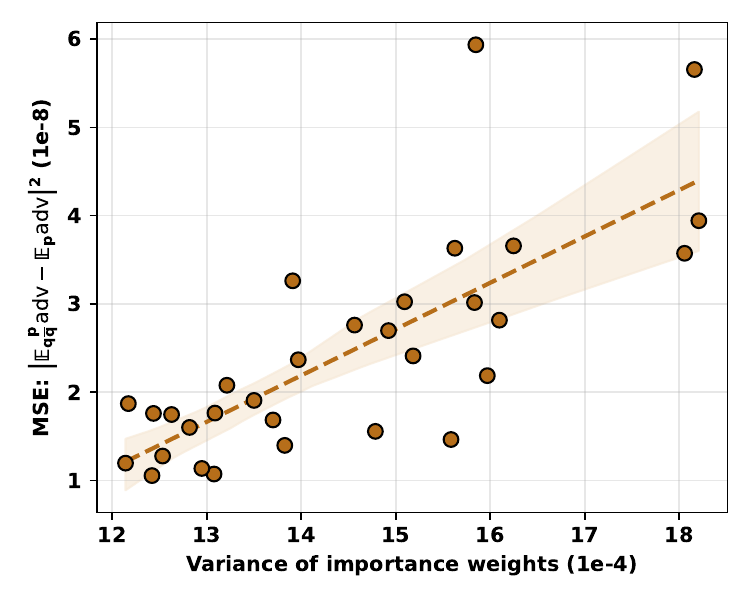}
        \caption{The correlation between importance weight variance and estimation error of $\mathbb{E}_p[adv(x,y)]$ is \textbf{0.76}.}
        \label{fig:4}
    \end{subfigure}
    \hfill
    \begin{subfigure}[t]{0.32\linewidth}
     \vspace{0pt}
        \includegraphics[width=\linewidth]{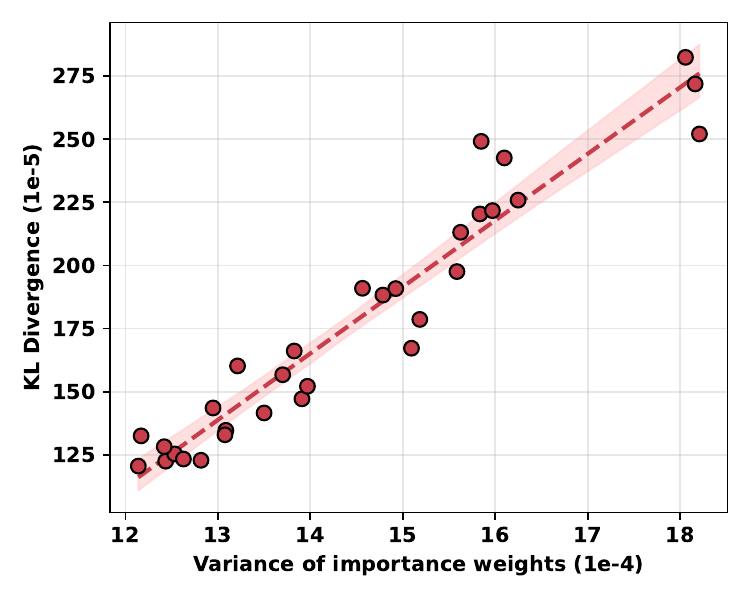}
        \caption{The correlation between importance weight variance and KL divergence is \textbf{0.96}.}
        \label{fig:5}
    \end{subfigure}
    \hfill
    \begin{subfigure}[t]{0.32\linewidth}
     \vspace{0pt}
        \includegraphics[width=\linewidth]{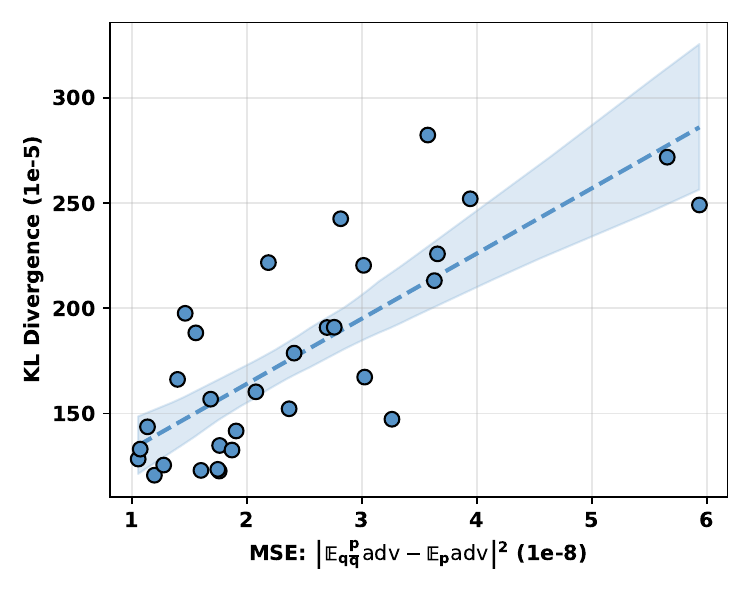}
        \caption{The correlation between KL divergence and $\mathbb{E}_p[adv(x,y)]$ estimation error is \textbf{0.78}.}
        \label{fig:6}
    \end{subfigure}
    \caption{Correlation analysis (\textbf{95\% CI}) of training delay steps, importance sampling variance, and estimation error of expected advantage function.}
    \label{fig:Delay_Corr}
\end{figure}
% \noindent\textbf{Challenges of High Latency} To demonstrate that high latency reduces training stability, we compared training processes with maximum delays of 8 steps and 64 steps respectively. As shown in Figure~\ref{fig:Delay8vs32}, under the maximum delay of 64 steps, particularly when the delay reaches its peak (around step 900), the KL divergence increases rapidly and the evaluation accuracy significantly decreases. This experiment confirms our hypothesis that latency increases instability. Although GEPO significantly improves training stability compared to methods like GRPO, it is still affected by high latency induced problem (performance drop around step 900), indicating that heterogeneous reinforcement learning in high-latency scenarios remains a challenging research direction. Better methods should be able to mitigate the negative impact of latency on training, making the training process more stable.
% \noindent\textbf{Challenges of High Latency}  
% To show that high latency harms training stability, we compare max delays of 8 and 64 steps. As Figure~\ref{fig:Delay8vs32} shows, with 64-step delay—especially near step 900—the KL divergence spikes and evaluation accuracy drops sharply, confirming that latency induces instability. Although GEPO improves stability over methods like GRPO, it still suffers a performance dip around step 900, highlighting that heterogeneous RL under high latency remains challenging. Better approaches are needed to mitigate latency’s adverse effects and ensure stable training.

{\color{black}
\subsection{Hyperparameter Analysis}
\label{sec:hyperparameterPlot}

% We conduct comprehensive ablation studies to analyze the sensitivity of GEPO to key hyperparameters under varying degrees of system delay: group size $G$, sampling strategies (including top-$k$, top-$p$, and temperature), the CPPO-KL regularization coefficient, and the latency simulation distribution (e.g., exponential vs. Weibull vs. lognormal). All experiments in this section use the Qwen3-1.7B model with a maximum context length of 2028 tokens. 

% \usepackage{graphicx}
% \usepackage{subcaption}

% 在正文中的 figure 环境
\begin{figure}[htbp]
\centering
% 第一行
\begin{subfigure}[b]{0.32\textwidth}
    \centering
    \includegraphics[width=\linewidth]{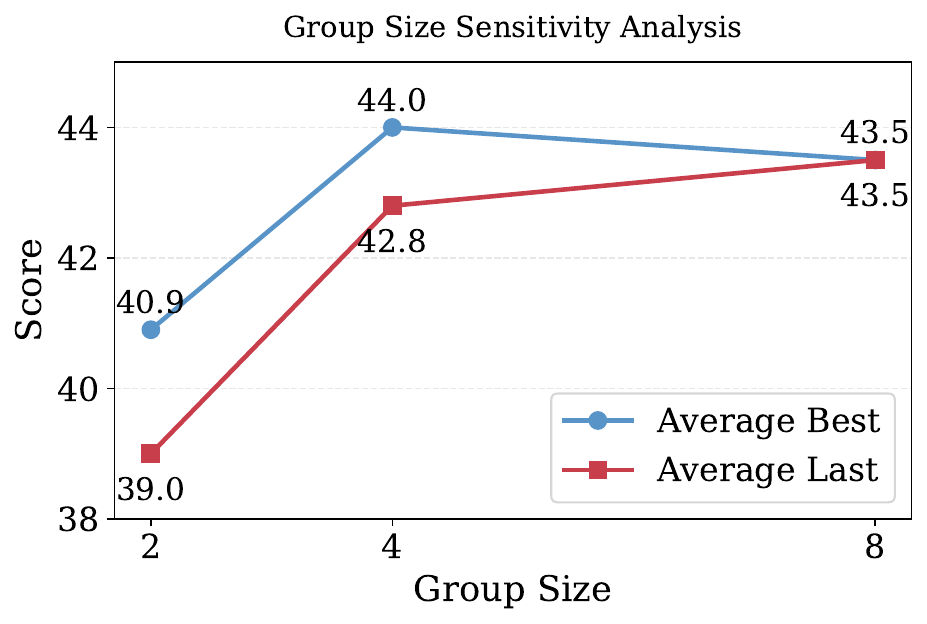}
    \caption{Group size}
    \label{fig:group_size}
\end{subfigure}
\hfill
\begin{subfigure}[b]{0.32\textwidth}
    \centering
    \includegraphics[width=\linewidth]{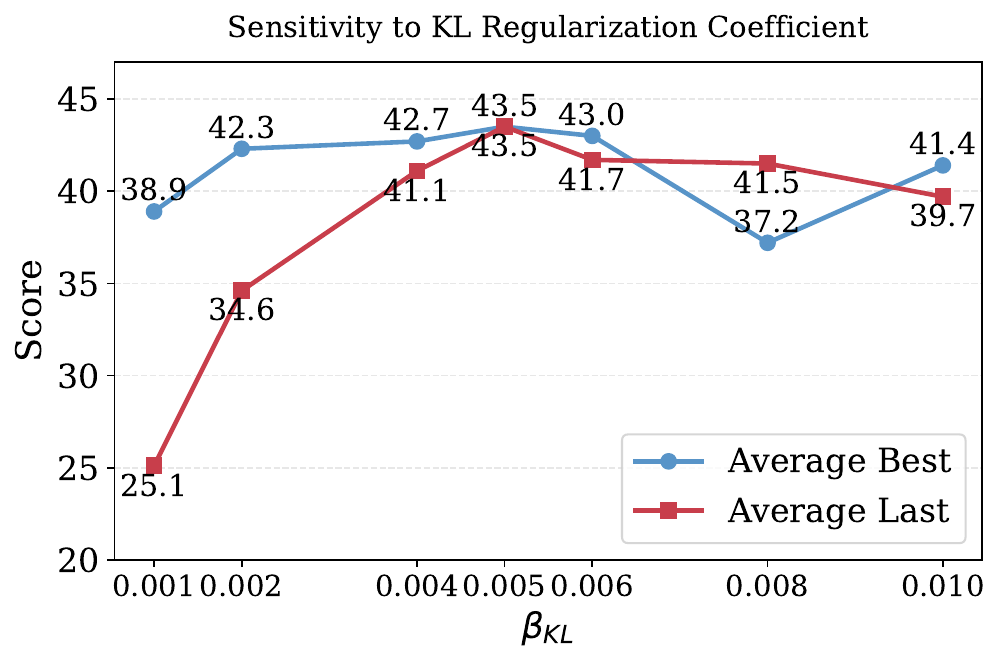}
    \caption{$\beta_{\mathrm{KL}}$}
    \label{fig:kl_beta}
\end{subfigure}
\hfill
\begin{subfigure}[b]{0.32\textwidth}
    \centering
    \includegraphics[width=\linewidth]{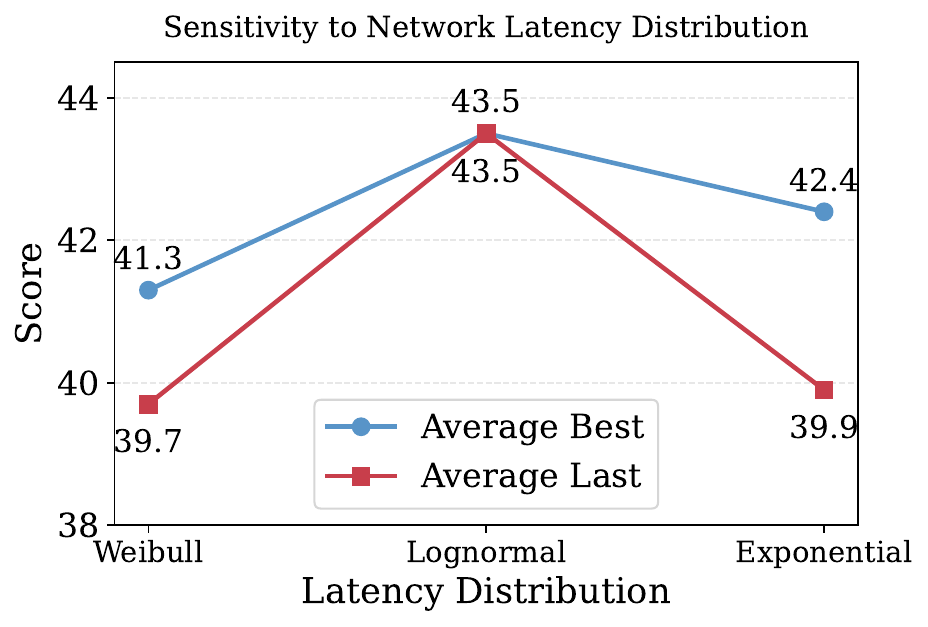}
    \caption{Latency distribution}
    \label{fig:latency_dist}
\end{subfigure}

\vspace{1em}

% 第二行
\begin{subfigure}[b]{0.32\textwidth}
    \centering
    \includegraphics[width=\linewidth]{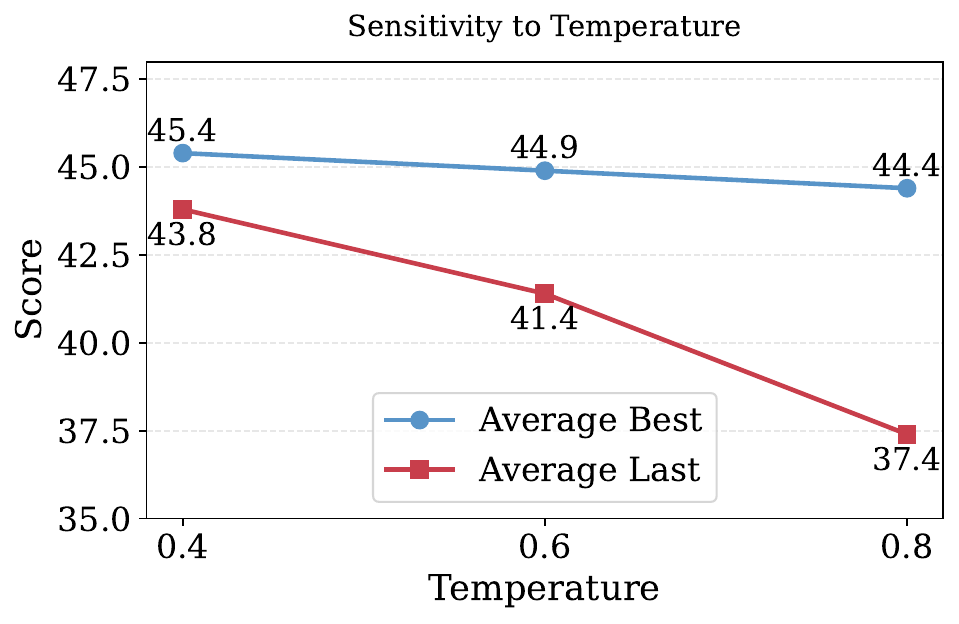}
    \caption{Temperature}
    \label{fig:temperature}
\end{subfigure}
\hfill
\begin{subfigure}[b]{0.32\textwidth}
    \centering
    \includegraphics[width=\linewidth]{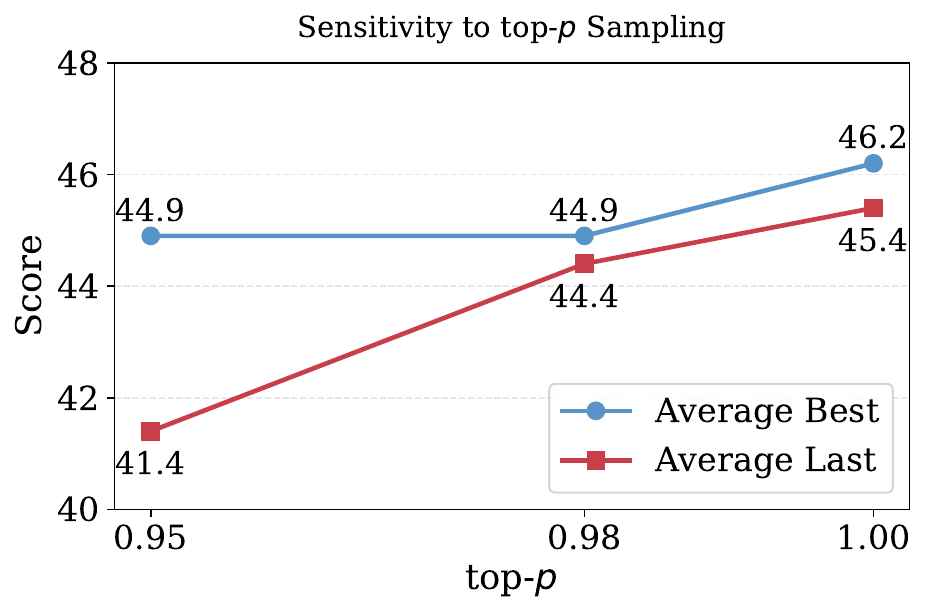}
    \caption{Top-$p$}
    \label{fig:top_p}
\end{subfigure}
\hfill
\begin{subfigure}[b]{0.32\textwidth}
    \centering
    \includegraphics[width=\linewidth]{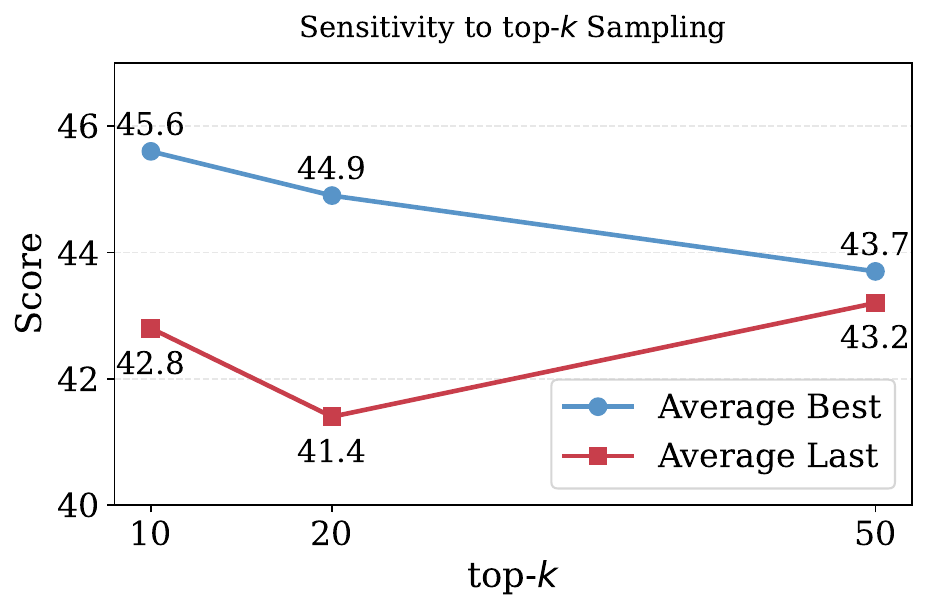}
    \caption{Top-$k$}
    \label{fig:top_k}
\end{subfigure}

\caption{Sensitivity analysis of key hyperparameters in the Hetero/Online RL setting. All plots show the \textbf{Average Best} and \textbf{Average Last} performance across four benchmarks (AMC2023, AIME2024, AIME2025, MATH500).}
\label{fig:sensitivity_all}
\end{figure}
We conduct comprehensive hyperparameter studies to analyze the sensitivity of GEPO to key hyperparameters under hetero RL and online RL setting, including group size \(G\), sampling strategies (top-\(k\), top-\(p\), and temperature), the KL regularization coefficient $\beta_{\mathrm{KL}}$, and the choice of latency simulation distribution (e.g., exponential, Weibull, or lognormal). All experiments in the hyperparameter analysis section are based on the Qwen3-1.7B model with a maximum context length of 2,048 tokens. Figure~\ref{fig:sensitivity_all} plots the best and last performance curves across six hyperparameters. More detailed experimental results are provided in Appendix~\ref{sec:hyperparameter}.

In summary, our comprehensive hyperparameter analysis reveals that group size 8 ensures optimal stability in heterogeneous RL settings with consistently high final performance (Figure~\ref{fig:group_size}), while a KL regularization coefficient of $\beta_{\mathrm{KL}}=0.005$ maximizes both peak and final scores (Figure~\ref{fig:kl_beta}). An excessively low KL coefficient reduces training stability. The Weibull latency distribution proves more challenging than lognormal or exponential alternatives (Figure~\ref{fig:latency_dist}), and a temperature of 0.4 enhances stability by maintaining higher final performance (Figure~\ref{fig:temperature}). For sampling strategies, top-$p=1.0$ and top-$k=10$ achieve the best trade-off between peak and final results (Figures ~\ref{fig:top_p}, ~\ref{fig:top_k}), confirming their effectiveness across diverse training scenarios. These findings validate our design choices in the main experiments and provide practical guidance for deploying GEPO in diverse distributed training environments.
% In summary, our comprehensive hyperparameter analysis reveals that:
% \begin{enumerate}
%     \item Group size is crucial for stability in heterogeneous RL settings, with size 8 providing optimal stability.
%     \item KL regularization coefficient ($\beta_{KL}$=0.005) is critical for maintaining stability under policy divergence.
%     \item The Weibull distribution is more challenging compared to other distributions.
%     \item Lower temperature (0.4) improves stability in online RL by reducing sampling noise.
%     \item Top-$p$ and top-$k$ have a moderate impact on performance, with top-$p$=1.0 and top-$k$=10, providing a good balance.
% \end{enumerate}

% These findings validate our design choices in the main experiments and provide practical guidance for deploying GEPO in diverse distributed training environments.
}

\section{Related Work}
\paragraph{\textcolor{black}{Asynchronous Reinforcement Learning Framework.}}
In recent years, reinforcement learning has become central to post-training LLMs ~\citep{ziegler2019fine}. 
Researchers have identified efficiency bottlenecks in traditional synchronous RL frameworks. 
\citet{Llamarl} first theoretically explored asynchronous RLHF, proposing to decouple generation and training across GPU clusters, enabling scalable and efficient RL fine-tuning of LLMs with provable speedup over synchronous baselines. 
To address practical efficiency, {AREAL}~\citep{AReaL} fully decouples generation and training, using staleness thresholds and a decoupled PPO objective~\citep{dppo} to handle outdated samples. 
{Prime Intellect}~\citep{intellect2} offers a decentralized, asynchronous framework for community compute, ensuring trust via verifiable inference, stability via two-sided GRPO clipping, and controllable reasoning with length-aware rewards.
\textcolor{black}{Nevertheless, these approaches all assume centralized, low-latency infrastructure and support only limited degrees of asynchrony, making them not suited for distributed RL in high-latency, decentralized, and heterogeneous compute environments.}

% These works motivate our design of a robust, delay-tolerant method for heterogeneous, geographically distributed RL.
\paragraph{\textcolor{black}{Asynchronous Reinforcement Learning Algorithm.}}
% IMPALA\citep{IMPALA} is a centralized, high-throughput distributed reinforcement learning architecture designed for traditional RL settings—not LLMs—that uses V-trace to correct policy lag between asynchronous actors and a central learner. The truncated importance weight
\textcolor{black}{Major of asynchronous or off-policy RL algorithms for LLMs post-training typically adopt two primary strategies to ensure training stability:  
(1) \textbf{Gradient Truncation}, wherein updates are suppressed for tokens whose importance weight lie beyond a specified trust interval—Decoupled PPO~\citep{dppo} exemplifies this approach;  
(2) \textbf{Importance Weight Truncation}, which retains contributions from all samples but mitigates variance by bounding or reshaping the importance weights—methods such as Truncated IS~\citep{IMPALA}, CISPO~\citep{cispo}, and TOPR~\citep{topr} fall into this category. Our method diverges from prior studies in its underlying perspective: by embracing the notion of intra-group expectation, we broaden the tolerance for token importance weight ranges, thereby improving training stability.
}

\section{Conclusion}

We propose HeteroRL, a heterogeneous reinforcement learning framework designed for training LLMs across geographically distributed and resource-heterogeneous nodes, paired with GEPO—a novel policy optimization algorithm that stabilizes training under high latency. By decoupling rollout sampling from parameter updates, HeteroRL eliminates synchronization bottlenecks inherent in traditional RL pipelines. GEPO addresses the explosion of variance of importance weight caused by stale policies through group expectation
importance weight, provably reducing variance exponentially, particularly under large KL divergence between the sampling and learning policies.  
% Empirically, GEPO demonstrates remarkable stability on mathematical reasoning benchmarks, maintaining performance within 3\% of synchronous (zero-delay) training even under extreme 1800-second latency, and consistently outperforms strong baselines—including GRPO, GSPO, BNPO, and Dr. GRPO—in both final accuracy and training robustness.  
This work establishes a practical foundation for scalable, delay-tolerant decentralized RL, making it well-suited for real-world LLM post-training in heterogeneous, wide-area network environments.

% \section*{Ethics Statement}
% This work presents a systems and algorithmic contribution to stable reinforcement learning in heterogeneous, geographically distributed environments. It uses only public, non-sensitive datasets (e.g., MATH, AIME) and does not involve human subjects, personal data, or high-risk applications. The proposed method (GEPO) is task-agnostic and neutral in intent; it does not introduce bias or discrimination, as training focuses solely on mathematical reasoning. %All authors adhere to the ICLR Code of Ethics.

% \section*{Reproducibility Statement}
% We provide full implementation details in Appendix~\ref{sec:expdetail}, including model configurations, training hyperparameters, and latency simulation setup. All datasets used (MATH, AMC, AIME) are public. The algorithm (GEPO) is precisely described in Section~\ref{sec:method} and Listing~\ref{lst:coef_calc}, with theoretical guarantees in Appendix~A. \textbf{Anonymous code and configuration files are included in the supplementary materials.}

\ificlrfinal
    \section*{Author Contributions}
    The division of labor within the research team is as follows: \textbf{Han Zhang} was responsible for the design of the GEPO algorithm, preliminary implementation of the HeteroRL training framework, and technical report writing. \textbf{Ruibin Zheng} focused on optimizing the implementation of the HeteroRL training framework, improving data transmission execution logic in simulated latency scenarios, tuning model hyperparameters and prompts, creating visualizations, conducting model performance evaluations and participating in algorithm analysis. \textbf{Zexuan Yi} was responsible for implementing data transmission in real network environments and engineering the model synchronization mechanism. \textbf{Zhuo Zhang} contributed to experimental analysis and experimental design.
    
    % The defensive sampling mechanism was initially proposed by the \textbf{Qwen3-235B-A22B-2507 model}. While our experiments demonstrated that this mechanism can enhance training stability, we note that no existing literature has formally documented or proposed such a defensive sampling mechanism. 
    
    \section*{Acknowledgments}
    This work is supported by the Major Key Project of PCL (Grant No. PCL2025A09) and by the National Natural Science Foundation of China 62576120. We thank \textbf{Xiang Li} for verifying the theoretical proofs in this paper, and \textbf{Wei Li} for providing technical support on hardware infrastructure and system maintenance.

\fi

\bibliography{iclr2026_conference}
\bibliographystyle{iclr2026_conference}

\clearpage

\appendix

\tableofcontents

\section{Theoretical Extension}
\subsection{Theoretical Proof of Importance Sampling Variance}

This appendix analyzes a newly proposed importance sampling weight $w_{\text{new}}(y|x) = \dfrac{p(y|x)}{\mathbb{E}_q[q]}$, where $\mathbb{E}_q[q] = \int q(y|x)^2  dy$, and compares its variance with the standard importance sampling weight $w_{\text{std}}(y|x) = \dfrac{p(y|x)}{q(y|x)}$.

\paragraph{Problem Setting}
Let the target distribution be $p(y|x)$ and the proposal distribution be $q(y|x)$. We aim to estimate:
\begin{equation}
\mathbb{E}_p[f] = \int f(x) p(y|x)  dy.
\end{equation}
Since direct sampling from $p$ is difficult, we employ importance sampling by drawing samples from $q$.

\paragraph{Standard Importance Sampling}
The standard weight is defined as:
\begin{equation}
w_{\text{std}}(y|x) = \frac{p(y|x)}{q(y|x)}.
\end{equation}
Its expectation under $q$ is:
\begin{equation}
\mathbb{E}_q\left[ \frac{p(y|x)}{q(y|x)} \right] = \int \frac{p(y|x)}{q(y|x)} q(y|x)  dy = \int p(y|x)  dy = 1,
\end{equation}
thus it is unbiased. Its variance is:
\begin{equation}
\begin{aligned}
\mathrm{Var}_q(w_{\text{std}}) 
&= \mathbb{E}_q\left[ \left( \frac{p}{q} \right)^2 \right] - \left( \mathbb{E}_q\left[ \frac{p}{q} \right] \right)^2 \\
&= \int \frac{p(y|x)^2}{q(y|x)}  dy - 1.
\end{aligned}
\end{equation}
Denoted as:
\begin{equation}
\mathrm{Var}_{\text{std}} = \int \frac{p(y|x)^2}{q(y|x)}  dy - 1.
\end{equation}

\paragraph{Group Expectation Importance Sampling}
The new weight is defined as:
\begin{equation}
w_{\text{new}}(y|x) = \frac{p(y|x)}{\mathbb{E}_q[q]}, \quad \text{where} \quad \mathbb{E}_q[q] = \int q(y|x)^2  dy.
\end{equation}
Its expectation is:
\begin{equation}
\mathbb{E}_q[w_{\text{new}}] = \frac{1}{\mathbb{E}_q[q]} \int p(y|x) q(y|x)  dy = \frac{\langle p, q \rangle}{\|q\|_2^2},
\end{equation}
where $\langle p, q \rangle$ denotes the inner product $\int p(y|x)q(y|x)dy$. Generally, $\langle p, q \rangle \ne \|q\|_2^2$, making this estimator biased. Its variance is:
\begin{equation}
\begin{aligned}
\mathrm{Var}_q(w_{\text{new}}) 
&= \mathbb{E}_q\left[ \left( \frac{p(y|x)}{\mathbb{E}_q[q]} \right)^2 \right] - \left( \mathbb{E}_q\left[ \frac{p(y|x)}{\mathbb{E}_q[q]} \right] \right)^2 \\
&= \frac{1}{(\mathbb{E}_q[q])^2} \left( \int p(y|x)^2 q(y|x)  dy - \left( \int p(y|x) q(y|x)  dy \right)^2 \right).
\end{aligned}
\end{equation}
Denoted as:
\begin{equation}
\mathrm{Var}_{\text{new}} = \frac{1}{\left( \int q(y|x)^2  dy \right)^2} \left( \int p(y|x)^2 q(y|x)  dy - \left( \int p(y|x) q(y|x)  dy \right)^2 \right).
\end{equation}

\paragraph{Variance Comparison}
We compare:
\begin{equation}
\boxed{
\begin{aligned}
\mathrm{Var}_{\text{std}} &= \int \frac{p(y|x)^2}{q(y|x)}  dy - 1, \\
\mathrm{Var}_{\text{new}} &= \frac{1}{\left( \int q(y|x)^2  dy \right)^2} \left( \int p(y|x)^2 q(y|x)  dy - \left( \int p(y|x) q(y|x)  dy \right)^2 \right).
\end{aligned}
}
\end{equation}

\subsection*{A.1 Variance Comparison in Discrete Space}

Since the action space of large models is discrete, this section discusses the variance difference $\Delta = \mathrm{Var}_{\text{std}} - \mathrm{Var}_{\text{new}}$ in discrete probability space. The integral expressions in the continuous case naturally transition to discrete summation forms:

\begin{itemize}
\item Replace continuous integrals $\int \cdot  dy$ with discrete summations $\sum_{i=1}^n$;
\item Replace probability density functions $p(y|x)$, $q(y|x)$ with probability masses $p_i$, $q_i$;
\item Maintain the structural form of variance expressions unchanged.
\end{itemize}

\paragraph{Notation and Setting}
Let the sample space be a finite set $\mathcal{X} = \{1, 2, \dots, n\}$, where $n \geq 2$.  
Let $p = (p_1, \dots, p_n)$ and $q = (q_1, \dots, q_n)$ be two probability distributions satisfying:
\begin{itemize}
\item $p_i > 0$, $\sum_{i=1}^n p_i = 1$,
\item $q_i > 0$, $\sum_{i=1}^n q_i = 1$.
\end{itemize}

Define the following four key quantities, corresponding to the integral terms in the continuous expressions:
\begin{align}
A &= \left( \sum_{i=1}^n q_i^2 \right)^2, \quad &\text{(corresponding to } \left( \int q(y|x)^2  dy \right)^2 \text{)} \\
B &= \left( \sum_{i=1}^n p_i q_i \right)^2, \quad &\text{(corresponding to } \left( \int p(y|x) q(y|x)  dy \right)^2 \text{)} \\
I_1 &= \sum_{i=1}^n \frac{p_i^2}{q_i}, \quad &\text{(corresponding to } \int \frac{p(y|x)^2}{q(y|x)}  dy \text{)} \\
I_2 &= \sum_{i=1}^n p_i^2 q_i, \quad &\text{(corresponding to } \int p(y|x)^2 q(y|x)  dy \text{)}
\end{align}

Accordingly, the variance difference can be written as:
\begin{equation}
\Delta = I_1 + \frac{B - A - I_2}{A}.
\end{equation}

\begin{lemma}[Range of Quantities]
Under the above setting, we have:
\begin{itemize}
\item $A \in \left[ \dfrac{1}{n^2},\ 1 \right]$,
\item $B \in [0,\ 1]$,
\item $I_1 \in [1,\ \infty)$,
\item $I_2 \in (0,\ 1]$.
\end{itemize}
\end{lemma}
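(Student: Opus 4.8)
The plan is to establish each of the eight bounds (one lower and one upper bound for each of $A$, $B$, $I_1$, $I_2$) from two elementary facts: (i) every coordinate of a probability vector lies in $(0,1]$, so $q_i^2 \le q_i$ and $p_i^2 \le p_i$, whence $\sum_i q_i^2 \le 1$ and $\sum_i p_i^2 \le 1$; and (ii) the Cauchy--Schwarz inequality $\bigl(\sum_i a_i b_i\bigr)^2 \le \bigl(\sum_i a_i^2\bigr)\bigl(\sum_i b_i^2\bigr)$, applied once with $b_i \equiv 1$ (which yields $\bigl(\sum_i a_i\bigr)^2 \le n\sum_i a_i^2$) and once with a tailored choice of $a_i,b_i$.

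For $A = \bigl(\sum_i q_i^2\bigr)^2$: fact (i) gives $\sum_i q_i^2 \le 1$, hence $A \le 1$; fact (ii) gives $\sum_i q_i^2 \ge \frac1n\bigl(\sum_i q_i\bigr)^2 = \frac1n$, hence $A \ge 1/n^2$, with equality at the uniform distribution. For $B = \bigl(\sum_i p_i q_i\bigr)^2$: all summands are positive, so $B \ge 0$; and $\sum_i p_i q_i \le \sum_i p_i = 1$ because $q_i \le 1$, so $B \le 1$. For $I_2 = \sum_i p_i^2 q_i$: all summands are positive, so $I_2 > 0$; and $p_i^2 q_i \le p_i^2 \le p_i$ by fact (i), so $I_2 \le \sum_i p_i = 1$; thus $I_2 \in (0,1]$.

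The only bound requiring the nontrivial form of Cauchy--Schwarz is the lower bound on $I_1 = \sum_i p_i^2/q_i$. Taking $a_i = p_i/\sqrt{q_i}$ and $b_i = \sqrt{q_i}$ gives $1 = \bigl(\sum_i p_i\bigr)^2 = \bigl(\sum_i a_i b_i\bigr)^2 \le \bigl(\sum_i a_i^2\bigr)\bigl(\sum_i b_i^2\bigr) = I_1 \cdot \sum_i q_i = I_1$, so $I_1 \ge 1$, with equality iff $p = q$. For the upper end there is no finite bound: fixing $p$ with $p_1 > 0$ and letting $q_1 \to 0^+$ (while rescaling the remaining $q_i$ to preserve $\sum_i q_i = 1$) sends $p_1^2/q_1 \to \infty$, so $I_1$ ranges over all of $[1,\infty)$.

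I do not anticipate a genuine obstacle: the statement is a bundle of standard estimates. The one point worth a sentence is that the hypotheses $q_i > 0$ and $n \ge 2$ prevent any coordinate from equalling $1$, so the upper bounds $A \le 1$, $B \le 1$, $I_2 \le 1$ are in fact strict and not attained; the lemma, however, only claims membership in the stated closed intervals, which is precisely what the arguments above deliver, and the downstream variance comparison uses only these one-sided inequalities, so non-attainment at the upper endpoints is harmless.
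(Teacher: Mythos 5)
Your proof is correct and follows essentially the same route as the paper's: the bound $\sum_i q_i^2 \ge 1/n$ via Cauchy--Schwarz with $b_i\equiv 1$ is the same estimate the paper obtains from the power mean inequality, and the remaining bounds ($q_i^2\le q_i$, $p_i^2\le p_i$, Cauchy--Schwarz for $I_1\ge 1$) match the paper's argument exactly, just spelled out in more detail. Your closing remark that the upper endpoints are not attained for $n\ge 2$ is a correct and harmless refinement the paper does not make.
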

\begin{proof}
\quad \\
\textbf{1. Range of $A$:} By the power mean inequality, $\sum q_i^2 \geq \frac{1}{n}$, and $\sum q_i^2 \leq 1$, thus $A \in [1/n^2, 1]$. \\
\textbf{2. Range of $B$:} $\sum p_i q_i \in (0, 1]$, thus $B \in [0, 1]$. \\
\textbf{3. Range of $I_1$:} By the Cauchy-Schwarz inequality, $I_1 \geq 1$, and it can approach infinity. \\
\textbf{4. Range of $I_2$:} Since $p_i^2 \leq p_i$, we have $I_2 \leq 1$, and $I_2 > 0$.
\end{proof}

% \noindent\textbf{Theorem 1.}
% Let $p, q$ be discrete probability distributions as defined above. Then there exists a constant $C = n^2 + 1$ such that:
% \begin{equation}
% \Delta \geq \exp\left( D_{\text{KL}}(p \| q) \right) - C.
% \end{equation}
% In particular, when $D_{\text{KL}}(p \| q) > \log(n^2 + 1)$, we have $\Delta > 0$.

\noindent\textbf{Theorem 1.}
Let $p, q$ be discrete probability distributions. Then there exists a constant $C$ such that:
\begin{equation}
\mathrm{Var}\left[\frac{p(y|x)}{q(y|x)}\right] - \mathrm{Var}\left[ \frac{p(y|x)}{\widehat{\mathbb{E}}_q[q(y|x)]}\right] \geq \exp\left( D_{\mathrm{KL}}(p \| q) \right) - C.
\end{equation}
In particular, when $D_{\mathrm{KL}}(p \| q) > \log C$, it holds that $\mathrm{Var}[\frac{p(y|x)}{q(y|x)}] > \mathrm{Var}[ \frac{p(y|x)}{\widehat{\mathbb{E}}_q[q(y|x)]}]$.

\begin{proof}
\textbf{Step 1.} From the fundamental inequality relationship between KL divergence and $\chi^2$ divergence (Pinsker's inequality):
\begin{equation}
D_{\mathrm{KL}}(p \| q) \leq \log \left( 1 + D_{\chi^2}(p \| q) \right),
\end{equation}
where the chi-square divergence is defined as:
\begin{equation}
D_{\chi^2}(p \| q) = \sum_{i=1}^n \frac{(p_i - q_i)^2}{q_i} = \sum_{i=1}^n \frac{p_i^2}{q_i} - 1 = I_1 - 1.
\end{equation}
Substituting yields:
\begin{equation}
D_{\mathrm{KL}}(p \| q) \leq \log(I_1),
\end{equation}
therefore:
\begin{equation}
I_1 \geq \exp\left( D_{\mathrm{KL}}(p \| q) \right).
\end{equation}
\textbf{Step 2.} From Lemma 1, we know that $A$, $B$, and $I_2$ satisfy the following bounds:
\begin{align}
A &\in [\frac{1}{n^2}, 1], \\
B &\in [0, 1], \\
I_2 &\in (0, 1].
\end{align}

Consider the lower bound of the expression $\dfrac{B - A - I_2}{A}$. To obtain its minimum value, we take:
\begin{itemize}
\item Minimum value of $B$: $B = 0$
\item Minimum value of $A$: $A = \frac{1}{n^2}$ 
\item Maximum value of $I_2$: $I_2 = 1$
\end{itemize}

Substituting yields:
\begin{equation}
\frac{B - A - I_2}{A} \geq \frac{0 - 1/n^2 - 1}{1/n^2} = -(n^2 + 1). 
\end{equation}
\textbf{Step 3.} Substituting inequalities (1) and (2) into the expression for $\Delta$:
\begin{equation}
\Delta = I_1 + \frac{B - A - I_2}{A} \geq \exp\left( D_{\mathrm{KL}}(p \| q) \right) - (n^2 + 1).
\end{equation}

When $D_{\mathrm{KL}}(p \| q) > \log(n^2 + 1)$, we have:
\begin{equation}
\exp\left( D_{\mathrm{KL}}(p \| q) \right) > n^2 + 1,
\end{equation}
thus:
\begin{equation}
\Delta > 0,
\end{equation}
i.e., $\mathrm{Var}_{\mathrm{std}} > \mathrm{Var}_{\mathrm{new}}$.
\end{proof}

\begin{corollary}
In discrete space, if $D_{\text{KL}}(p \| q) > \log(n^2 + 1)$, then:
\begin{equation}
\mathrm{Var}_{\text{std}} > \mathrm{Var}_{\text{new}},
\end{equation}
i.e., the new estimator has smaller variance. 
\end{corollary}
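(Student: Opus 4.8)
Since this Corollary is exactly the finite-support case of Theorem~\ref{theory:1} with the constant made explicit, the plan is to run the three-step argument behind that theorem while tracking the value of $C$. Write $\mathrm{Var}_{\mathrm{std}}$ and $\mathrm{Var}_{\mathrm{new}}$ for the two variances in the statement, and set $I_1 = \sum_{i=1}^n p_i^2/q_i$, $I_2 = \sum_{i=1}^n p_i^2 q_i$, $A = (\sum_{i=1}^n q_i^2)^2$, and $B = (\sum_{i=1}^n p_i q_i)^2$. Computing each variance under sampling from $q$ gives $\mathrm{Var}_{\mathrm{std}} = I_1 - 1$ and $\mathrm{Var}_{\mathrm{new}} = (I_2 - B)/A$, so that
\begin{equation}
\Delta := \mathrm{Var}_{\mathrm{std}} - \mathrm{Var}_{\mathrm{new}} = I_1 + \frac{B - A - I_2}{A}.
\end{equation}
It thus suffices to bound $I_1$ from below by $\exp(D_{\mathrm{KL}}(p\|q))$ and the correction term $(B-A-I_2)/A$ from below by $-(n^2+1)$.

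For the first bound I would invoke the standard comparison between KL and $\chi^2$ divergence: Jensen's inequality for $\log$ gives $D_{\mathrm{KL}}(p\|q) = \sum_i p_i\log(p_i/q_i) \le \log\sum_i p_i^2/q_i = \log I_1$, hence $I_1 \ge \exp(D_{\mathrm{KL}}(p\|q))$; this uses $q_i > 0$ for all $i$, which is part of the hypothesis. For the second bound I would use only elementary estimates: $B \ge 0$, $I_2 \le 1$ (since $p_i^2 \le p_i$), and $A \ge 1/n^2$ (power-mean inequality, $\sum_i q_i^2 \ge 1/n$). Dropping $B$ in the numerator and then using $I_2 \le 1 \le n^2 A$ gives $(B-A-I_2)/A \ge -(A+I_2)/A = -1 - I_2/A \ge -(n^2+1)$.

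Combining the two bounds yields $\Delta \ge \exp(D_{\mathrm{KL}}(p\|q)) - (n^2+1)$, which is the inequality of Theorem~\ref{theory:1} with $C = n^2+1$. Under the Corollary's hypothesis $D_{\mathrm{KL}}(p\|q) > \log(n^2+1)$ we have $\exp(D_{\mathrm{KL}}(p\|q)) > n^2+1$, so $\Delta > 0$, i.e. $\mathrm{Var}_{\mathrm{std}} > \mathrm{Var}_{\mathrm{new}}$, which is the claim.

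There is no genuine obstacle here, since the Corollary is essentially the ``in particular'' clause of Theorem~\ref{theory:1} with $C$ instantiated; the content is all in Theorem~\ref{theory:1}. The one point to state carefully is that bounding $B$, $I_2$, and $A$ separately yields a valid \emph{lower} bound on $\Delta$ even though the extremal values $B = 0$, $I_2 = 1$, $A = 1/n^2$ are not jointly attainable (e.g. $A = 1/n^2$ forces $q$ uniform, which forces $I_2 \le 1/n$); consequently $C = n^2+1$ is very loose and, being $n$-dependent, is a ``constant'' only once the vocabulary size is fixed. Making it sharper --- for instance to match the threshold seen empirically in Figure~\ref{fig:EqEvsPQ} --- would require analyzing the joint feasible region of $(A,B,I_2)$, coupling $I_2$ with $A$ via Cauchy--Schwarz rather than bounding each coordinate in isolation.
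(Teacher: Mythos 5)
Your proposal is correct and follows essentially the same route as the paper: the identical decomposition $\Delta = I_1 + (B - A - I_2)/A$, the bound $I_1 \geq \exp(D_{\mathrm{KL}}(p\|q))$ via the KL--$\chi^2$ comparison, and the coordinate-wise bounds $B \geq 0$, $I_2 \leq 1$, $A \geq 1/n^2$ giving $C = n^2+1$. Your justification of the KL step by Jensen's inequality is in fact cleaner than the paper's (mis)attribution to Pinsker, and your closing remark about the extremal values not being jointly attainable matches the paper's own note following the corollary.
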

\textcolor{orange}{\textbf{It should be noted that $A$ only attains the value $\frac{1}{n^2}$ when $q$ follows a uniform distribution.}} In practice, when large models generate responses, the distribution tends to be long-tailed, so the value of $A$ is much greater than $\frac{1}{n^2}$, and the constant $C_{real} \ll  \log(n^2+1)$. For example, we randomly generated 128 tokens using Qwen3-1.7B ($n=151936$), and the standard variance and average of $A$ was $0.432_{\pm 0.36} \gg \frac{1}{n^2}$.

\subsection{Bias and Variance Analysis}
\subsubsection{Bias Analysis}
\paragraph{The end-to-end bias of GEPO}
We compute the following bias of GEPO and GSPO:

\begin{equation}
\begin{aligned}
\text{Bias}(\text{GEPO}) &=\left |\mathbb{E}_p\left[A(x,y) \right] - \mathbb{E}_q\left[ \frac{p(y|x)}{ \mathbb{E}_q q} A(x,y) \right] \right | \\
&= \left | 0-\mathbb{E}_q\left[ \frac{p(y|x)}{ \mathbb{E}_q q} A(x,y) \right] \right | \approx \left | \frac{1}{G} \sum_{i=1}^{G} \left[ \frac{p(y^i|x)}{ \mathbb{E}_q q} A(x,y^i) \right] \right |.
    \end{aligned}
\end{equation}
\begin{equation}
\begin{aligned}
\text{Bias}(\text{GSPO})  &=  \left |\mathbb{E}_p\left[A(x,y) \right] - \mathbb{E}_q\left[ \frac{p(y|x)}{q(y|x)} A(x,y) \right] \right | \\ 
&= \left |0 - \mathbb{E}_q\left[ \frac{p(y|x)}{q(y|x)} A(x,y) \right] \right | \approx \left | \frac{1}{G} \sum_{i=1}^{G} \left[ \frac{p(y^i|x)} {q(y^i|x)} A(x,y^i) \right] \right |.
    \end{aligned}
\end{equation}
\begin{figure}[h]
    \centering        
    \begin{subfigure}{0.4\linewidth}
        \includegraphics[width=\linewidth]{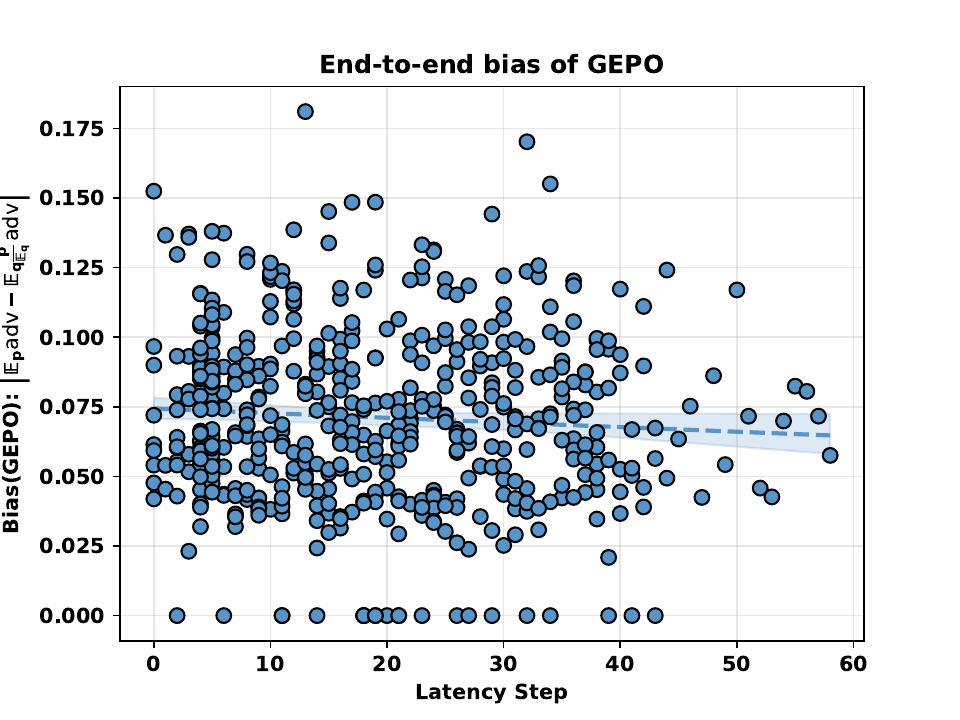}
        \caption{Bias-GEPO vs Latency}
        \label{fig:bias_gepo_diff}
    \end{subfigure}
    \begin{subfigure}{0.4\linewidth}
        \includegraphics[width=\linewidth]{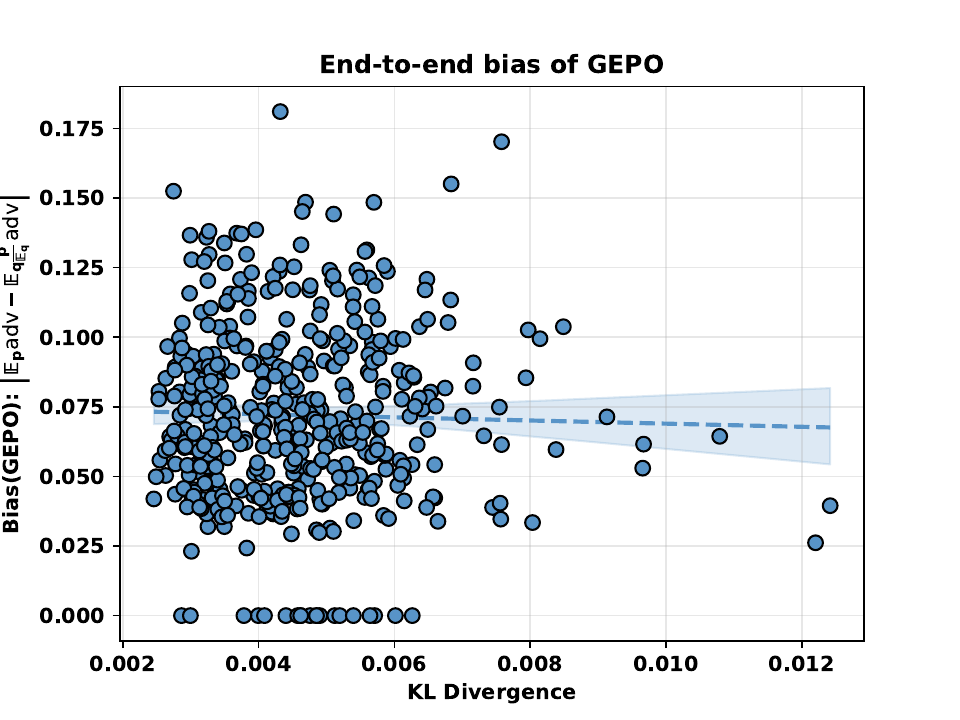}
        \caption{Bias-GEPO vs KL}
        \label{fig:bias_gepo_kl}
    \end{subfigure}    \\
        \begin{subfigure}{0.4\linewidth}
        \includegraphics[width=\linewidth]{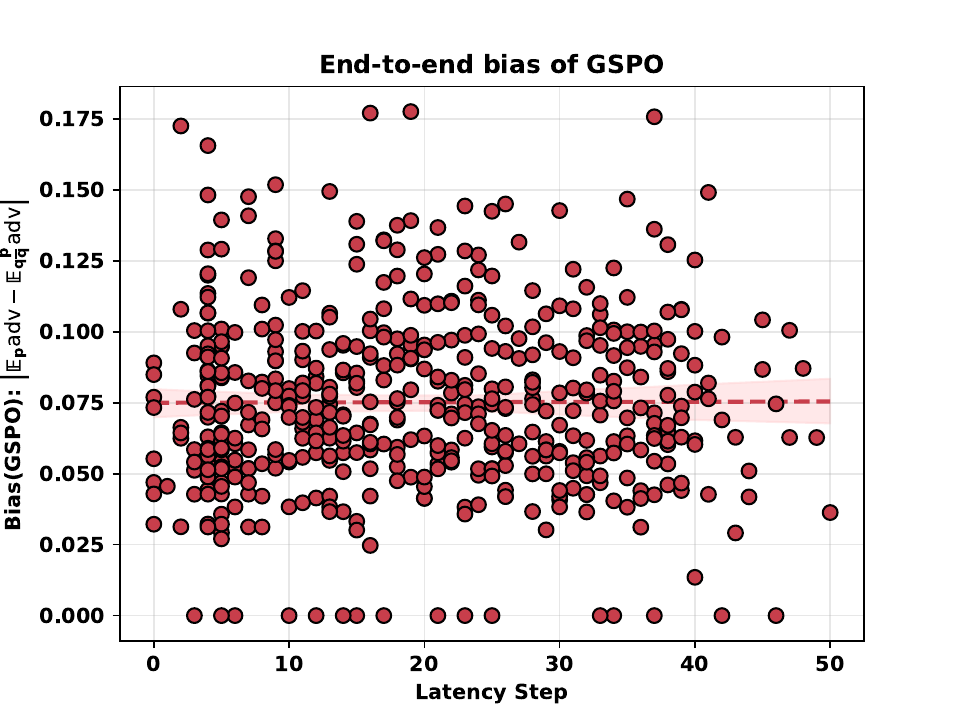}
        \caption{Bias-GSPO vs Latency}
        \label{fig:bias_gspo_diff}
    \end{subfigure}
    \begin{subfigure}{0.4\linewidth}
        \includegraphics[width=\linewidth]{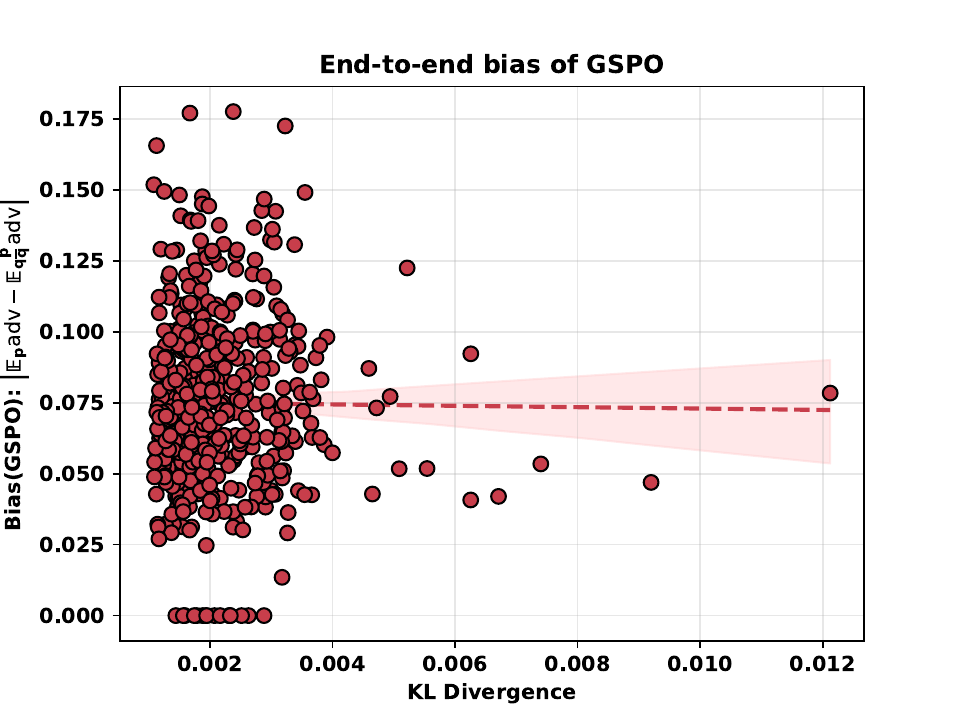}
        \caption{Bias-GSPO vs KL}
        \label{fig:bias_gspo_kl}
    \end{subfigure} 
    \caption{The Bias of GEPO and GSPO. Over 512 global training steps, $Bias(GEPO)$ has a mean of 0.072 and variance of 0.001, while $Bias(GSPO)$ has a mean of 0.075 and variance of 0.001.}
    \label{fig:Bias}
\end{figure}

Here, the bias of GSPO serves as a baseline for assessing the relative magnitude of the bias in GEPO. The use of the $\approx$ symbol is due to our use of Monte Carlo estimation for this expected statistic.
Figure~\ref{fig:Bias} plots the bias of GEPO and GSPO according to latency steps and KL-divergence.
% The average and variance of $Bias(GEPO)$ and $Bias(GSPO)$ during 640 global trainings steps are:

% |       | AVG    | VAR     |
% |-------|--------|---------|
% | $Bias(GEPO)$  | 0.1849 | 0.0049  |
% | $Bias(GSPO)$  | 0.1824| 0.0048  |

\paragraph{The theoretical bound of GEPO bias}
We derive an upper bound for the \textbf{bias} introduced by GEPO’s group expectation importance weighting, specifically the absolute difference between the standard importance sampling (IS) estimator and GEPO’s estimator \textbf{in expectation}, under the assumption that the advantage function $A(x,y)$ is \textbf{zero-mean under the target policy $p$} (i.e., $\mathbb{E}_p[A] = 0$).

\begin{theorem}[Bias of GEPO]\label{theory:2}
Let $p(y|x)$ and $q(y|x)$ be discrete probability distributions, and let $A(x,y)$ be a bounded advantage function. The bias of GEPO can be bounded by $\frac{\|p\|_2}{\|q\|_2}$:
\begin{equation}
\text{Bias}(\text{GEPO}) = \left| \mathbb{E}_p\left[A(x,y) \right] - \mathbb{E}_q\left[ \frac{p(y|x)}{ \mathbb{E}_q[q] } A(x,y) \right] \right| < \frac{\|p\|_2}{\|q\|_2},
\end{equation}
where  $\frac{\|p\|_2}{\|q\|_2}$ denotes the $L^2$-norm ratio of $p$ and $q$.
\end{theorem}

In LLM reinforcement learning, the ratio $\frac{\|p\|_2}{\|q\|_2}$—which upper-bounds the bias introduced by GEPO’s group expectation weighting—remains modest in practice. Although this ratio can theoretically reach $\sqrt{n}$ when $p$ is one-hot and $q$ is uniform over $n$ actions, real-world LLM policies are highly non-uniform due to top-$k$/top-$p$ sampling and sharp output distributions. Empirical measurements in the paper (e.g., Appendix A) show that $\|q\|_2^2 \approx 0.432$ for Qwen3-1.7B, implying $\|q\|_2 \approx 0.657$, and $p$ typically exhibits similar concentration. Consequently, $\frac{\|p\|_2}{\|q\|_2} \approx 1$ in general, even under high policy divergence, ensuring that GEPO’s bias remains mild and controllable—far from the catastrophic scale implied by worst-case theoretical bounds.

\paragraph{The Proof Precess}

\paragraph{Step 1: Define Estimators}

We adopt the notation and meanings from Section~\ref{sec:background}. Without loss of generality, we assume that  $|A| \leq 1$.

We define:
\begin{itemize}
    \item Standard IS estimator expectation:
    \begin{equation}
    \mu_1 = \mathbb{E}_q\left[ \frac{p(y|x)}{q(y|x)} \cdot A(x,y) \right] = \sum_y p(y|x)\cdot A(x,y) = \mathbb{E}_p[A] = 0.
    \end{equation}
    \item GEPO estimator expectation (using $\hat{q} = \mathbb{E}_q[q] = \sum_y q(y|x)^2 = \|q\|_2^2$):
    \begin{equation}
    \mu_2 = \mathbb{E}_q\left[ \frac{p(y|x)}{ \mathbb{E}_q[q] } \cdot A(x,y) \right] = \frac{1}{\|q\|_2^2} \sum_y p(y|x) q(y|x) \cdot A(x,y).
    \end{equation}
\end{itemize}

Thus, the \textbf{bias} is:
\begin{equation}
\text{Bias}(\text{GEPO})  =|\mu_1 - \mu_2| = |\mu_2| = \left| \frac{1}{\|q\|_2^2} \sum_y p(y|x) q(y|x) \cdot A(x,y) \right|.
\end{equation}

\paragraph{Step 2: Apply Cauchy–Schwarz Inequality}

We bound the absolute value of the inner product by Cauchy–Schwarz:
\begin{equation}
\left| \sum_y p(y|x) q(y|x) \cdot A(x,y) \right| 
\leq \sqrt{ \sum_y (p(y|x)  \cdot A(x,y))^2 } \cdot \sqrt{ \sum_y q(y|x)^2 }.
\end{equation}

Since $|A(x,y)| \leq 1$, we have $A(x,y)^2 \leq 1$, so:
\begin{equation}
\sum_y (p(y|x)  \cdot A(x,y))^2 < \sum_y p(y|x)^2 = \|p\|_2^2.
\end{equation}

The reason we use $<$ instead of $\leq$ here is that the case ``advantage $|A(x,y)| = 1$ for all possible LLM responses $y$'' cannot occur.

Therefore:
\begin{equation}
\left| \sum_y p(y|x) q(y|x) \cdot A(x,y) \right| < \|p\|_2 \cdot \|q\|_2.
\end{equation}

\paragraph{Step 3: Combine with Denominator}

Now plug into the expression for $|\mu_2|$:
\begin{equation}
|\mu_2| = \left| \frac{1}{\|q\|_2^2} \sum_y p(y|x) q(y|x) \cdot A(x,y) \right| < \frac{ \|p\|_2 \cdot \|q\|_2 }{ \|q\|_2^2 } = \frac{ \|p\|_2 }{ \|q\|_2 }.
\end{equation}

\paragraph{Finally: }
\begin{equation}
\text{Bias}(\text{GEPO})  < \frac{\|p\|_2}{\|q\|_2}.
\end{equation}
This upper bound quantifies the \textbf{bias} of GEPO’s estimator relative to the unbiased standard IS estimator. In large language models, both $\|p\|_2$ and $\|q\|_2$ are typically on the order of $\sqrt{0.4} \approx 0.63$, so their ratio remains close to 1—indicating \textbf{mild, controllable bias}.

\subsubsection{Futher Variance Analysis}

\paragraph{Variance Analysis of Advantage-Weighted Importance Sampling}

We propose the extended Theorem~\ref{theory:1} for $\mathrm{Var}\left[\frac{p(y|x)}{\mathbb{E}_q[q(y|x)]} \cdot A(x,y) \right]$ as follows:

\begin{theorem}[Variance of GEPO]\label{theory:3}
Let $p(y|x)$ and $q(y|x)$ be discrete probability distributions, and let $A(x,y)$ be a bounded advantage function. 
Assume that on the support of interest, $|A(x,y)| \geq A_{\min} > 0$, and define $A_{\max} = \max_y |A(x,y)|$.
Then there exists a constant $C_{\mathrm{adv}} = \dfrac{A_{\max}^2}{\|q\|_2^2}$ such that:
\begin{equation}
\mathrm{Var}_q\!\left[ A(x,y) \cdot \frac{p(y|x)}{q(y|x)} \right] 
- 
\mathrm{Var}_q\!\left[ A(x,y) \cdot \frac{p(y|x)}{\mathbb{E}_q[q(y|x)]} \right] 
\geq 
\textcolor{black}{\boxed{
    \color{black} A_{\min}^2 \cdot \exp\ \bigl( D_{\mathrm{KL}}(p \| q) \bigr)
}} 
- C_{\mathrm{adv}}.
\end{equation}
In particular, when $D_{\mathrm{KL}}(p \| q) > \log\!\left( \dfrac{C_{\mathrm{adv}}}{A_{\min}^2} \right)$, it holds that
\[
\mathrm{Var}_q\!\left[ A(x,y) \cdot \frac{p(y|x)}{q(y|x)} \right] 
> 
\mathrm{Var}_q\!\left[ A(x,y) \cdot \frac{p(y|x)}{\mathbb{E}_q[q(y|x)]} \right].
\]
\end{theorem}

\paragraph{About $A_{min}>0:$} We first note that any sample $y$ with $A(x,y) = 0$ contributes \textbf{zero gradient} to the policy update and \textbf{requires no variance control}, as it does not affect learning. Therefore, without loss of generality, we can remove the training samples for which $A(x,y)=0$. Our variance analysis focuses only on the support where $|A(x,y)| > 0$. Since the advantage function is assumed bounded and not identically zero, we can further restrict to the subset where $A(x,y)^2 \geq A_{\min}^2 > 0$ for some constant $A_{\min}$. Within the \textbf{Group Wise Advantage} method like GRPO, $A_{\min} = 1/G$, where $G$ is the group size.

% Then there exists a constant $C_{\text{adv}}$ such that:
% \begin{equation}
% \mathrm{Var}_q\left(A(x,y)\cdot\frac{p(y|x)}{q(y|x)}\right) - \mathrm{Var}_q\left(A(x,y)\cdot\frac{p(y|x)}{\mathbb{E}_q[q(y|x)]}\right) \geq A_{\min}^2 \cdot \exp(D_{\mathrm{KL}}(p \| q)) - C_{\text{adv}},
% \end{equation}
% where $A_{\min} = \min_y |A(x,y)|>0$, $A_{\max} = \max_y |A(x,y)|$, $C_{\text{adv}} = \frac{A_{\max}^2}{\| q\|_2^2}$ is a constant (specified later), and $\mathbb{E}_q[q(y|x)] = \int q(y|x)^2 \, dy = \|q\|_2^2$.
The theoretical proof is as follows: 

Let $p(y|x)$, $q(y|x)$ be discrete probability distributions and $A(x,y)$ be a bounded advantage function satisfying $\mathbb{E}_p[A(x,y)] = 0$. For notational simplicity, we omit the dependence on $x$.

\paragraph{Step 1: Variance Expressions}

Following the notation in Appendix A, we define: $Z_1 = A(x,y) \cdot \frac{p(y|x)}{q(y|x)}$, $Z_2 = A(x,y) \cdot \frac{p(y|x)}{\| q\|_2^2}$, where $\| q\|_2^2 = \mathbb{E}_q[q] = \int q(y|x)^2 \, dy$.

The variances are:
% \begin{equation}
% \mathrm{Var}_q(Z_1) = \mathbb{E}_q\left[\left(A(x,y) \cdot \frac{p(y|x)}{q(y|x)}\right)^2\right] - \left(\mathbb{E}_q\left[A(x,y) \cdot \frac{p(y|x)}{q(y|x)}\right]\right)^2 = \int A(x,y)^2 \frac{p(y|x)^2}{q(y|x)} \, dy - \left(\int A(x,y) p(y|x) \, dy\right)^2,
% \end{equation}
% \begin{equation}
% \mathrm{Var}_q(Z_2) = \frac{1}{\| q\|_2^2} \left[\mathbb{E}_q[A(x,y)^2 p(y|x)^2] - \left(\mathbb{E}_q[A(x,y) p(y|x)]\right)^2\right] = \frac{1}{\| q\|_2^2} \left[\int A(x,y)^2 p(y|x)^2 q(y|x) \, dy - \left(\int A(x,y) p(y|x) q(y|x) \, dy\right)^2\right].
% \end{equation}

\begin{equation}
    \begin{aligned}
        \mathrm{Var}_q(Z_1) &= \mathbb{E}_q\left[\left(A(x,y) \cdot \frac{p(y|x)}{q(y|x)}\right)^2\right] - \left(\mathbb{E}_q\left[A(x,y) \cdot \frac{p(y|x)}{q(y|x)}\right]\right)^2 \\
&= \int A(x,y)^2 \frac{p(y|x)^2}{q(y|x)} \, dy - \left(\int A(x,y) p(y|x) \, dy\right)^2,
    \end{aligned}
\end{equation}

\begin{equation}
    \begin{aligned}
\mathrm{Var}_q(Z_2) &= \frac{1}{\| q\|_2^2} \left[\mathbb{E}_q[A(x,y)^2 p(y|x)^2] - \left(\mathbb{E}_q[A(x,y) p(y|x)]\right)^2\right] \\
&= \frac{1}{\| q\|_2^2} \left[\int A(x,y)^2 p(y|x)^2 q(y|x) \, dy - \left(\int A(x,y) p(y|x) q(y|x) \, dy\right)^2\right].
    \end{aligned}
\end{equation}

% \paragraph{Step 2: Variance Difference}

% Define $\Delta_{\text{adv}} = \mathrm{Var}_q(Z_1) - \mathrm{Var}_q(Z_2)$:
% \begin{equation}
% \Delta_{\text{adv}} = \left[\int A(x,y)^2 \frac{p(y|x)^2}{q(y|x)} \, dy - \left(\int A(x,y) p(y|x) \, dy\right)^2\right] - \frac{1}{\| q\|_2^2} \left[\int A(x,y)^2 p(y|x)^2 q(y|x) \, dy - \left(\int A(x,y) p(y|x) q(y|x) \, dy\right)^2\right].
% \end{equation}

% Rearranging terms:
% \begin{equation}
% \Delta_{\text{adv}} = \underbrace{\int A(x,y)^2 \frac{p(y|x)^2}{q(y|x)} \, dy}_{J_1} - \underbrace{\frac{1}{\| q\|_2^2} \int A(x,y)^2 p(y|x)^2 q(y|x) \, dy}_{J_2} - \underbrace{\left(\int A(x,y) p(y|x) \, dy\right)^2}_{B_1} + \underbrace{\frac{1}{\| q\|_2^2} \left(\int A(x,y) p(y|x) q(y|x) \, dy\right)^2}_{B_2}.
% \end{equation}
\paragraph{Step 2: Variance Difference}

Define $\Delta_{\text{adv}} = \mathrm{Var}_q(Z_1) - \mathrm{Var}_q(Z_2)$:
\begin{multline}
\Delta_{\text{adv}} = \left[\int A(x,y)^2 \frac{p(y|x)^2}{q(y|x)} \, dy - \left(\int A(x,y) p(y|x) \, dy\right)^2\right] \\
- \frac{1}{\| q\|_2^2} \left[\int A(x,y)^2 p(y|x)^2 q(y|x) \, dy - \left(\int A(x,y) p(y|x) q(y|x) \, dy\right)^2\right].
\end{multline}

Rearranging terms:
\begin{multline}
\Delta_{\text{adv}} = \underbrace{\int A(x,y)^2 \frac{p(y|x)^2}{q(y|x)} \, dy}_{J_1} 
- \underbrace{\frac{1}{\| q\|_2^2} \int A(x,y)^2 p(y|x)^2 q(y|x) \, dy}_{J_2} \\
- \underbrace{\left(\int A(x,y) p(y|x) \, dy\right)^2}_{B_1} 
+ \underbrace{\frac{1}{\| q\|_2^2} \left(\int A(x,y) p(y|x) q(y|x) \, dy\right)^2}_{B_2}.
\end{multline}

For brevity, we write:
\begin{equation}
\Delta_{\text{adv}} = J_1 - J_2 - B_1 + B_2.
\end{equation}

\paragraph{Step 3: Lower Bound for $J_1$}

Under the restriction that $A(x,y)^2 \geq A_{\min}^2 > 0$, we have:
\begin{equation}
J_1 = \int A(x,y)^2 \frac{p(y|x)^2}{q(y|x)} \, dy \geq A_{\min}^2 \int \frac{p(y|x)^2}{q(y|x)} \, dy = A_{\min}^2 \cdot I_1,
\end{equation}
where $I_1 = \int \frac{p(y|x)^2}{q(y|x)} \, dy$ as defined in Appendix A.

From the relationship between KL and $\chi^2$ divergences:
\begin{equation}
D_{\mathrm{KL}}(p \| q) \leq \log(I_1) \quad \implies \quad I_1 \geq \exp(D_{\mathrm{KL}}(p \| q)),
\end{equation}
and therefore:
\begin{equation}
J_1 \geq A_{\min}^2 \cdot \exp(D_{\mathrm{KL}}(p \| q)).
\end{equation}

\paragraph{Step 4: Bounding the Remaining Terms}

For $J_2$, since $|A(x,y)| \leq A_{\max}$ (advantages are bounded in practice; for GRPO, $A_{\max} \leq 1$):
\begin{equation}
J_2 = \frac{1}{\| q\|_2^2} \int A(x,y)^2 p(y|x)^2 q(y|x) \, dy \leq \frac{A_{\max}^2}{\| q\|_2^2} \int p(y|x)^2 q(y|x) \, dy \leq \frac{A_{\max}^2}{\| q\|_2^2},
\end{equation}
where the second inequality follows from Lemma~1 on page~15 of our paper:
\[
\sum_{i=1}^n p_i^2 q_i \leq 1.
\]

For the bias terms $B_1$ and $B_2$, since $\mathbb{E}_p[A] = 0$, we have:
\begin{equation}
|B_1| = \left|\int A(x,y) p(y|x) \, dy\right|^2 = 0,
\end{equation}
\begin{equation}
|B_2| = \left|\frac{1}{\| q\|_2^2} \left(\int A(x,y) p(y|x) q(y|x) \, dy\right)^2\right| \geq 0.
\end{equation}

\paragraph{Step 5: Combining Bounds}

Combining all terms:
\begin{equation}
\Delta_{\text{adv}} = J_1 - J_2 - B_1 + B_2 \geq A_{\min}^2 \cdot \exp(D_{\mathrm{KL}}(p \| q)) - \frac{A_{\max}^2}{\| q\|_2^2} - 0 + 0.
\end{equation}

Note that $B_2$ appears with a positive sign in $\Delta_{\text{adv}}$, which improves the lower bound. Since $B_2$ is non-negative and bounded, we can define a constant $C_{\text{adv}}$ that absorbs all bounded subtractions:
\begin{equation}
C_{\text{adv}} = \frac{A_{\max}^2}{\| q\|_2^2}.
\end{equation}

In practice, as noted in Appendix A, $\| q\|_2^2 = \mathbb{E}_q[q] \approx 0.432$ for language model distributions and $A_{\max} = \max |A| \leq 1$, making $C_{\text{adv}}$ a reasonable constant. Therefore:
\begin{equation}
\Delta_{\text{adv}} \geq A_{\min}^2 \cdot \exp(D_{\mathrm{KL}}(p \| q)) - C_{\text{adv}}.
\end{equation}

This completes the proof. The result shows that GEPO's advantage-weighted importance sampling maintains the exponential variance reduction property of Theorem~\ref{theory:1}. 

\section{Supplementary experiments: Part I}

\subsection{Implementation Details}\label{sec:expdetail}

All experiments use the Qwen3-1.7B/8B model with a maximum input length of 768 and output length of 2048/4096 tokens under both think and no-think mode, limited by computational constraints and low token efficiency~\citep{GFPO} (reward/length) at full context length\footnote{This 2k/4k-token limit balances cost and efficiency. Longer outputs increase memory and training time, making high-latency experiments impractical. Crucially, as shown in Figure~\ref{fig:lengthVS}(c), “think mode” yields a high “Overlength Ratio”—most long outputs are truncated and wasted. \textbf{Recent studies \citep{GFPO,GRPOLambda} consistently show that reasoning does not require excessively long chains of thought; redundant thinking merely wastes resources.} Thus, 2048/4096 tokens ensure fair, stable, and manageable experiments.}. Training follows a GRPO-like algorithm with a learning rate of $1 \times 10^{-6}$, 3\% linear warmup, per-device batch size 8, and gradient accumulation of 8, with gradient checkpointing enabled for memory efficiency. To model network latency in heterogeneous environments, we introduce a log-normal delay simulator bounded between 60 and 1800 seconds (99.5\% CI), with default delay at 60 seconds and policy staleness varied across 0–64 effective steps. For online training, KL divergence is not used, in order to better evaluate the training stability of the algorithms. For heterogeneous settings, CPPO-KL~\citep{cppo} loss with coefficient 0.005 is applied. 
Another reason for using CPPO-KL is memory efficiency, as it does not require a separate reference model.
Rollouts are generated using vLLM with 8 parallel responses per prompt, and each run lasts 3 epochs, with metrics logged via Weights \& Biases.
Due to significant discrepancies often arising between the log-probabilities computed by vLLM and those computed during the forward pass under FSDP—which can lead to training instability or divergence~\footnote{This blog post describes the issue of training instability (or training crashes) caused by vLLM. \url{https://fengyao.notion.site/off-policy-rl}}, we only use vLLM for sampling. An additional forward pass is then performed to recompute the log-probabilities.

The system prompt used in the experiments is shown in Figure~\ref{fig:syspmt}.
\begin{figure}[H]
    \centering
    \includegraphics[width=1\linewidth]{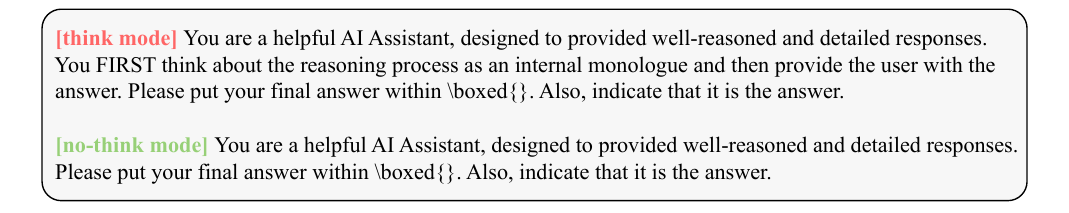}
    \caption{System prompt of all trainings in our experiments.}
\label{fig:syspmt}
\end{figure}

\subsection{Baselines}
The baseline methods compared in our experiments are as follows:
\paragraph{On-Policy Baselines:}
\begin{itemize}
    \item Group Relative Policy Optimization (GRPO)~\citep{GRPO} is a reinforcement learning algorithm that enhances mathematical reasoning in LLMs by estimating advantages through group-relative reward normalization—comparing responses within a group to the same query—thereby eliminating the need for a separate value network and reducing memory overhead compared to PPO.
    \item Dr. GRPO~\citep{drGRPO} is a debiased variant of GRPO that removes the per-response length normalization and per-question reward standard deviation normalization, thereby eliminating optimization biases that artificially inflate response length and improving token efficiency while preserving reasoning performance.
    \item Beta Normalization Policy Optimization (BNPO)~\citep{BNPO} is a novel reinforcement learning algorithm that dynamically normalizes binary rewards using an adaptively parameterized Beta distribution to reduce gradient variance and enhance training stability for large language models.
    \item Group Sequence Policy Optimization (GSPO)~\citep{GSPO} is a novel reinforcement learning algorithm for large language models that defines importance ratios based on sequence likelihood and performs sequence-level clipping and optimization, thereby achieving superior training stability, efficiency, and performance compared to token-level methods like GRPO.
\end{itemize}

\subsection{Supplementary Results of Main Experiments}\label{sec:SuppOfMain}
As shown in Table~\ref{tab:main_result_think}, GEPO outperforms all baselines—including GRPO and GSPO—in both best and final performance across zero-delay and high-delay settings, demonstrating superior effectiveness and training stability. 
A critical observation from the results is that, although GSPO’s technical report claims improved stability, we only observe GSPO to be more stable than GRPO under the Online RL setting. 
\begin{table}[htbp]
\centering
\caption{Performance of GEPO and baseline methods under Online RL and Hetero RL scenarios  (2k limiation).}
\label{tab:main_result_think}
\small
\setlength{\tabcolsep}{4.7pt} % Adjust column spacing to avoid excessive width
\begin{tabular}{l *{11}{c}}
\toprule
\multirow{2}{*}{\textbf{Method}} 
    & \multicolumn{2}{c}{\textbf{AMC2023}} 
    & \multicolumn{2}{c}{\textbf{AIME2024}} 
    & \multicolumn{2}{c}{\textbf{AIME2025}} 
    & \multicolumn{2}{c}{\textbf{MATH500}}
    & \multicolumn{2}{c}{\textbf{Average}} \\
\cmidrule(lr){2-3} \cmidrule(lr){4-5} \cmidrule(lr){6-7} \cmidrule(lr){8-9} \cmidrule(lr){10-11}
    & \textbf{Best} & \textbf{Last} 
    & \textbf{Best} & \textbf{Last} 
    & \textbf{Best} & \textbf{Last} 
    & \textbf{Best} & \textbf{Last}
    & \textbf{Best} & \textbf{Last} \\
\midrule
\rowcolor{gray!25} Qwen3-1.7B & 25.6 & -   & 1.6   & -   & 3.9   & -   & 54.7   & - & 21.5 & -  \\
% \rowcolor{gray!50}  Qwen3-1.7B (no-think) & 42.8 & -   & 10.2   & -   & 9.4  & -   & 70.2   & - & 33.2& -  \\
\midrule
\multicolumn{11}{c}{\textbf{Max Tolerable Delay 0 (Online RL)} } \\
\midrule
BNPO & 54.3 & 0.0   & 18.4   & 0.0   & 19.1   & 0.0   & 78.7   & 0.0 & 42.6 & 0.0  \\
Dr.GRPO & 53.4 & 14.3   & 19.1   & 1.6   & 18.8   & 2.0   & 78.6   & 35.9 & 42.5 & 13.5 \\
GRPO & 56.3 & 23.4   & 20.7   & 0.4   & 19.9   & 2.3   & 79.8   & 49.7 & 44.2 & 19.0  \\
GSPO & 54.1 & 27.8 & \textbf{23.8}  & 3.1   & \textbf{20.7}   & 4.3   & 79.9   & 62.1 & 44.6 & 24.3   \\
% $\Delta \mathbf{L}$ Norm & 55.0 & 0.9 & 23.0& 0.0 &20.7 & 0.0 & 79.7 & 11.3 & 44.6 & 3.1\\
GEPO (ours) & \textbf{56.9} & \textbf{56.9} & 21.9 & \textbf{16.4} & 20.3 & \textbf{14.1} & \textbf{80.4} & \textbf{78.1} & \textbf{44.9} & \textbf{41.4} \\
\midrule
\multicolumn{11}{c}{\textbf{Max Tolerable Delay 64 (Hetero RL)}} \\
\midrule
BNPO & 45.0 & 43.1 & 12.1 & 11.3 & 12.5 & 10.1 & 71.1 & 69.3 & 35.2 & 33.5 \\
Dr.GRPO & 48.4 & 48.4 & 17.2 & 17.2 & 14.8 & 14.8 & 73.9 & 73.9 & 38.6 & 38.6\\
GRPO  & 46.6 & 46.6 & 19.1 & 14.5 & 14.8 & 14.8 & 74.9 & 74.9 & 38.9 & 37.7\\
GSPO   &\textbf{54.4} &23.8& 17.6&1.6& 17.6& 2.7 &78.2 &55.6 & 42.0 & 20.9\\
% $\Delta \mathbf{L}$ Norm  \\
GEPO (ours) & 53.8 & \textbf{53.8} & \textbf{21.9}  & \textbf{21.9} & \textbf{18.8}  & \textbf{18.8} & \textbf{79.6} & \textbf{79.6}  & \textbf{43.5} & \textbf{43.5}\\
\bottomrule
\end{tabular}
\normalsize
\end{table}

\subsection{Comparison of Think and Non-think Mode}

In \textit{non-think mode}, all methods struggle with exploration, shown by consistently lower \texttt{last} scores vs. \texttt{best} — indicating late-stage instability.

Notably:
\begin{itemize}
\item \textbf{GEPO (ours)} dominates across all benchmarks and delays in both \texttt{best} and \texttt{last}, proving superior efficiency and stability — even without thinking steps.
    \item At \textbf{zero delay}, GEPO beats top baseline (GSPO) by \textbf{+3.7} (\texttt{best}) and \textbf{+20.8} (\texttt{last}), resisting early collapse.
    \item With \textbf{delay=64}, baselines improve (e.g., Dr.GRPO: 0.0 → 33.4), showing slack aids stability. GEPO still leads (\texttt{last} = 38.0 vs. 35.7).
    \item Vanilla \textbf{Qwen3-1.7B} (\texttt{best}=33.2) is outperformed by all RL methods — confirming RL’s value even without reasoning.
\end{itemize}
In short: while non-think mode limits reasoning, \textbf{GEPO delivers unmatched stability and final performance}, ideal for latency-sensitive or real-time settings.

\begin{table}[htbp]
\centering
\caption{Performance comparison using Qwen3-1.7B (non-think, 2k).}
\label{tab:main_result_nothink}
\small
\setlength{\tabcolsep}{4.7pt} % Adjust column spacing to avoid excessive width
\begin{tabular}{l *{11}{c}}
\toprule
\multirow{2}{*}{{Method}} 
    & \multicolumn{2}{c}{\textbf{AMC2023}} 
    & \multicolumn{2}{c}{\textbf{AIME2024}} 
    & \multicolumn{2}{c}{\textbf{AIME2025}} 
    & \multicolumn{2}{c}{\textbf{MATH500}}
    & \multicolumn{2}{c}{\textbf{Average}} \\
\cmidrule(lr){2-3} \cmidrule(lr){4-5} \cmidrule(lr){6-7} \cmidrule(lr){8-9} \cmidrule(lr){10-11}
    & \textbf{Best} & \textbf{Last} 
    & \textbf{Best} & \textbf{Last} 
    & \textbf{Best} & \textbf{Last} 
    & \textbf{Best} & \textbf{Last}
    & \textbf{Best} & \textbf{Last} \\
\midrule
% \rowcolor{gray!25}  Qwen3-1.7B (thinking) & 25.6 & -   & 1.6   & -   & 3.9   & -   & 54.7   & - & 21.5 & -  \\
\rowcolor{gray!25}  Qwen3-1.7B & 42.8 & -   & 10.2   & -   & 9.4  & -   & 70.2   & - & 33.2& -  \\
\midrule
\multicolumn{11}{c}{\textbf{Max Tolerable Delay 0}} \\
\midrule
BNPO & 43.7 & 0.0 & 13.7 & 0.0 & 13.3 & 0.0 & 74.2 & 0.4 & 36.2 & 0.1 &\\
Dr.GRPO & 45.0 &0.0 & 14.4 & 0.0 & 11.3 & 0.0 & 73.6 & 0.0 & 36.1 & 0.0 \\
GRPO  & 50.0 & 28.8 & 16.4 & 7.8 & 13.7 & 7.2 & 77.5 & 59.7 & 39.4 & 25.9 \\
GSPO   & 53.1 & 23.8 & 14.4 & 2.7 & 16.0 & 0.4 & 76.5 & 55.6 &40.0 & 20.6\\
GEPO (ours) & \textbf{55.0} & \textbf{52.5} & \textbf{22.3} & \textbf{22.3} & \textbf{18.4} & \textbf{13.7} & \textbf{79.2}& \textbf{77.1} & \textbf{43.7} & \textbf{41.4}\\

\midrule
\multicolumn{11}{c}{\textbf{Max Tolerable Delay 64}} \\
\midrule
BNPO     & 42.1 & 28.6 & 11.1 & 4.7 & 10.2 & 6.4 & 68.3 & 36.1 & 32.9 & 19.0\\
Dr.GRPO  & 45.0 & 41.6 & 14.1 & 9.7 & 13.3 & 10.1 & 72.5 & 72.3 & 36.2 & 33.4\\
GRPO & 46.3 & 46.3 & 14.5 & \textbf{14.5} & 13.6 & 10.1 & 72.6 & 71.7 & 36.8 & 35.7  \\
GSPO  & 47.2 & 36.6 & 13.7 & 4.3& 13.3 & 8.9 & 75.1 & 67.9 & 37.3 & 29.4 \\
GEPO (ours) & \textbf{52.5} & \textbf{52.5} &\textbf{14.5} & 10.2 & \textbf{14.5} & \textbf{12.1} & \textbf{77.4} & \textbf{77.1} & \textbf{39.7} & \textbf{38.0} \\
\bottomrule
\end{tabular}
\normalsize
\end{table}

A comparison of RL under \texttt{think} vs. \texttt{non-think} modes reveals key trade-offs in reasoning, stability, cost, and performance.

\paragraph{1) Performance and Stability (Delay = 0)}  
\textbf{GEPO} in \texttt{non-think} achieves \texttt{Best} = 43.7 and \texttt{Last} = 41.4 — near-perfect retention.  
In \texttt{think} mode, it reaches \texttt{Best} = 44.9 (+1.2) but identical \texttt{Last} = 41.4 — no final gain.  
Baselines like GRPO collapse (44.2 → 19.0), showing thinking destabilizes training without proper control.

\paragraph{2) With Delay = 64}  
In \texttt{non-think}, methods rebound sharply: Dr.GRPO (0.0 → 33.4), GRPO (25.9 → 35.7) — minimal slack prevents collapse.  
In \texttt{think}, only GEPO retains peak perfectly (\texttt{Last} = \texttt{Best} = 43.5); others degrade (e.g., GSPO: 41.9 → 20.9).  
$\Rightarrow$ Thinking introduces instability unless explicitly regularized.

\paragraph{3) Exploration vs. Efficiency (Fig.~\ref{fig:lengthVS})}  
\texttt{Think} produces longer rollouts — deeper reasoning — but with higher overlength ratios, risking wasted compute and divergence.  
\texttt{non-think} yields shorter, efficient trajectories with lower overlength — yet matches or exceeds performance when paired with stable optimizers like GEPO.

\noindent
Overall, \texttt{think} offers marginal peak gains at the cost of instability and overhead. \texttt{Non-think} + GEPO is \textbf{more stable, efficient, and often equally effective} — ideal for real-time or latency-sensitive deployment. Choose based on delay tolerance, interpretability needs, and required training stability.
Figure~\ref{fig:lengthVS} compares rollout lengths in GEPO under \texttt{think} vs. \texttt{non-think} modes, with three subplots:

\textbf{1) Average Length}: The red curve (\texttt{think}) stays consistently above blue (\texttt{non-think}), showing longer rollouts — likely due to added reasoning or policy deliberation.

\textbf{2) Terminated Length}: \texttt{think} again produces significantly longer \textit{successful} rollouts, suggesting it enables deeper, valid exploration.

\textbf{3) Overlength Ratio}: \texttt{think} incurs a much higher overlength rate, especially early in training — indicating that while it boosts exploration, it also risks inefficient, overly long sequences.

\noindent
\textbf{Takeaway}: \texttt{think} extends useful exploration and improves completion quality, but at the cost of higher computational overhead. In practice, this trade-off can be managed via length limits or dynamic rollout control to preserve efficiency without sacrificing performance.

\begin{figure}[t]
    \centering
    \begin{subfigure}{0.32\linewidth}
    \vspace{0pt}
        \includegraphics[width=\linewidth]{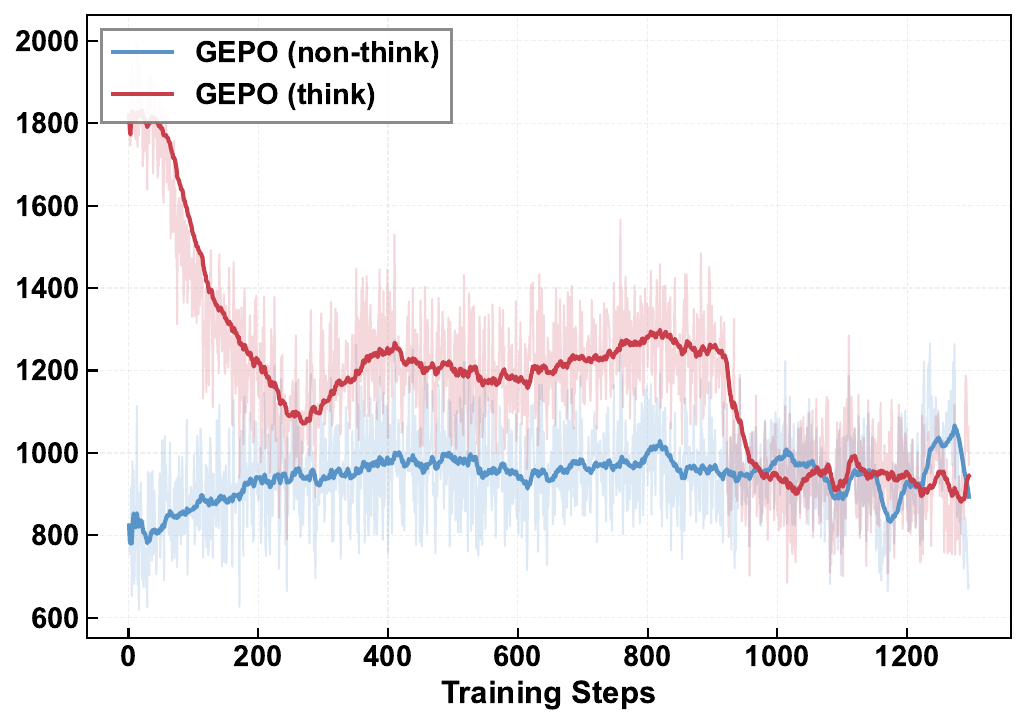}
        \caption{Average Length}
        \label{fig:AverageLength}
    \end{subfigure} 
    \hfill
    \begin{subfigure}{0.32\linewidth}
    \vspace{0pt}
        \includegraphics[width=\linewidth]{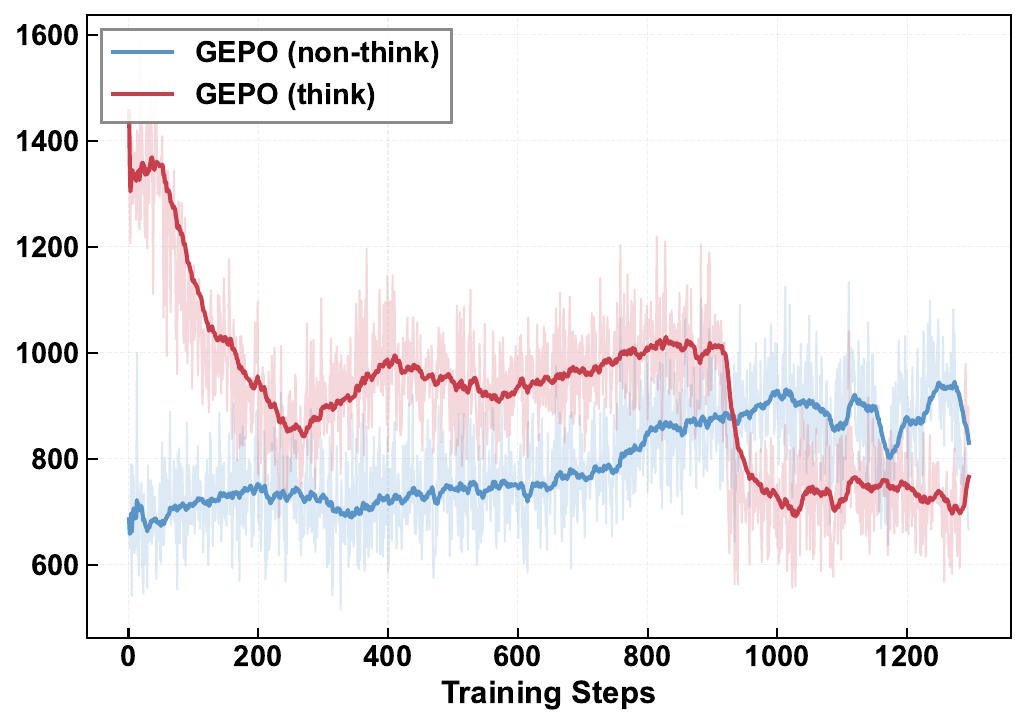}
        \caption{Terminated Length}
        \label{fig:TerminatedLength}
    \end{subfigure}
    \hfill
    \begin{subfigure}{0.32\linewidth}
    \vspace{0pt}
        \includegraphics[width=\linewidth]{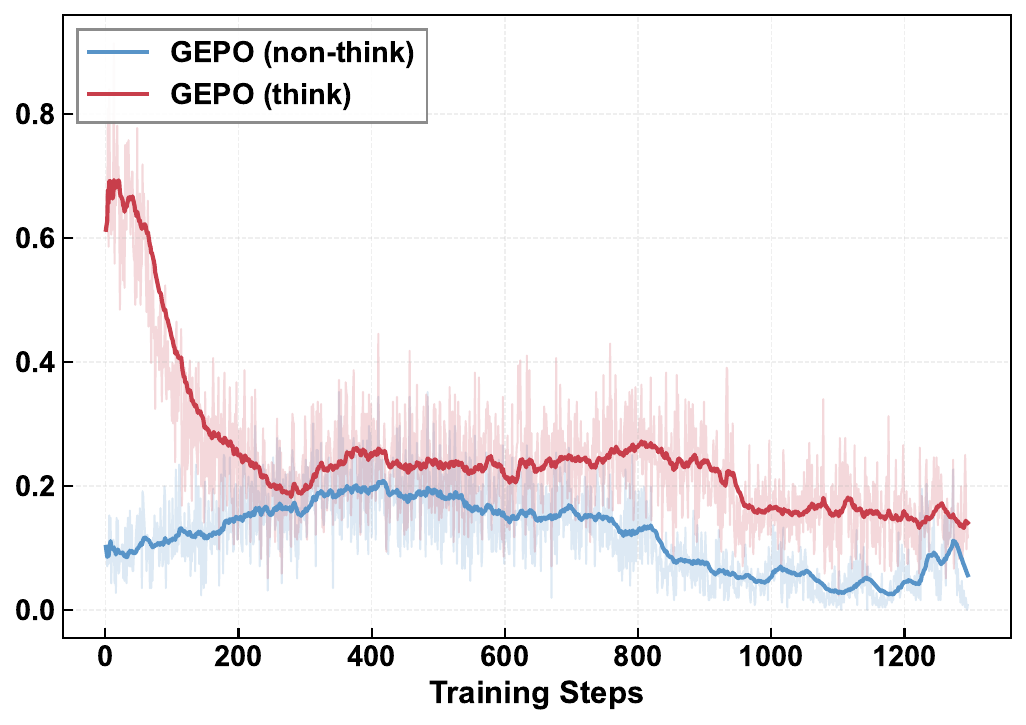}
        \caption{Overlength Ratio}
        \label{fig:OverlengthRatio}
    \end{subfigure}
    \caption{Comparison of rollout (cliped by 2k tokens) lengths under think and non-think modes.}
    \label{fig:lengthVS}
\end{figure}

% \subsection{Scaling Law for Latency}
% We evaluated the optimal performance of different configurations under maximum delays ranging from 0 to 64 steps. As shown in Figure~\ref{fig:EvalAccDelay_Aligned}, the eval accuracy of all methods decreases as the maximum tolerable latency increases, indicating that communication delay negatively impacts training performance in heterogeneous RL. Notably, GEPO consistently achieves the highest accuracy across all latency levels, demonstrating superior robustness and stability compared to other algorithms.

% \begin{figure}[htbp]
%     \centering
%     \includegraphics[width=\linewidth]{figs/EvalAccDelay_Aligned.pdf}
%     \caption{Performance and latency under increasing delay.
% Evaluation accuracy (top) and actual latency (bottom) across different maximum tolerable latencies for five algorithms. }
%     \label{fig:EvalAccDelay_Aligned}
% \end{figure}

% \medskip
% \noindent\textcolor{red}{\rule{\textwidth}{2.8pt}}
% \par
% \medskip
% \textcolor{red}{\textbf{Newly Added Content}}
% \par
% \noindent\textcolor{red}{\rule{\textwidth}{2.8pt}}
% \medskip

\subsection{Details of Hyperparameter Analysis}
\label{sec:hyperparameter}

\subsubsection{Heterogeneous RL Sensitivity Analysis}

The default parameters for our main experiments are: group size 8, KL regularization coefficient ($\beta_{KL}$) of 0.005, and lognormal delay distribution.

\begin{table}[H]
\centering
\caption{Sensitivity to group size under Hetero RL setting (max delay 64 steps).}
\label{tab:group_size}
\small
\setlength{\tabcolsep}{4pt}
\begin{tabular}{c *{10}{c}}
\toprule
\multirow{2}{*}{\textbf{Group Size}} 
    & \multicolumn{2}{c}{\textbf{AMC2023}} 
    & \multicolumn{2}{c}{\textbf{AIME2024}} 
    & \multicolumn{2}{c}{\textbf{AIME2025}} 
    & \multicolumn{2}{c}{\textbf{MATH500}}
    & \multicolumn{2}{c}{\textbf{Average}} \\
\cmidrule(lr){2-3} \cmidrule(lr){4-5} \cmidrule(lr){6-7} \cmidrule(lr){8-9} \cmidrule(lr){10-11}
    & \textbf{Best} & \textbf{Last} 
    & \textbf{Best} & \textbf{Last} 
    & \textbf{Best} & \textbf{Last} 
    & \textbf{Best} & \textbf{Last}
    & \textbf{Best} & \textbf{Last} \\
\midrule
\textbf{2}  & 51.3 & 48.4 & 17.2 & 15.6 & 17.2 & 16.4 & 77.8 & 75.4 & 40.9 & 39.0 \\
\textbf{4}  & \textbf{56.8} & \textbf{56.8} & 20.3 & 15.6 & \textbf{21.1} & \textbf{21.1} & 77.7 & 77.7 & \textbf{44.0} & 42.8 \\
\textbf{8}  & 53.8 & 53.8 & \textbf{21.9} & \textbf{21.9} & 18.8 & 18.8 & \textbf{79.6} & \textbf{79.6} & 43.5 & \textbf{43.5} \\
\bottomrule
\end{tabular}
\normalsize
\end{table}

\textbf{Group size analysis}. Table~\ref{tab:group_size} shows the performance impact of varying group sizes. We observe that a group size of 4 achieves the highest best performance (44.0), while a group size of 8 yields the best final performance (43.5), demonstrating superior training stability. This validates that larger group sizes provide better statistical regularization for the hetero-RL process.

% \begin{table}[htbp]
% \centering
% \caption{Sensitivity to KL regularization coefficient under Hetero RL setting (max delay 64 steps).}
% \label{tab:kl_beta}
% \small
% \setlength{\tabcolsep}{4pt}
% \begin{tabular}{c *{10}{c}}
% \toprule
% \multirow{2}{*}{\textbf{$\beta_{KL}$}} 
%     & \multicolumn{2}{c}{\textbf{AMC2023}} 
%     & \multicolumn{2}{c}{\textbf{AIME2024}} 
%     & \multicolumn{2}{c}{\textbf{AIME2025}} 
%     & \multicolumn{2}{c}{\textbf{MATH500}}
%     & \multicolumn{2}{c}{\textbf{Average}} \\
% \cmidrule(lr){2-3} \cmidrule(lr){4-5} \cmidrule(lr){6-7} \cmidrule(lr){8-9} \cmidrule(lr){10-11}
%     & \textbf{Best} & \textbf{Last} 
%     & \textbf{Best} & \textbf{Last} 
%     & \textbf{Best} & \textbf{Last} 
%     & \textbf{Best} & \textbf{Last}
%     & \textbf{Best} & \textbf{Last} \\
% \midrule
% \textbf{0.001} (group=8) & 48.8 & 33.4 & 14.1 & 2.0 & 17.6 & 5.5 & 75.3 & 59.5 & 38.9 & 25.1 \\
% \textbf{0.005} (group=8) & \textbf{53.8} & \textbf{53.8} & \textbf{21.9} & \textbf{21.9} & \textbf{18.8} & \textbf{18.8} & \textbf{79.6} & \textbf{79.6} & \textbf{43.5} & \textbf{43.5} \\
% \textbf{0.010} (group=8) & 54.1 & 49.7 & 17.6 & 17.2 & 17.2 & 15.6 & 76.9 & 76.3 & 41.4 & 39.7 \\
% \bottomrule
% \end{tabular}
% \normalsize
% \end{table}
\begin{table}[H]
\centering
\caption{Sensitivity to KL regularization coefficient under Hetero RL setting (max delay 64 steps).}
\label{tab:kl_beta}
\small
\setlength{\tabcolsep}{4pt}
\begin{tabular}{c *{10}{c}}
\toprule
\multirow{2}{*}{\textbf{$\beta_{KL}$}} 
    & \multicolumn{2}{c}{\textbf{AMC2023}} 
    & \multicolumn{2}{c}{\textbf{AIME2024}} 
    & \multicolumn{2}{c}{\textbf{AIME2025}} 
    & \multicolumn{2}{c}{\textbf{MATH500}}
    & \multicolumn{2}{c}{\textbf{Average}} \\
\cmidrule(lr){2-3} \cmidrule(lr){4-5} \cmidrule(lr){6-7} \cmidrule(lr){8-9} \cmidrule(lr){10-11}
    & \textbf{Best} & \textbf{Last} 
    & \textbf{Best} & \textbf{Last} 
    & \textbf{Best} & \textbf{Last} 
    & \textbf{Best} & \textbf{Last}
    & \textbf{Best} & \textbf{Last} \\
\midrule
\textbf{0.001}  & 48.8 & 33.4 & 14.1 & 2.0 & 17.6 & 5.5 & 75.3 & 59.5 & 38.9 & 25.1 \\
\textbf{0.002}  & 52.5 & 40.3 & 18.0 & 11.3 & \textbf{21.1} & 15.2 & 77.6 & 71.4 & 42.3 & 34.6 \\
\textbf{0.004}  & \textbf{55.0} & 53.8 & 16.8 & 15.2 & 19.9 & 16.4 & 79.0 & 79.0 & 42.7 & 41.1 \\
\textbf{0.005}  & {53.8} & \textbf{53.8} & \textbf{21.9} & \textbf{21.9} & {18.8} & {18.8} & \textbf{79.6} & \textbf{79.6} & \textbf{43.5} & \textbf{43.5} \\
\textbf{0.006} & 53.4 & 52.5 & 20.7 & 18.0 & {19.1} & \textbf{19.1} & 78.5 & 77.1 & 43.0 & 41.7 \\
\textbf{0.008} & 47.5 & 49.7 & 12.5 & 21.5 & 14.5 & 17.6 & 74.4 & 77.4 & 37.2 & 41.5 \\
\textbf{0.010} & 54.1 & 49.7 & 17.6 & 17.2 & 17.2 & 15.6 & 76.9 & 76.3 & 41.4 & 39.7 \\
\bottomrule
\end{tabular}
\normalsize
\end{table}

\textbf{KL regularization analysis}. Table~\ref{tab:kl_beta} demonstrates that the KL divergence coefficient is critical for stability in heterogeneous RL settings. With $\beta_{KL}=0.005$, GEPO maintains perfect stability (no performance degradation from best to last). When $\beta_{KL}$ is too small (0.001), the algorithm collapses dramatically (AIME2024 drops from 14.1 to 2.0). When $\beta_{KL}$ is too large (0.010), the model becomes overly conservative, limiting its peak performance.

\begin{table}[H]
\centering
\caption{Sensitivity to network latency distribution  (95\% CI / 60-1920s) under Hetero RL setting (max delay 64 steps).}
\label{tab:latency_dist}
\small
\setlength{\tabcolsep}{4pt}
\begin{tabular}{l *{10}{c}}
\toprule
\multirow{2}{*}{\textbf{Distribution}} 
    & \multicolumn{2}{c}{\textbf{AMC2023}} 
    & \multicolumn{2}{c}{\textbf{AIME2024}} 
    & \multicolumn{2}{c}{\textbf{AIME2025}} 
    & \multicolumn{2}{c}{\textbf{MATH500}}
    & \multicolumn{2}{c}{\textbf{Average}} \\
\cmidrule(lr){2-3} \cmidrule(lr){4-5} \cmidrule(lr){6-7} \cmidrule(lr){8-9} \cmidrule(lr){10-11}
    & \textbf{Best} & \textbf{Last} 
    & \textbf{Best} & \textbf{Last} 
    & \textbf{Best} & \textbf{Last} 
    & \textbf{Best} & \textbf{Last}
    & \textbf{Best} & \textbf{Last} \\
\midrule
\textbf{Weibull} & 51.6 & 49.1 & 18.4 & 14.8 & 18.0 & 17.6 & 77.4 & 77.4 & 41.3 & 39.7 \\
\textbf{Lognormal}  & \textbf{53.8} & \textbf{53.8} & \textbf{21.9} & \textbf{21.9} & \textbf{18.8} & \textbf{18.8} & \textbf{79.6} & \textbf{79.6} & \textbf{43.5} & \textbf{43.5} \\
\textbf{Exponential}  & 53.4&47.8 &19.9&17.2 &16.8&15.2 &79.5&79.4 &42.4&39.9  \\
\bottomrule
\end{tabular}
\normalsize
\end{table}

\textbf{Latency distribution analysis}. Table~\ref{tab:latency_dist} compares different network latency simulation distributions. The lognormal distribution yields the best performance and stability. The Weibull and exponential distributions show slightly degraded performance, particularly on AIME2024 and AMC2023 benchmarks.

\subsubsection{Online RL Sensitivity Analysis}

The default parameters for our online RL experiments are: top-$p$=0.95, top-$k$=20, and temperature=0.6.

\begin{table}[H]
\centering
\caption{Sensitivity to top-$p$ sampling under Online RL setting  (topK=20, temp=0.6).}
\label{tab:top_p}
\small
\setlength{\tabcolsep}{4pt}
\begin{tabular}{l *{10}{c}}
\toprule
\multirow{2}{*}{\textbf{top-$p$}} 
    & \multicolumn{2}{c}{\textbf{AMC2023}} 
    & \multicolumn{2}{c}{\textbf{AIME2024}} 
    & \multicolumn{2}{c}{\textbf{AIME2025}} 
    & \multicolumn{2}{c}{\textbf{MATH500}}
    & \multicolumn{2}{c}{\textbf{Average}} \\
\cmidrule(lr){2-3} \cmidrule(lr){4-5} \cmidrule(lr){6-7} \cmidrule(lr){8-9} \cmidrule(lr){10-11}
    & \textbf{Best} & \textbf{Last} 
    & \textbf{Best} & \textbf{Last} 
    & \textbf{Best} & \textbf{Last} 
    & \textbf{Best} & \textbf{Last}
    & \textbf{Best} & \textbf{Last} \\
\midrule
% \textbf{0.90} & 52.5 & 47.5 & 21.9 & 3.5 & 19.9 & 7.8 & 80.4 & 74.0 & 43.7 & 33.2 \\
\textbf{0.95}  & 56.9 & 56.9 & 21.9 & 16.4 & 20.3 & 14.1 & 80.4 & 78.1 & 44.9 & 41.4 \\
\textbf{0.98} & 55.9 & 55.9 &\textbf{23.8}&\textbf{23.8} &19.5&17.6 &80.3&80.3 &44.9&44.4 \\
\textbf{1.00}  & \textbf{59.7} & \textbf{59.7} & 22.3 & 22.3 & \textbf{20.7} & \textbf{17.6} & \textbf{82.0} & \textbf{82.0} & \textbf{46.2} & \textbf{45.4} \\
\bottomrule
\end{tabular}
\normalsize
\end{table}

% \textbf{Top-$p$ sampling analysis}. Table~\ref{tab:top_p} reveals that a smaller top-$p$ value (0.90) discards potentially useful trajectories, leading to degraded stability (AIME2024 drops from 21.9 to 3.5). Top-$p$=1.00 achieves the highest best performance (46.2) and final performance (45.4), suggesting that  total probability provides a better balance between exploration and stability.

\textbf{Top-$p$ sampling analysis}. Table~\ref{tab:top_p} reveals that top-$p$=1.00 achieves the highest best performance (46.2) and final performance (45.4). It suggests that total probability provides a better balance between exploration and stability.

\begin{table}[H]
\centering
\caption{Sensitivity to temperature under Online RL setting (topP=0.95, topK=20).}
\label{tab:temperature}
\small
\setlength{\tabcolsep}{4pt}
\begin{tabular}{l *{10}{c}}
\toprule
\multirow{2}{*}{\textbf{Temp}} 
    & \multicolumn{2}{c}{\textbf{AMC2023}} 
    & \multicolumn{2}{c}{\textbf{AIME2024}} 
    & \multicolumn{2}{c}{\textbf{AIME2025}} 
    & \multicolumn{2}{c}{\textbf{MATH500}}
    & \multicolumn{2}{c}{\textbf{Average}} \\
\cmidrule(lr){2-3} \cmidrule(lr){4-5} \cmidrule(lr){6-7} \cmidrule(lr){8-9} \cmidrule(lr){10-11}
    & \textbf{Best} & \textbf{Last} 
    & \textbf{Best} & \textbf{Last} 
    & \textbf{Best} & \textbf{Last} 
    & \textbf{Best} & \textbf{Last}
    & \textbf{Best} & \textbf{Last} \\
\midrule
\textbf{0.4}  & 55.9 & 53.4 & \textbf{24.6} & \textbf{21.9} & 20.3 & \textbf{19.5} & \textbf{80.9} & \textbf{80.3} & \textbf{45.4} & \textbf{43.8} \\
\textbf{0.6}  & \textbf{56.9} & \textbf{56.9} & 21.9 & 16.4 & {20.3} & 14.1 & 80.4 & 78.1 & 44.9 & 41.4 \\
\textbf{0.8}  & 53.4 & 52.5 & 22.3 & 7.8 & \textbf{21.1} & 15.6 & 80.6 & 73.5 & 44.4 & 37.4 \\
\bottomrule
\end{tabular}
\normalsize
\end{table}

\textbf{Temperature analysis}. Table~\ref{tab:temperature} shows that a lower temperature (0.4) sharpens the sampling distribution, favoring high-confidence samples and reducing noise. This yields better final performance (43.8 vs. 41.4), demonstrating improved convergence stability. A higher temperature (0.8) increases exploration but compromises stability, particularly on AIME2024 (drops from 22.3 to 7.8).

\begin{table}[H]
\centering
\caption{Sensitivity to top-$k$ sampling under Online RL setting  (topP=0.95, temp=0.6).}
\label{tab:top_k}
\small
\setlength{\tabcolsep}{4pt}
\begin{tabular}{l *{10}{c}}
\toprule
\multirow{2}{*}{\textbf{top-$k$}} 
    & \multicolumn{2}{c}{\textbf{AMC2023}} 
    & \multicolumn{2}{c}{\textbf{AIME2024}} 
    & \multicolumn{2}{c}{\textbf{AIME2025}} 
    & \multicolumn{2}{c}{\textbf{MATH500}}
    & \multicolumn{2}{c}{\textbf{Average}} \\
\cmidrule(lr){2-3} \cmidrule(lr){4-5} \cmidrule(lr){6-7} \cmidrule(lr){8-9} \cmidrule(lr){10-11}
    & \textbf{Best} & \textbf{Last} 
    & \textbf{Best} & \textbf{Last} 
    & \textbf{Best} & \textbf{Last} 
    & \textbf{Best} & \textbf{Last}
    & \textbf{Best} & \textbf{Last} \\
\midrule
\textbf{10} & \textbf{58.1} & \textbf{58.1} & \textbf{22.3} & 18.4 & \textbf{21.1} & 15.6 & \textbf{80.9} & 78.9 & \textbf{45.6} & 42.8 \\
\textbf{20} & 56.9 & 56.9 & 21.9 & 16.4 & 20.3 & 14.1 & 80.4 & 78.1 & 44.9 & 41.4 \\
\textbf{50}  & 55.6 & 55.6 & 20.3 & \textbf{18.4} & 18.4 & \textbf{18.4} & \textbf{80.6} & {80.6} & 43.7 & \textbf{43.2} \\
\bottomrule
\end{tabular}
\normalsize
\end{table}

\textbf{Top-$k$ sampling analysis}. Table~\ref{tab:top_k} demonstrates that GEPO is relatively insensitive to the top-$k$ parameter. While top-$k$=20 achieves the highest best performance (44.9), top-$k$=50 yields the best final performance (43.2), suggesting that a larger candidate set enhances training stability without significantly compromising peak performance.

In summary, our comprehensive hyperparameter analysis reveals that:
\begin{enumerate}
    \item Group size is crucial for stability in heterogeneous RL settings, with size 8 providing optimal stability.
    \item KL regularization coefficient ($\beta_{KL}$=0.005) is critical for maintaining stability under policy divergence.
    \item The Weibull distribution is more challenging compared to other distributions.
    \item Lower temperature (0.4) improves stability in online RL by reducing sampling noise.
    \item Top-$p$ and top-$k$ have a moderate impact on performance, with top-$p$=1.0 and top-$k$=10, providing a good balance.
\end{enumerate}

These findings validate our design choices in the main experiments and provide practical guidance for deploying GEPO in diverse distributed training environments.

\section{Supplementary experiments: Part II}

\subsection{Comparison With Asynchronous RL Baselines}
In this section, we compare GEPO with asynchronous RL baselines, including Truncated IS (IMPALA)~\citep{IMPALA}, TOPR~\citep{topr} and CISPO~\citep{cispo}. The mathematical distinctions among these three methods are shown in Table~\ref{tab:asyncmethod}:

\begin{table}[htbp]
    \centering
    \caption{Objective of Asynchronous RL Baselines, $sg$ denotes the stop gradient operation.}
\small
\setlength{\tabcolsep}{4.7pt} % Adjust column spacing to avoid excessive width
    \begin{tabular}{ll}
    \toprule
     \textbf{Method}& \textbf{Objective} \\
    \midrule
     {Truncated IS} & $\mathcal{J}^{\text{TIS}}(\pi_{\theta}) = \mathbb{E}_{\tau \sim \pi_{\text{old}}} \left[ \operatorname{sg}\left( \frac{\pi_{\theta}(\tau)}{\pi_{\text{old}}(\tau)} \right)_{0}^{1} R(\tau) \log \pi_{\theta}(\tau) \right]$\\
    {CISPO} & $\mathcal{J}^{\text{CISPO}}(\pi_{\theta}) = \mathbb{E}_{\tau \sim \pi_{\text{old}}} \left[ \operatorname{sg}\left( \frac{\pi_{\theta}(\tau)}{\pi_{\text{old}}(\tau)} \right)_{1 - \epsilon_{\text{low}}^{\text{IS}}}^{1 + \epsilon_{\text{high}}^{\text{IS}}} R(\tau) \log \pi_{\theta}(\tau) M(\tau) \right]$\\
    {TOPR} & $\mathcal{J}^{\text{TOPR}}(\pi_{\theta}) = \mathbb{E}_{\tau \sim \pi_{\text{old}}} \left[ \left( \mathbf{1}_{\{\tau \in T^{+}\}} + \mathbf{1}_{\{\tau \in T^{-}\}} \operatorname{sg}\left( \frac{\pi_{\theta}(\tau)}{\pi_{\text{old}}(\tau)} \right)_{0}^{1} \right) R(\tau) \log \pi_{\theta}(\tau) \right]$\\
    \bottomrule
    \end{tabular}
    \label{tab:asyncmethod}
\end{table}
\paragraph{Off-Policy Baselines:}
\begin{itemize}
\item \textbf{Truncated IS} IMPALA (V-trace)~\citep{IMPALA} relies on temporal-difference learning with step-level rewards, whereas LLM reasoning tasks typically employ policy-gradient-based optimization with sparse, episode-level rewards—causing V-trace to degenerate and rendering IMPALA an unsuitable direct baseline for comparison. However, Truncated IS is the key technique of IMPALA, we direcrly compare this technique with GEPO. 

\item \textbf{TOPR}~\citep{topr} is a stable and efficient off-policy reinforcement learning algorithm for fine-tuning large language models that uses asymmetric importance sampling—applying no truncation to positive examples to accelerate learning and lower-truncating negative examples at zero to ensure stability—without requiring KL regularization.

\item \textbf{CISPO}~\citep{cispo} is a stable and efficient off-policy reinforcement learning algorithm for large language models that clips the importance sampling weights—rather than token-level policy updates—to preserve gradient contributions from all tokens while reducing training variance and avoiding premature entropy collapse.

\end{itemize}

% We follow the setting of CISPO and TOPR in their papers, both of which state that it does not require an explicit KL penalty to guarantee stable behaviour. However, in our experiments, both of them are fragile. Hence, we add CISPO / TOPR+KL as new baselines for fair comparison. The hyperparameters $\epsilon_{\text{low}}^{\text{IS}}$ and $\epsilon_{\text{high}}^{\text{IS}}$ 

We adopt the experimental settings of CISPO and TOPR as described in their original papers, both of which claim that an explicit KL penalty is unnecessary for achieving stable training. However, in our empirical evaluation, we find that both methods exhibit significant fragility under heterogeneous and high-latency conditions. To ensure a fair and robust comparison, we therefore introduce augmented variants—CISPO+KL and TOPR+KL—as additional baselines, where a KL regularization term is incorporated. 
% The importance sampling clipping thresholds are controlled by the hyperparameters \(\epsilon_{\text{low}}^{\text{IS}}\) and \(\epsilon_{\text{high}}^{\text{IS}}\).

\begin{table}[H]
\centering
\caption{Performance of GEPO and asynchronous baselines under Hetero RL scenarios  (2k limiation).}
\label{tab:AdditionalBaselines}
\small
\setlength{\tabcolsep}{4.7pt} % Adjust column spacing to avoid excessive width
\begin{tabular}{l *{11}{c}}
\toprule
\multirow{2}{*}{\textbf{Method}} 
    & \multicolumn{2}{c}{\textbf{AMC2023}} 
    & \multicolumn{2}{c}{\textbf{AIME2024}} 
    & \multicolumn{2}{c}{\textbf{AIME2025}} 
    & \multicolumn{2}{c}{\textbf{MATH500}}
    & \multicolumn{2}{c}{\textbf{Average}} \\
\cmidrule(lr){2-3} \cmidrule(lr){4-5} \cmidrule(lr){6-7} \cmidrule(lr){8-9} \cmidrule(lr){10-11}
    & \textbf{Best} & \textbf{Last} 
    & \textbf{Best} & \textbf{Last} 
    & \textbf{Best} & \textbf{Last} 
    & \textbf{Best} & \textbf{Last}
    & \textbf{Best} & \textbf{Last} \\
\midrule
\rowcolor{gray!25} Qwen3-1.7B & 25.6 & -   & 1.6   & -   & 3.9   & -   & 54.7   & - & 21.5 & -  \\
\midrule
\multicolumn{11}{c}{\textbf{Max Tolerable Delay 64 (Hetero RL)}} \\
\midrule
{Truncated IS + KL}  & 47.5 & 0.0 & 15.2 & 0.0 & 15.6 & 0.0 & 74.0 & 0.0 & 38.1 & 0.0 \\
{TOPR w/o KL}  & 51.3 & 0.0 & 17.2 & 0.0 & 15.6 & 0.0 & 76.4 & 0.0 & 40.1 & 0.0 \\
{TOPR + KL} & 51.9&30.0&	17.2&1.2&	17.2&7.0&	76.9&55.1&40.8&23.3 \\
{CISPO w/o KL} & 44.1 & 20.0 & 10.2 & 0.8 & 12.1 & 0.8 & 70.7 & 43.4 & 34.3 & 16.3 \\
{CISPO + KL} & 44.1 & 21.2 & 9.8 & 2.3 & 12.5 & 2.0 & 71.8 & 41.9 & 34.5 & 16.9 \\
{GEPO} (ours) & \textbf{53.8} & \textbf{53.8} & \textbf{21.9}  & \textbf{21.9} & \textbf{18.8}  & \textbf{18.8} & \textbf{79.6} & \textbf{79.6}  & \textbf{43.5} & \textbf{43.5}\\
\bottomrule
\end{tabular}
\normalsize
\end{table}

\begin{figure}[H]
    \centering        
    \begin{subfigure}{0.4\linewidth}
        \includegraphics[width=\linewidth]{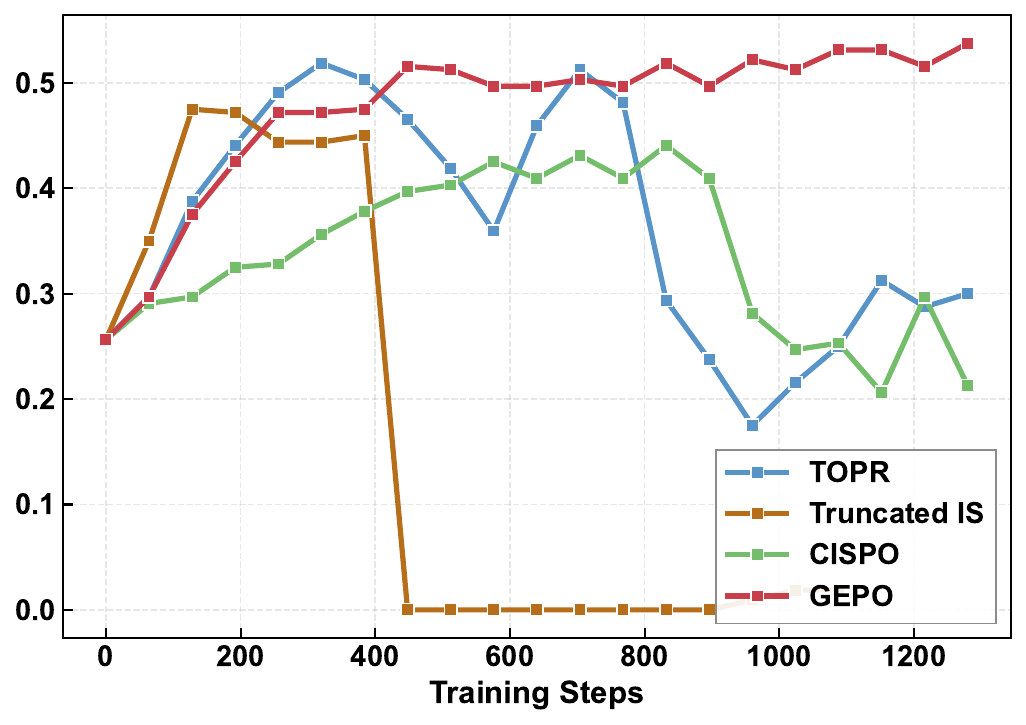}
        \caption{HeteroRL/AMC2023}
        \label{fig:async_amc23}
    \end{subfigure}
    \begin{subfigure}{0.4\linewidth}
        \includegraphics[width=\linewidth]{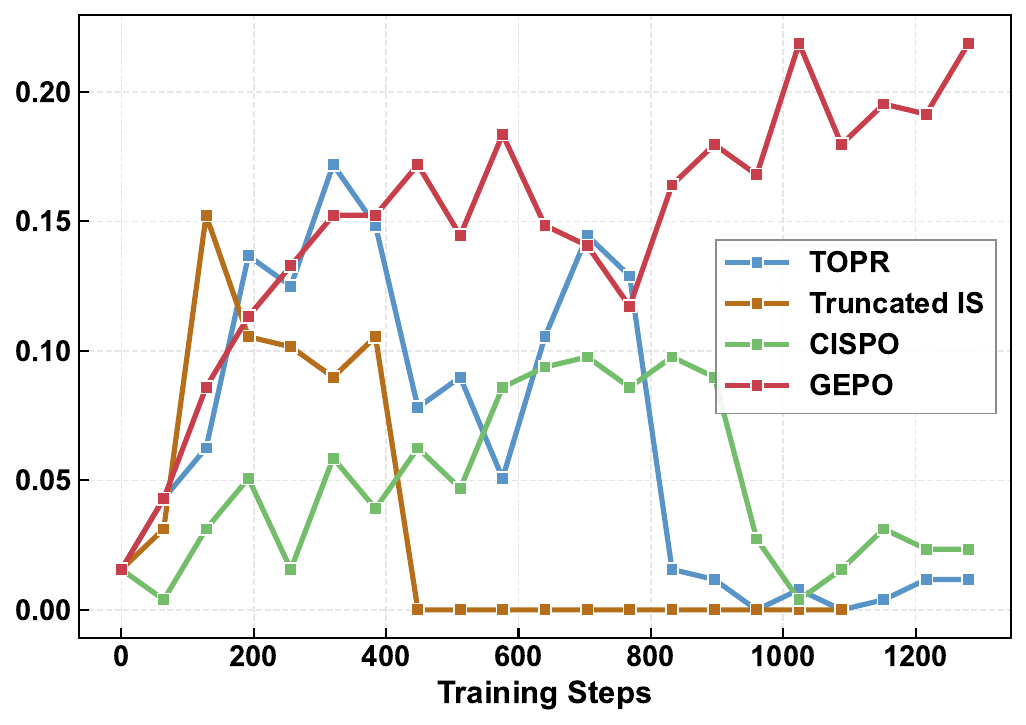}
        \caption{HeteroRL/AIME24}
        \label{fig:async_aime24}
    \end{subfigure}   \\
        \begin{subfigure}{0.4\linewidth}
        \includegraphics[width=\linewidth]{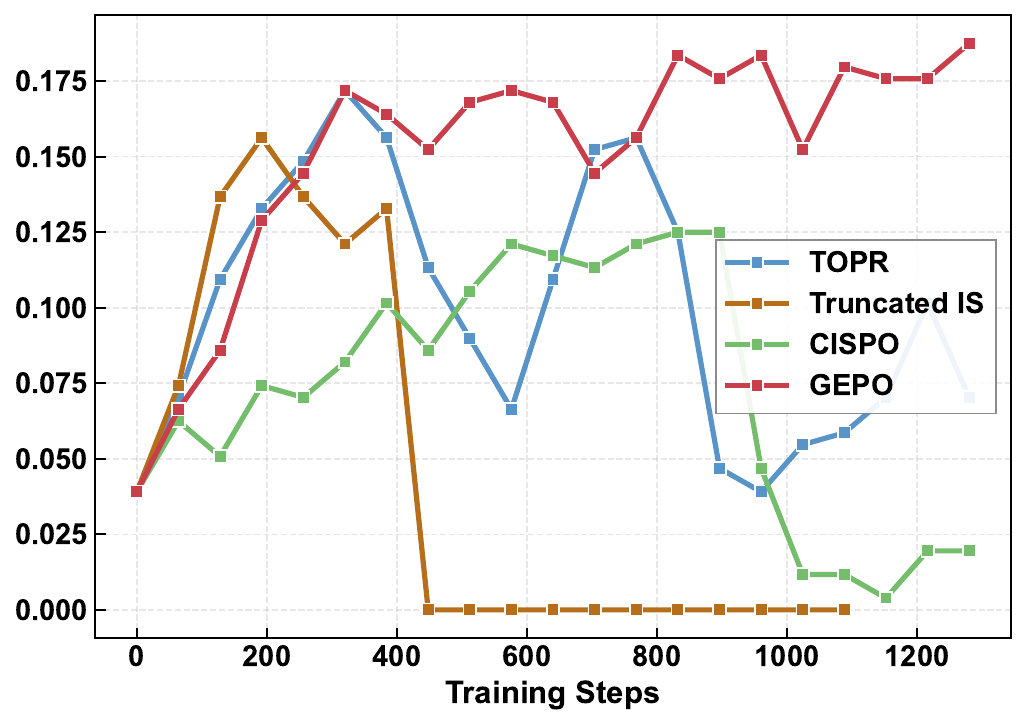}
        \caption{HeteroRL/AIME25}
        \label{fig:async_aime25}
    \end{subfigure}
    \begin{subfigure}{0.4\linewidth}
        \includegraphics[width=\linewidth]{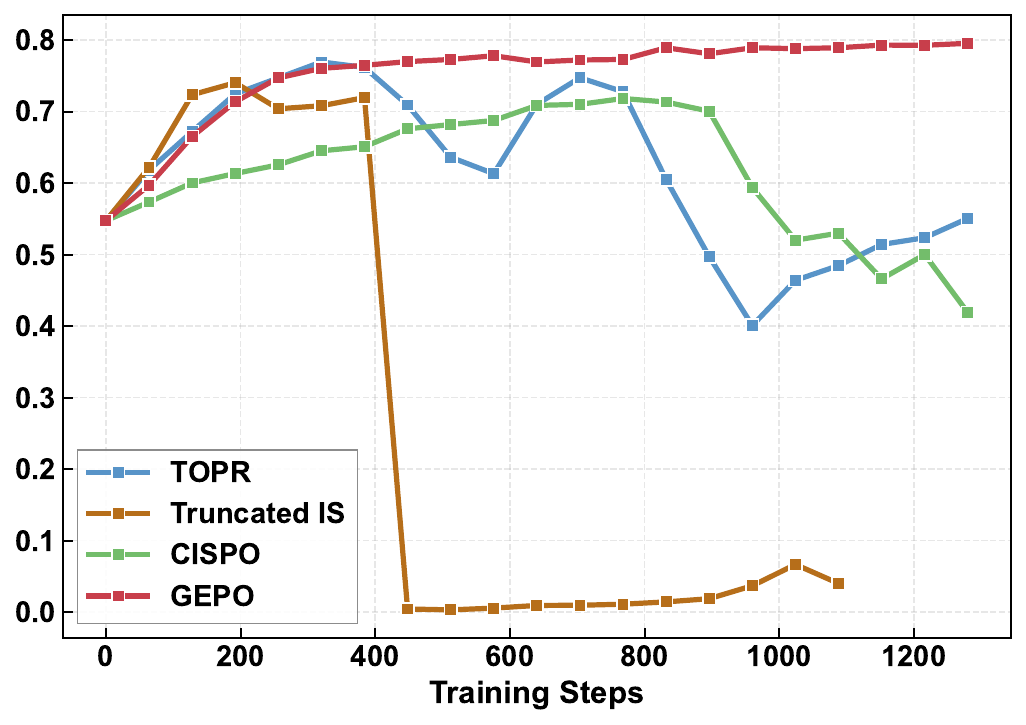}
        \caption{HeteroRL/MATH500}
        \label{fig:async_math500}
    \end{subfigure} 
    \caption{Eval accuracy curves of GEPO and asynchronous baselines ($\beta_{KL}=0.005$).}
    \label{fig:AsyncEvalaccuracy}
\end{figure}

Table~\ref{tab:AdditionalBaselines} presents the best and final performance of GEPO alongside various asynchronous baselines, while Figure~\ref{fig:AsyncEvalaccuracy} shows their evaluation curves across the four benchmarks. The asynchronous baselines exhibit severe instability—or even collapse—under high-latency conditions, indicating that merely clipping the importance weights is insufficient to ensure training stability.

\section{From Token-Level to Group-Level Importance Weight}
\label{subsec:token_vs_sample}

\paragraph{Token-Level Importance Weight}
In traditional policy optimization methods, such as GRPO~\citep{GRPO} or PPO~\citep{PPO}, importance sampling is typically performed at the \texttt{token-level}. Specifically, for a generated sequence $y = (y_1, \dots, y_T)$, the importance weight is computed token by token:
\begin{equation}
w_{token} = \texttt{clip} \left( \frac{p(y_t \mid x, y_{<t})}{q(y_t \mid x, y_{<t})}, 1-\epsilon, 1+\epsilon \right),
\end{equation}
and used to compute the policy gradient at each time step. However, this per-token reweighting scheme suffers from two key limitations:

\textbf{1) Inconsistency between optimization and reward granularity:} Concurrent work such as GSPO~\citep{GSPO} argues that since the reward is assigned at the sequence level—i.e., to the full response $y$—the importance weighting for policy updates should also operate at the same granularity, rather than at the token level.

\textbf{2) High variance in token-level probabilities:} Beyond this alignment issue, we provide an explanation from the perspective of importance weight variance: because the final reward depends on the entire sample, large local changes in token probabilities can lead to extreme values in the ratio $\frac{p(y_t \mid x, y_{<t})}{q(y_t \mid x, y_{<t})}$. This inflates the variance of token-level importance weights, causing them to frequently fall outside the clipping range. Once clipped, gradients for these tokens are effectively zeroed out due to the stop-gradient behavior of \texttt{torch.clamp}, preventing meaningful updates. As a result, tokens that require large corrections may be ignored, leading to inefficient optimization.

\paragraph{Sample-Level Importance Weight}
The \texttt{sample-level} importance weighting treats the entire response as a sampling unit and computing the weight based on its full conditional probability. Given a prompt $x$, both the target policy $p(y|x)$ and the behavior policy $q(y|x)$ are defined as the joint probability of the sequence:
\begin{equation}
p(y|x) = \left| \prod_{t=1}^T p(y_t \mid x, y_{<t}) \right|^{\frac{1}{T}}, \quad
q(y|x) = \left| \prod_{t=1}^T q(y_t \mid x, y_{<t}) \right|^{\frac{1}{T}}.
\end{equation}
The sample-level importance weight is then:
\begin{equation}
w_{\text{sequence}}(y|x) = \texttt{clip} \left( \frac{p(y|x)}{q(y|x)}, 1-\epsilon, 1+\epsilon \right).
\end{equation}
This formulation computes the full sequence-level ratio before clipping, thereby avoiding premature truncation caused by high-variance individual tokens. By preserving gradient flow across the entire sequence, sample-level weighting enables more stable and effective policy updates, especially under high policy divergence induced by network latency.

\paragraph{Group-Level Importance Weight}
\texttt{Group-level} importance weighting represents a paradigm shift in reinforcement learning optimization by recognizing that policy updates should not only align with the granularity of reward assignment but also leverage higher-order statistical relationships among multiple samples. The key insight is that individual samples should not be treated in isolation; instead, their collective behavior under the same prompt provides crucial information for stable policy updates. By considering the expected value of proposal probabilities within a group of responses, we effectively smooth out the erratic fluctuations that plague token-level and even sequence-level weighting schemes. This approach acknowledges an important reality of distributed training: in heterogeneous environments with network latency, policy divergence is inevitable, and our optimization methods must be designed to gracefully handle—not merely tolerate—this divergence. The group-level perspective transforms what was previously seen as a limitation (policy staleness due to latency) into an opportunity for more robust learning through statistical regularization of importance weights.

\subsection{Ablation Study}
Since the removal of the variance divisor term in the advantage function has already been extensively validated in prior work~\citep{drGRPO}, we focus solely on ablating the Group Expectation component. Table \ref{tab:ablation} compares three importance weights (IW) in Listing~\ref{lst:coef_calc}, namely token-level (IW of GRPO), sequence-level (IW of GSPO), and group-level (IW of GEPO), under a max tolerable delay of 64 steps. Table \ref{tab:ablation} presents the results, the sequence-level IW proposed by GSPO does not bring significant stability improvement. Although it outperforms token-level weighting in terms of best performance, it suffers from severe performance degradation by the end of training. 
We also conducted an ablation study on advantage normalization, and the results show that omitting normalization yields slightly better performance.

\begin{table}[htbp]
\centering
\caption{Ablation study of different importance weights (Hetero RL mode).}
\label{tab:ablation}
\small
\setlength{\tabcolsep}{4.9pt} % Adjust column spacing to avoid excessive width
\begin{tabular}{l *{11}{c}}
\toprule
\multirow{2}{*}{\textbf{Ablation}} 
    & \multicolumn{2}{c}{\textbf{AMC2023}} 
    & \multicolumn{2}{c}{\textbf{AIME2024}} 
    & \multicolumn{2}{c}{\textbf{AIME2025}} 
    & \multicolumn{2}{c}{\textbf{MATH500}}
    & \multicolumn{2}{c}{\textbf{Average}} \\
\cmidrule(lr){2-3} \cmidrule(lr){4-5} \cmidrule(lr){6-7} \cmidrule(lr){8-9} \cmidrule(lr){10-11}
    & \textbf{Best} & \textbf{Last} 
    & \textbf{Best} & \textbf{Last} 
    & \textbf{Best} & \textbf{Last} 
    & \textbf{Best} & \textbf{Last}
    & \textbf{Best} & \textbf{Last} \\
\midrule
\rowcolor{gray!25} \texttt{group-lv} &  {53.8} & \textbf{53.8} & \textbf{21.9}  & \textbf{21.9} & \textbf{18.8}  & \textbf{18.8} & \textbf{79.6} & \textbf{79.6}  & \textbf{43.5} & \textbf{43.5} \\
\midrule
token-lv  & 46.1 & 43.9 & 18.7 & 14.2 & 15.3 & 14.3 & 74.3 & 74.9 & 38.6 & 36.8\\
seq-lv   & \textbf{55.2} &24.1& 17.2&1.9& 18.1& 2.1 &77.3 &56.9 & 42.0 & 21.3\\
\midrule
\rowcolor{gray!25} \texttt{w/o adv norm } &  {53.8} & \textbf{53.8} & \textbf{21.9}  & \textbf{21.9} & \textbf{18.8}  & \textbf{18.8} & \textbf{79.6} & \textbf{79.6}  & \textbf{43.5} & \textbf{43.5} \\
\texttt{wt adv norm}  &\textbf{54.7} &49.7  &21.5 &21.5  &18.0 &17.6  &77.8 &77.4  &43.0 &41.5 \\
\bottomrule
\end{tabular}
\normalsize
\end{table}

\section{Network Latency}
We test the performance limits of GEPO and baseline methods in simulated network delay scenarios and train the model on heterogeneous computing resources connected through a real-world internet connection.

\subsection{Simulated Network Latency Configuration}  
Our goal is to simulate RL training of large models over internet-connected heterogeneous compute clusters, where network delays are inherently uncertain. To evaluate algorithmic performance and training stability under extreme and variable latency conditions, we employ three widely used delay distributions $P_d$: log-normal, Weibull, and exponential. We set a high maximum delay threshold of 1800 seconds, which is sufficient to cover typical model and data transmission times. In our simulation, model parameter synchronization and rollout data transfer are implemented as follows:
\begin{itemize}
\item \textbf{Learner saves model:} The learner periodically saves model checkpoints to a shared model file path \texttt{Model\_Sync\_Path} via \texttt{torch.save\_pretrained()}.

    \item \textbf{Sampler loads model:} The sampler generates data using the current model until a delay $D_M$, drawn from $P_d$, elapses. It then loads the latest model from \texttt{Model\_Sync\_Path}. The sampler remains active throughout—no idling occurs.

    \item \textbf{Sampler sends data:} Generated rollout batches are saved to \texttt{Rollout\_Sync\_Path} with timestamp $T_{sync}$. To simplify implementation, data transmission is assumed instantaneous; its latency is effectively merged into the model sync delay $D_M$, without affecting simulation validity.

    \item \textbf{Learner uses data:} The learner trains on rollout data from \texttt{Rollout\_Sync\_Path} that falls within a recent time window (e.g., no older than 1800 seconds).
\end{itemize}

\subsection{Real-World Network Scenarios}

To evaluate the performance of heterogeneous reinforcement learning algorithms in realistic network environments, we develop a communication toolkit based on ZeroMQ, supporting TCP/IP-based transmission of inference trajectories from samplers to learners and synchronized model parameter updates. 
As shown in Figure \ref{fig:communication}, the toolkit enables multi-node communication over wide-area networks (WANs), with the following core communication logic:

\begin{itemize}
    \item The learner continuously listens for and buffers inference trajectory messages from samplers. It automatically updates its trajectory buffer upon message arrival and broadcasts the latest model parameters to all connected samplers once a predefined synchronization interval is reached.
    \item After generating an inference trajectory, the sampler sends it to the learner and continuously listens for parameter update messages.  \textcolor{black}{Upon receiving new model parameters, the sampler updates its local parameters and resumes sampling before the next sampling step.}
\end{itemize}
\begin{figure}[t]
    \centering
    \includegraphics[width=1\linewidth]{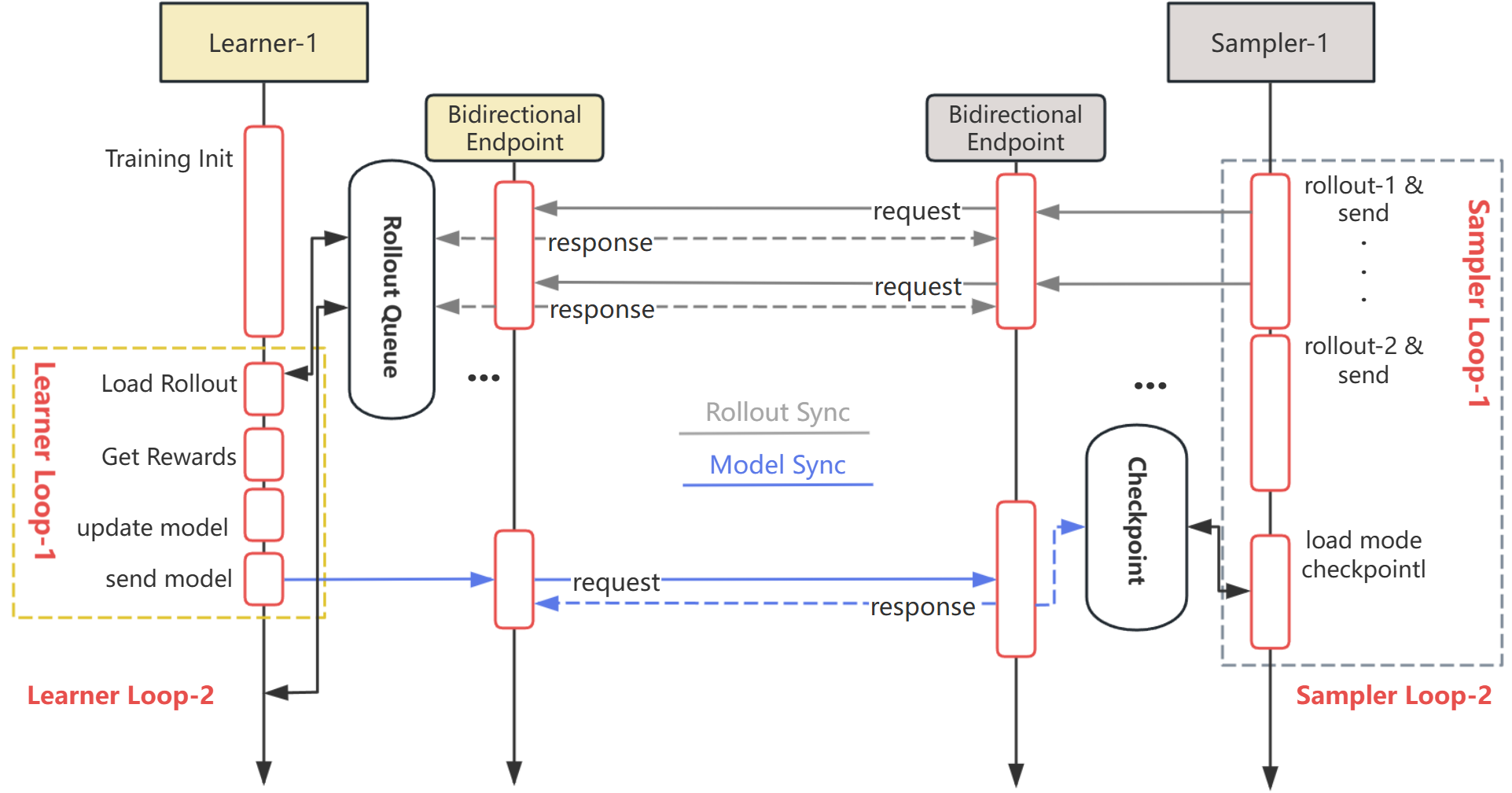}
    \caption{Communication flow between a learner and sampler in a distributed reinforcement learning system, showing trajectory collection and model synchronization over a bidirectional endpoint.}
    \label{fig:communication}
\end{figure}
% Key features of the communication toolkit include:

% \paragraph{1) Many-to-Many Communication Pattern}  
% The \textcolor{black}{toolkit} supports elastic node scaling with dynamically reconfigurable routing topologies, enabling seamless adaptation to node join/leave events. This facilitates efficient distributed parameter synchronization and trajectory collection in large-scale deployments.

% \paragraph{2) Chunked Message Transmission}  
% Inspired by Shardcast, the toolkit employs adaptive message chunking for both model parameters and trajectory data. The chunk size is dynamically adjusted according to real-time bandwidth conditions. At the receiver side, chunks are reassembled and integrity-verified, effectively mitigating the impact of high latency in wide-area networks.

% \paragraph{3) Communication Safety Mechanisms}  
% Thread-safe message passing across multiple processes is ensured via a double-buffering queue design and fine-grained locking. This prevents race conditions during concurrent read/write operations and guarantees reliable parameter synchronization.

% \paragraph{4) Configurable Routing Topology}  
% The toolkit supports user-defined connectivity patterns between learners and samplers, accommodating asymmetric node configurations (e.g., one learner connected to four samplers). This enhances communication \textcolor{black}{flexibility} and scalability in dynamic, heterogeneous environments.

\section{Engineering Optimization: Localized Reward Computation for Reduced Communication}
\label{subsec:local_reward_optimization}

In large-scale heterogeneous reinforcement learning (HeteroRL), where sampler and learner nodes are geographically distributed, network communication overhead becomes a critical bottleneck. A major source of this overhead lies in the \textit{reward aggregation phase}, where traditional implementations require an \texttt{all\_gather} operation across all processes to collect rewards for group-wise normalization (e.g., computing mean and standard deviation per group of generations). This global synchronization introduces significant latency, especially under high network delay or when using large numbers of distributed nodes.
\begin{figure}[t]
    \centering
    \includegraphics[width=1\linewidth]{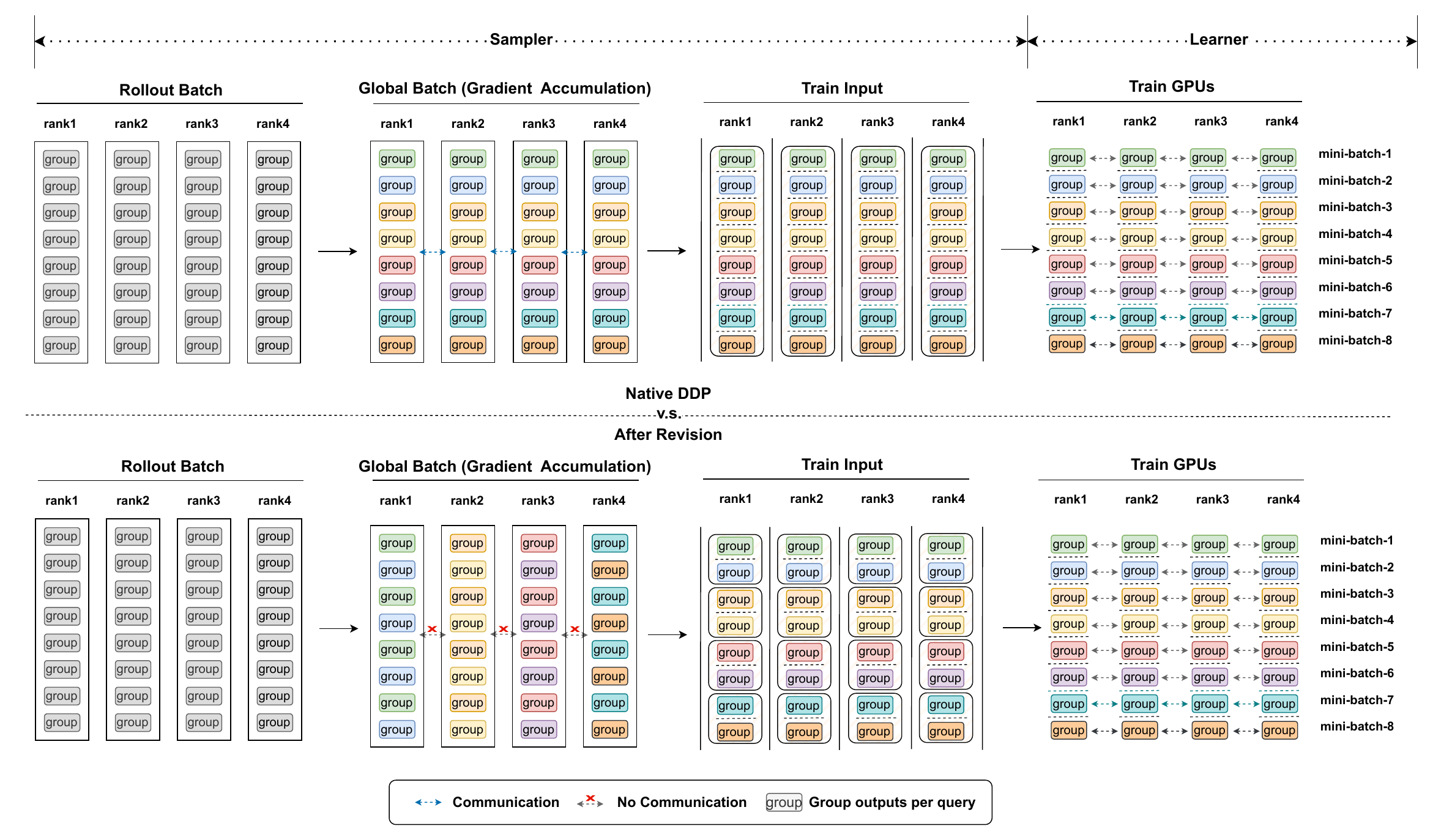}
    \caption{Native Distributed Data Parallel and Revision.}
    \label{fig:Revision}
\end{figure}
To address this, we introduce a key engineering optimization: \textbf{Localized Reward Computation}. Instead of gathering rewards from all processes, we ensure that each group of generations (e.g., $G$ responses per prompt) is entirely generated and scored \textit{within the same process or node}. This design guarantees that all samples belonging to a single group reside locally, enabling the calculation of group statistics (mean, std) without any cross-process communication.

This optimization is particularly synergistic with GEPO's design philosophy. GEPO mitigates training instability under high latency by reducing the variance of importance weights through group-level expectation. Our engineering improvement complements this by \textit{reducing the system's sensitivity to communication latency itself}, creating a virtuous cycle: the algorithm is robust to policy staleness, and the system is optimized to minimize the communication that causes staleness.

The core change is implemented by removing the global \texttt{gather} operation in the reward calculation function. 
This seemingly minor change yields substantial performance benefits. The following table summarizes the key improvements:

\begin{table}[htbp]
\centering
\caption{Comparison of Communication Overhead: Before vs. Optimized (Localized Reward)}
\label{tab:comm_overhead_comparison}
\begin{small}
\begin{tabularx}{\textwidth}{lXX}
\toprule
\textbf{Component} & \textbf{Before Optimization} & \textbf{After Optimization (Ours)} \\
\midrule
Reward Aggregation & Requires \texttt{all\_gather} for every batch & \textbf{No communication required} \\
Group Statistics & Computed globally across all GPUs & Computed \textbf{locally} per GPU \\
Communication Frequency & High (per batch) & \textbf{None} for reward calculation \\
Latency Sensitivity & High (blocked by slowest node) & \textbf{Low} (fully asynchronous) \\
Scalability & Limited by network bandwidth & \textbf{Highly scalable} with node count \\
\bottomrule
\end{tabularx}
\end{small}
\end{table}

By eliminating this frequent and costly synchronization point, our system achieves higher throughput and better resource utilization. This is especially crucial in the HeteroRL setting, where network conditions are unpredictable. The optimization allows sampler nodes to operate more independently, reducing their idle time waiting for global synchronization and making the overall training pipeline more resilient to network fluctuations. This engineering refinement, combined with the GEPO algorithm, provides a holistic solution for stable and efficient training in truly heterogeneous, high-latency environments.

\section{The Comparison of Different Reinforcement Learning Paradigms }

\subsection{Core Challenge: Off-Policy Learning and Importance Sampling}

This formulation highlights the central challenge in asynchronous RL: the mismatch between the behavior policy ($\pi_{\theta_k}$) and the target policy ($\pi_{\theta_{k+\tau}}$), which grows with $\tau$ and introduces bias and variance into the learning process. Addressing this mismatch under high and uncertain latency is the primary focus of our work.

\begin{table}[H]
% \vspace{-0.5cm}
\centering
\caption{Core Characteristics of Online, Offline, and Heterogeneous Reinforcement Learning Paradigms}
\label{tab:rl_comparison}
\begin{small}
\renewcommand{\arraystretch}{1.3} % Increase row spacing for better readability
\begin{tabular}{>{\raggedright\arraybackslash}p{2.5cm}>{\raggedright\arraybackslash}p{3.5cm}>{\raggedright\arraybackslash}p{3.0cm}>{\raggedright\arraybackslash}p{4cm}}
\toprule
\textbf{Aspect} & \textbf{Online RL} & \textbf{Offline RL} & \textbf{Heterogeneous RL} \\
\midrule
\textbf{Data Generation} &
Online interaction: Data generated instantly by the \textbf{current policy}. &
Fixed, pre-collected dataset: Uses \textbf{static data} collected by some (unknown) behavior policy. &
Delayed interaction: Data generated by \textbf{historical policy versions} (due to \textbf{unpredictable network delay}). \\
\midrule
\textbf{Policy Version during Data Generation} &
\textbf{Always current}: Requires strict synchronization with the learning policy. &
\textbf{Fixed}: The behavior policy is fixed and inherent to the dataset. &
\textbf{Dynamically stale}: Staleness determined by \textbf{unpredictable network delay} \textit{(core characteristic)}. \\
\midrule
\textbf{System Architecture} &
Tightly-coupled: \textbf{Actor (environment interaction)} and \textbf{Learner (parameter update)} typically co-located or on a low-latency network. &
Single-machine or simple distributed: \textbf{No real-time interaction} required; training is data-driven. &
Geographically decoupled: \textbf{Actor and Learner separated} by a \textbf{high-latency network}, tolerant to delays. \\
\midrule
\textbf{Handling of System Latency} &
Treated as failure: Requires synchronization; latency causes \textbf{resource idling} and training stalls. &
Not applicable: Training process has \textbf{no real-time interaction}. &
\textbf{Algorithmic compensation}: Employs \textbf{corrective techniques} (e.g., importance sampling) to mitigate latency effects. \\
\midrule
\textbf{Core Challenge} &
\textbf{Exploration-exploitation} trade-off during learning. *Resource utilization* under sequential tasks. &
\textbf{Distributional shift} and \textbf{limited data coverage}. &
\textbf{Algorithmic stability under dynamic delays}: High/variable latency causes \textbf{importance weight variance explosion}. \\
\bottomrule
\end{tabular}
\end{small}
% \vspace{0.2cm}
% \parbox{\linewidth}{
% \footnotesize
% \textit{Note:} *High variance in importance weights is a critical challenge identified in Heterogeneous RL due to dynamic delays.
% }
% \vspace{-1cm}
\end{table}

The HeteroRL framework decouples rollout sampling from parameter learning, leading to a setting where the learner updates the new policy $\pi_{\theta_{k+\tau}}$ using data generated by an older policy $\pi_{\theta_k}$—a canonical off-policy learning scenario. To correct for the distributional shift between the behavior and target policies, importance sampling (IS) is commonly employed. 
% The standard importance weight $w_{\text{std}}$ is defined as $w_{\text{std}}(y) = \frac{\pi_{\theta_{k+\tau}}(y|x)}{\pi_{\theta_k}(y|x)}$. 
Under the GRPO framework, this weight is used to scale the advantage during policy gradient updates. However, when the delay $\tau$ is large, causing significant divergence between the policies (i.e., high KL divergence $D_{\text{KL}}(\pi_{\theta_{k+\tau}} \| \pi_{\theta_k})$), we observe in practice that the variance of the importance weights increases rapidly with $\tau$, and the estimation error of the expected reward also grows. As a result, the variance of $w_{\text{std}}$ explodes, introducing high noise into gradient estimates and ultimately leading to training instability or collapse. The central challenge of this work is thus to design a robust algorithm that mitigates the training instability caused by the explosion of importance sampling variance under high network latency.
\section{Future Work}
\subsection{Defensive Sampling and Smooth Denominator Mechanism}
\label{subsec:defensive_sampling}

As a promising direction for future work, we propose to explore a defensive sampling strategy that adaptively blends the target policy probability into the importance weight denominator. This approach aims to mitigate bias introduced by approximating the denominator while improving training stability—particularly in asynchronous or heterogeneous learning settings. By introducing a variance-aware smoothing coefficient, the method could dynamically interpolate between standard importance weighting and the policy gradient objective, thereby reducing sensitivity to high-variance estimates. We hypothesize that such a smooth, adaptive weighting scheme would lead to more robust and stable policy updates, and plan to investigate its theoretical properties and empirical effectiveness in future studies.

\section{Case Study}

%%%%%%%%%%%%%%%%%%%%%%%%%%%%%%%%%
% \vspace{-2\baselineskip}
\begin{AIbox}[breakable]{A case of MATH level 3-5: GRPO v/s GSPO v/s GEPO}
\vspace{5mm}
\footnotesize

% -- "User" / <input> BOX (on top) --
\begin{tcolorbox}[
    colback=user!20,         % e.g. #FF8282 if you had \definecolor{user}{HTML}{FF8282}
    colframe=user!40!black,
    boxrule=0.5pt,
    arc=2pt,
    left=5pt,right=5pt,top=3pt,bottom=3pt,
    title={\bf User:}
]
Find the greatest integer less than $(\sqrt{7} + \sqrt{5})^6.$  (Do not use a calculator!).
\end{tcolorbox}

% -- GRPO --
\begin{tcolorbox}[
    colback=grpo!50,        % background
    colframe=grpo!40!black, % border
    boxrule=0.5pt,
    arc=2pt,
    left=5pt,right=5pt,top=3pt,bottom=3pt,
    title=GRPO
]
\texttt{<think>}
Okay, so I need to find the greatest integer less than $(\sqrt(7) + \sqrt(5))^6$. Hmm, that seems pretty complicated at first glance. Let me think about how to approach this.\\
% I remember that sometimes when you have expressions like $(a + b)^n$, especially with square roots, there's a way to find a pattern or use binomial expansion. But expanding $(\sqrt(7) + \sqrt(5))^6$ directly would be really tedious. There must be a smarter way.
Wait, maybe ...(\faHandPointLeft\ \textbf{\textcolor{red}{Omitted numerous ineffective thoughts and reflections}})
% Wait, maybe I can consider the conjugate? Like, if I take $(\sqrt(7) + \sqrt(5))^6$ and then consider its conjugate, which would be $(\sqrt(7) - \sqrt(5))^6$. If I add them together, maybe the irrational parts cancel out? Let me check that.

% Let me denote $x = \sqrt(7) + \sqrt(5)$ and $y = \sqrt(7) - \sqrt(5)$. Then, $x + y = 2\sqrt(7)$, and $x - y = 2\sqrt(5)$. But how does that help me? Maybe if I compute $x^6 + y^6$?

% Alternatively, maybe I can find a recurrence relation or some relationship between $x^n$ and $y^n$. Let me think. Since x and y are conjugates, perhaps they satisfy some equation. Let me compute $x + y$ and $xy$.

% First, $x + y = 2*\sqrt(7)$. Let me compute $x * y$. That would be $(\sqrt(7) + \sqrt(5))(\sqrt(7) - \sqrt(5)) = 7 - 5 = 2$. So, $xy = 2$. Interesting.

% So, if I can find a recurrence relation for $x^n + y^n$, maybe I can compute $x^6 + y^6$ and then figure out the integer part of $x^6$.

% Let me recall that for such expressions, there's a recurrence relation. Let me denote $S_n = x^n + y^n$. Then, since $x$ and $y$ satisfy the equation $t^2 - (x + y)t + xy = 0$, which is $t^2 - 2\sqrt(7) t + 2 = 0$. But since we have $\sqrt(7)$ in there, maybe it's messy. However, maybe if I square both sides?
% ...\\
Alternatively, maybe I can find a recurrence relation for $S_n$. Let me try to find a recurrence. Let me note that $x$ and $y$ are roots of the equation $t^2 - 2\sqrt{7}\, t + 2 = 0$. Therefore, for any $n \geq 2$, we have 
\[
x^n = 2\sqrt{7}\, x^{n-1} - 2 x^{n-2}.
\]
Similarly for $y^n$. Therefore,
\[
S_n = x^n + y^n = 2\sqrt{7} (x^{n-1} + y^{n-1}) - 2(x^{n-2} + y^{n-2}) = 2\sqrt{7}\, S_{n-1} - 2 S_{n-2}.
\]
(\faHandPointUp\ \textbf{\textcolor{red}{This is the key step to solving this problem}}) ...\\
But this seems to involve $\sqrt(7)$, which complicates things. However, if I consider $S_n$ + something, maybe I can eliminate the $\sqrt(7)$ term? Wait, but maybe I need another approach.

Alternatively, maybe I can compute $S_n$ for $n=0,1,2,...$ and find a pattern. Let me try that.

First, let's compute $S_0 = x^0 + y^0 = 1 + 1 = 2$.
...(\faHandPointLeft\ \textbf{\textcolor{red}{Omitted numerous ineffective thoughts and reflections}})\\
Wait, but $S_6$ is $x^6 + y^6$. However, since $x = \sqrt(7) + \sqrt(5)$ and y = $\sqrt(7) - \sqrt(5)$, and since $x > 0$ and $y$ is positive? Wait, $\sqrt(7)$ is about $2.645$, $\sqrt(5)$ is about $2.236$. So $\sqrt(7) + \sqrt(5)$ is about 4.881, and $\sqrt(7) - \sqrt(5)$ is about 0.409. So $y$ is positive, but small. Therefore, $x^6$ is a large number, and $y^6$ is a small number. Therefore, $S_6 = x^6 + y^6$ is approximately $x^6$, and the integer part of $x^6$ would be $S_6 - 1$, since $y^6$ is positive but less than $1$. Therefore, the greatest integer less than $x^6$ is $S_6 - 1$.

But wait, let me check if $y^6$ is less than 1. Since $y = \sqrt(7) - \sqrt(5) \approx 0.409$, then $y^6$ is $(0.409)^6$. Let me compute that:
$0.409^2 \approx 0.167$, then $0.167^3 \approx 0.004$ \\
(\faHandPointUp\ \textbf{\textcolor{red}{Over length here, although close to arriving at the correct answer}})\\
\end{tcolorbox}
% -- GSPO --
\begin{tcolorbox}[
    colback=comparison!50,        % 使用你已定义的 comparison 蓝色（#B3D9FF）的 50% 浓度
    colframe=comparison!70!black,  % 边框：蓝色混合 70% + 黑色 30%，更稳重
    boxrule=0.5pt,
    arc=2pt,
    left=5pt,right=5pt,top=3pt,bottom=3pt,
    title=GSPO
]
\texttt{<think>}
Okay, so I need to find the greatest integer less than $(\sqrt(7) + \sqrt(5))^6$. Hmm, that seems a bit intimidating at first, but I'll try to break it down step by step.
First, I remember that expressions like $(a + b)^n$ often have a pattern when expanded, especially when a and b are irrational numbers. Since the question is about the greatest integer less than this value, I might need to find an expression that's just below the actual value.

Let me think. If I consider $(\sqrt(7) + \sqrt(5))^6$ and $(\sqrt(7) - \sqrt(5))^6$, their sum might be an integer because the irrational parts cancel out. That's a common trick with conjugates. So, the sum of these two would be $(sqrt(7) + sqrt(5))^6 + (sqrt(7) - sqrt(5))^6$, which is an integer. Therefore, the integer part of the original expression would be that integer minus the $(sqrt(7) - sqrt(5))^6$, which is a small positive number.

So, the answer would be boxed as the boxed integer boxed{boxed{boxed{boxed{\text{boxed}}}}}. Wait, no, the boxed answer should be the boxed boxed answer boxed. So, boxed{\boxed{\text{boxed boxed}}} but boxed in the final answer boxed{}.

Therefore, the boxed answer is the boxed boxed boxed boxed boxed boxed boxed answer boxed. So boxed boxed boxed boxed boxed boxed boxed boxed boxed boxed boxed boxed...\\
(\faHandPointUp\ \textbf{\textcolor{red}{Numerous repetitions occur here.}})
\end{tcolorbox}
% -- GEPO --
\begin{tcolorbox}[
    colback=model!50,        % e.g. #BEE4D0 if you had \definecolor{model}{HTML}{BEE4D0}
    colframe=model!40!black,
    boxrule=0.5pt,
    arc=2pt,
    left=5pt,right=5pt,top=3pt,bottom=3pt,
    title=GEPO
]
\texttt{<think>}
Okay, so I need to find the greatest integer less than $(\sqrt(7) + \sqrt(5))^6$. Hmm, that seems pretty big. I don't have a calculator, so I have to figure this out without calculating the actual value. Let me think.\\
...\\
% First, I remember that sometimes when you have expressions like $(a + b)^n$, you can use binomial expansion. But expanding $(\sqrt(7) + \sqrt(5))^6$ directly would be messy. Maybe there's a smarter way.

% Wait, I recall that for expressions like $(\sqrt(a) + \sqrt(b))^n + (\sqrt(a) - \sqrt(b))^n$, the terms with odd powers of $\sqrt(a)$ and $\sqrt(b)$ would cancel out, and the even ones would add up. Maybe I can use that here. Let me try.

% Let me denote $x = \sqrt(7) + \sqrt(5)$ and $y = \sqrt(7) - \sqrt(5)$. Then, $x + y = 2\sqrt(7)$, and $xy = (\sqrt(7))^2 - (\sqrt(5))^2 = 7 - 5 = 2$. So $x*y = 2$.

% Now, if I compute $x^6 + y^6$, that would be an integer because all the irrational parts would cancel out. So maybe I can compute $x^6 + y^6$ and then find the floor of $x^6$, which is the greatest integer less than $x^6$.

% So first, I need to compute $x^6 + y^6$. Let me find a recurrence relation for $x^n + y^n$.

% I know that for such expressions, we can use the recurrence relation. Let me denote $S_n = x^n + y^n$. Then $S_0 = 2$, $S_1 = x + y = 2*\sqrt(7)$, and so on.

But maybe I can find a recurrence formula. Let's see:

We know that $x$ and $y$ are roots of the quadratic equation $t^2 - (x + y)t + xy = 0$. So $t^2 - 2\sqrt(7) t + 2 = 0$. Therefore, the recurrence relation for $S_n$ is $S_n = 2\sqrt(7)S_{n-1} - 2*S_{n-2}$.
(\faHandPointUp\ \textbf{\textcolor{red}{This is the key step to solving this problem}}) ...\\
Therefore, the answer is \boxed{13535}.\\
\texttt{</think>}

To find the greatest integer less than $(\sqrt{7} + \sqrt{5})^6$, we use the identity that for any expression of the form $(\sqrt{a} + \sqrt{b})^n + (\sqrt{a} - \sqrt{b})^n$, the result is an integer. This is because the irrational parts cancel out when the two expressions are added together.

Let:
- $x = \sqrt{7} + \sqrt{5}$
- $y = \sqrt{7} - \sqrt{5}$

We compute:
$$
x^6 + y^6
$$

First, calculate $x^2$:
$$
x^2 = (\sqrt{7} + \sqrt{5})^2 = 7 + 2\sqrt{35} + 5 = 12 + 2\sqrt{35}
$$

Then $x^4 = (x^2)^2 = (12 + 2\sqrt{35})^2 = 144 + 48\sqrt{35} + 4 \cdot 35 = 284 + 48\sqrt{35}$

Next, $x^6 = x^4 \cdot x^2 = (284 + 48\sqrt{35})(12 + 2\sqrt{35})$. Expanding this:
$$
284 \cdot 12 + 284 \cdot 2\sqrt{35} + 48\sqrt{35} \cdot 12 + 48\sqrt{35} \cdot 2\sqrt{35}
$$
$$
= 3408 + 568\sqrt{35} + 576\sqrt{35} + 96 \cdot 35 = 3408 + 1144\sqrt{35} + 3360
$$
$$
= 6768 + 1144\sqrt{35}
$$

Similarly, $y^6 = 6768 - 1144\sqrt{35}$

Adding $x^6$ and $y^6$:
$$
x^6 + y^6 = 2 \cdot 6768 = 13536
$$

Since $x^6 = 13536 - y^6$, and $y^6$ is a small positive number (as $\sqrt{7} - \sqrt{5} \approx 0.4097$), we have:
$$
x^6 \approx 13536 - 0.004 \approx 13535.996
$$

Thus, the greatest integer less than $x^6$ is:
$$
\boxed{13535}
$$
\end{tcolorbox}
\begin{tcolorbox}[
    colback=comparison!20,         % e.g. #FF8282 if you had \definecolor{user}{HTML}{FF8282}
    colframe=comparison!40!black,
    boxrule=0.5pt,
    arc=2pt,
    left=5pt,right=5pt,top=3pt,bottom=3pt,
    title={\bf Comparison:}
]
GRPO exhibits a large number of ineffective reflections, wasting many valuable tokens; statistics show that GRPO uses words such as "wait" and "however" more than 22 times.
Due to reward collapse, GSPO begins generating extensive repetitions after brief reasoning, rendering the response unreadable.
GEPO arrives at the correct answer during the thinking phase through sound reasoning and reflection. When formally responding, its thought process is clear and steps are concise.
\end{tcolorbox}

\end{AIbox}

\section{Large Language Model Usage}
The use of LLMs in this article is limited to text polishing and code generation for plotting.

\end{document}